\documentclass{article} 
\usepackage{iclr2026_conference,times}

\usepackage{amsmath,amsfonts,bm}

\def\eqref#1{equation~\ref{#1}}

\def\1{\bm{1}}

\DeclareMathAlphabet{\mathsfit}{\encodingdefault}{\sfdefault}{m}{sl}
\SetMathAlphabet{\mathsfit}{bold}{\encodingdefault}{\sfdefault}{bx}{n}

\usepackage{hyperref}
\usepackage{url}
\usepackage{enumitem}

\newif\ifshowtmp
\showtmptrue
\showtmpfalse

\title{Efficient Morphology-Control Co-Design via Stackelberg Proximal Policy Optimization}

\author{
    Yanning Dai$^{1} \footnotemark[1] $ \And
    Yuhui Wang$^{1}$\thanks{Equal contribution.}\hspace{0.5em}\thanks{Corresponding author.} \And
    Dylan R.~Ashley$^{1,2,3,4}$ \And
    J{\"{u}}rgen Schmidhuber$^{1,2,3,4}$ \AND
    {
    \normalfont
    \begin{tabular}{@{}l@{}}
        ${}^{1}$~Center of Excellence for Generative AI, King Abdullah University of Science and \\
        \hspace{.5em} Technology (KAUST), Thuwal, Saudi Arabia.
    \\
        $^2$~Dalle Molle Institute for Artificial Intelligence Research (IDSIA), Lugano, Switzerland.\\
        $^3$~Universit{\`{a}} della Svizzera italiana (USI), Lugano, Switzerland.\\ 
        $^4$~Scuola universitaria professionale della Svizzera italiana (SUPSI), Lugano, Switzerland.
    \end{tabular}
    }\vspace{.5em}
}

\ifshowtmp
    \newcommand{\todom}[1]{{\textcolor{usccardinal}{(todo minor: #1)}}}
	\newcommand{\todo}[1]{{\textcolor{red}{(TODO: #1)}}}

    \newcommand{\tocheck}[1]{{\textcolor{red}{(TO CHECK: #1)}}}

	\newcommand{\T}[1]{\textcolor{usccardinal}{#1}}
	\newcommand{\newn}[1]{{\color{blue}{#1}}}

    \newcommand{\instruct}[1]{{\color{blue}{}}}

    \makeatletter
    \def\w#1 {\T{#1}~}
    \makeatother

\else
    \newcommand{\todom}[1]{}
    \newcommand{\tocheck}[1]{}
	
	\newcommand{\T}{}
	\newcommand{\todo}[1]{}
    \newcommand{\newn}[1]{#1}

    \newcommand{\instruct}[1]{}

    \newcommand{\w}{}
\fi

\newcommand{\instructC}[1]{}
\newcommand{\todoLessImportant}[1]{}
\newcommand{\removeForSavingSpace}[1]{}

\newif\ifPPT
\PPTtrue

\newif\ifshowstruct
\showstructtrue
\showstructfalse 

\ifshowstruct
	\newcommand{\tstructz}[1]{[#1] }
\else
	\newcommand{\tstructz}[1]{}
\fi

\label{packages}
\usepackage{mfirstuc}
\usepackage{graphicx}
\usepackage{subcaption}
\usepackage{caption}

\usepackage{multirow}
\usepackage{multicol}
\usepackage{array}
\usepackage{booktabs}

\usepackage{amsmath}
\usepackage{amssymb, graphicx, url, mathtools}
\usepackage[overload]{empheq}

\usepackage{amsthm}

\usepackage{algorithmic,algorithm}   

\usepackage{footnote}
\makesavenoteenv{tabular}
\makesavenoteenv{table}
\usepackage{pdfpages}
\usepackage{textcomp}
\usepackage{chngcntr}
\usepackage{verbatim}

\usepackage{xparse,calc}
\usepackage{afterpage}
\usepackage{comment}

\usepackage{cleveref}

\label{Command}

\counterwithin*{question}{section}

\makeatletter
\def\renewtheorem#1{%
  \expandafter\let\csname#1\endcsname\relax
  \expandafter\let\csname c@#1\endcsname\relax
  \gdef\renewtheorem@envname{#1}
  \renewtheorem@secpar
}
\def\renewtheorem@secpar{\@ifnextchar[{\renewtheorem@numberedlike}{\renewtheorem@nonumberedlike}}
\def\renewtheorem@numberedlike[#1]#2{\newtheorem{\renewtheorem@envname}[#1]{#2}}
\def\renewtheorem@nonumberedlike#1{  
\def\renewtheorem@caption{#1}
\edef\renewtheorem@nowithin{\noexpand\newtheorem{\renewtheorem@envname}{\renewtheorem@caption}}
\renewtheorem@thirdpar
}
\def\renewtheorem@thirdpar{\@ifnextchar[{\renewtheorem@within}{\renewtheorem@nowithin}}
\def\renewtheorem@within[#1]{\renewtheorem@nowithin[#1]}
\makeatother

\renewtheorem{theorem}{Theorem}
\renewtheorem{remark}{Remark}
\renewtheorem{definition}{Definition}
\renewtheorem{example}{Example}

\crefname{mdpexample}{MDP Example}{MDP Examples}

\crefname{todo}{TODO}{TODO}

\makeatletter
\newcommand{\dummylabel}[2]{\def\@currentlabel{#2}\label[td]{#1}}
\makeatother

\usepackage{thmtools} 
\usepackage{thm-restate}

\newcommand{\eq}{eq.~}

\newcommand{\eqrefnop}[1]{\eq \ref{#1}}

\usepackage{xstring}

\newcolumntype{+}{>{\global\let\currentrowstyle\relax}}
\newcolumntype{Y}{>{\currentrowstyle}}\label{key}
\newcolumntype{-}{>{\currentrowstyle}}

\newcommand{\breakingcomma}{
  \begingroup\lccode`~=`,
  \lowercase{\endgroup\expandafter\def\expandafter~\expandafter{~\penalty0 }}
}

\DeclareFontFamily{U}{mathx}{\hyphenchar\font45}
\DeclareFontShape{U}{mathx}{m}{n}{<-> mathx10}{}
\DeclareSymbolFont{mathx}{U}{mathx}{m}{n}
\DeclareMathAccent{\widebar}{0}{mathx}{"73}

\label{Settings}

\usepackage{makecell}

\crefname{equation}{eq.}{eqs.}
\Crefname{equation}{Eq.}{Eqs.}
\Crefname{Section}{Sec.}{Sec.}

\newcommand{\crefnop}[1]{\namecref{#1} \ref{#1}}

\newcommand{\reftodo}[1]{\todo{cref}}

\newcommand{\Creftodo}[1]{\todo{cref}}
\newcommand{\creftodo}[1]{\todo{cref}}
\newcommand{\citetodo}[1]{\todo{cite}}

\newcommand{\Creftd}[1]{\todo{cref}}
\newcommand{\creftd}[1]{\todo{cref}}
\newcommand{\citetd}[1]{\todo{cite}}

\usepackage[export]{adjustbox}

\usepackage{xcolor}
\definecolor{bluemy}{HTML}{1f77b4}

\usepackage{enumitem}

    \usepackage{xr}
    \makeatletter
    \newcommand*{\addFileDependency}[1]{
      \typeout{(#1)}
      \@addtofilelist{#1}
      \IfFileExists{#1}{}{\typeout{No file #1.}}
    }
    \makeatother

\newtheorem{thm}{Theorem}

\newtheorem{proposition}{Proposition}

\NewDocumentCommand{\indicatorFunction}{ O{} }{ {\mathbbm{1} }_{\{#1\}} }

\makeatletter
\def\widebreve{\mathpalette\wide@breve}
\def\wide@breve#1#2{\sbox\z@{$#1#2$}%
     \mathop{\vbox{\m@th\ialign{##\crcr
\kern0.08em\brevefill#1{0.8\wd\z@}\crcr\noalign{\nointerlineskip}%
                    $\hss#1#2\hss$\crcr}}}\limits}
\def\brevefill#1#2{$\m@th\sbox\tw@{$#1($}%
  \hss\resizebox{#2}{\wd\tw@}{\rotatebox[origin=c]{90}{\upshape(}}\hss$}
\makeatletter

\usepackage{bbm}

\usepackage{accents}

\usepackage[makeroom]{cancel}

\usepackage{pifont}

\definecolor{blueMy}{HTML}{1f77b4}
\definecolor{orangeMy}{HTML}{ff7f0e}
\definecolor{greenMy}{HTML}{2ca02c}
\definecolor{redMy}{HTML}{d62728}
\definecolor{purpleMy}{HTML}{9467bd}
\definecolor{brownMy}{HTML}{8c564b}
\definecolor{usccardinal}{rgb}{0.6, 0.0, 0.0}

\usepackage{makecell}

\ifshowtmp
    \newcommand{\yanning}[1]{\textcolor{blue}{(\textbf{Yanning}: #1)}}
    \newcommand{\yuhui}[1]{\textcolor{teal}{(\textbf{Yuhui}: #1)}}
    \newcommand{\dylan}[1]{\textcolor{orange}{(\textbf{Dylan:} #1)}}
    \newcommand{\juergen}[1]{\textcolor{blue}{(\textbf{Juergen}: #1)}}
\else
    \newcommand{\yanning}[1]{}
    \newcommand{\yuhui}[1]{}
    \newcommand{\dylan}[1]{}
    \newcommand{\juergen}[1]{}    
\fi

\NewDocumentCommand{\crossDerivative}{}{%
  \textcolor[rgb]{0,0.39,0}{\nabla_{\theta^{L}\theta^{F}} J^{F}(\theta^{L}, \theta^{F})}
}

\definecolor{codepurple}{rgb}{0.58,0,0.82}
\definecolor{royalblue(web)}{rgb}{0.25, 0.41, 0.88}
\definecolor{darkmagenta}{rgb}{0.55, 0.0, 0.55}
\definecolor{citegray}{gray}{0.3} 
\hypersetup{
  breaklinks,
  colorlinks,
  linkcolor=black,
  citecolor=royalblue(web),
  urlcolor=darkmagenta,
}

\iclrfinalcopy 

\begin{document}

\maketitle

\begin{abstract}

Morphology-control co-design concerns the coupled optimization of an agent’s body structure and control policy. This problem exhibits a bi-level structure, where the control dynamically adapts to the morphology to maximize performance.
Existing methods typically neglect the control’s adaptation dynamics by adopting a single-level formulation that treats the control policy as fixed when optimizing morphology.
This can lead to inefficient optimization, as morphology updates may be misaligned with control adaptation.
In this paper, we revisit the co-design problem from a game-theoretic perspective,  modeling the intrinsic coupling between morphology and control as a novel variant of a Stackelberg game.
We propose \emph{Stackelberg Proximal Policy Optimization (Stackelberg PPO)}, which explicitly incorporates the control’s adaptation dynamics into morphology optimization.
By modeling this intrinsic coupling, our method aligns morphology updates with control adaptation, thereby stabilizing training and improving learning efficiency.
Experiments across diverse co-design tasks demonstrate that Stackelberg PPO outperforms standard PPO in both stability and final performance, opening the way for dramatically more efficient robotics designs.
\end{abstract}

\vspace{-8pt}
\begin{figure}[h]
    \centering
    \includegraphics[width=1.\linewidth]{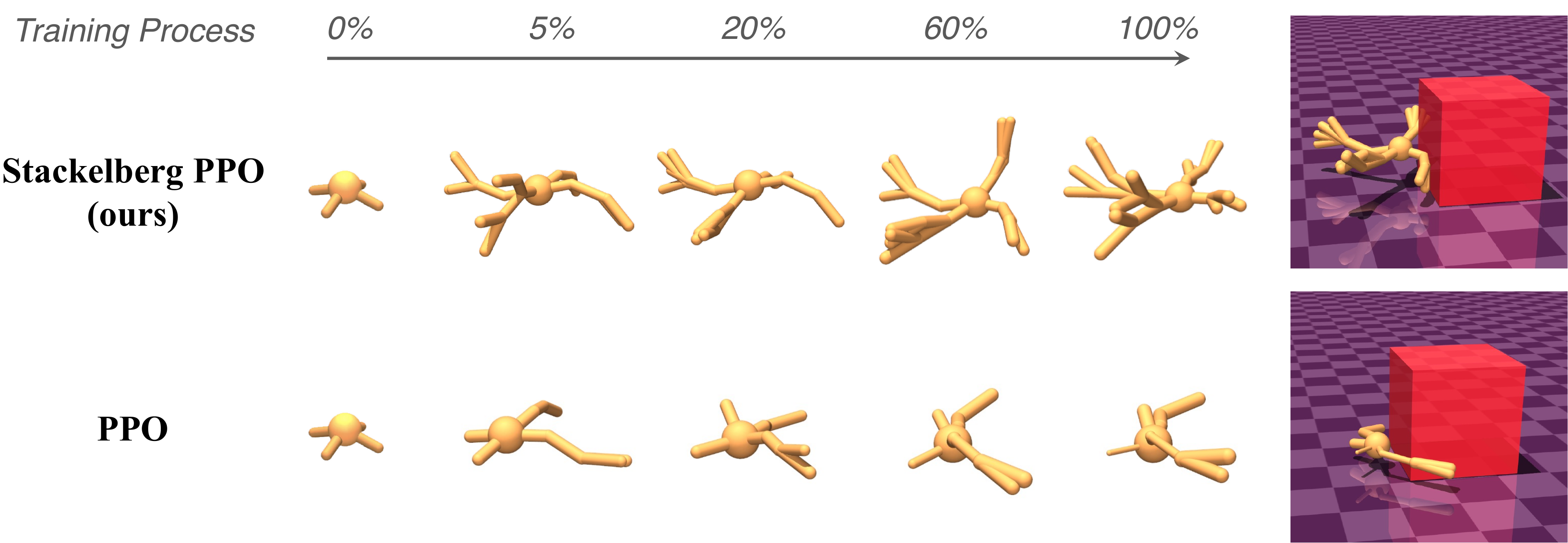}
    \caption{
Showcasing how our Stackelberg PPO can autonomously design task-specific robots for the ``Pusher'' task, starting with a bare-bones structure and ultimately evolving into a sophisticated design with arm-like structures for pushing boxes and leg-like limbs for movement.
This highlights the method’s ability to create adaptive and complex designs.
In comparison, the traditional PPO method generates simpler structures that can’t support more complex behaviors. For more examples of evolved designs and animations, as well as open-source code, visit:
\href{https://yanningdai.github.io/stackelberg-ppo-co-design}{https://yanningdai.github.io/stackelberg-ppo-co-design}.
    }
    \label{abs_figure}
\end{figure}
\vspace{-6pt}

\section{Introduction}

Morphology-control co-design optimizes both an agent’s body structure and its control policy. The morphology defines the agent’s design, including topology, geometry, joint layout, and actuation limits, while the control policy dictates how this structure produces behavior to interact with the environment~\citep{paul2006morphological, ha2018computational}.
 Both aspects are critical for task performance. For instance, a quadruped robot with rigid legs can’t walk without an appropriate gait policy, and a locomotion policy is ineffective if the robot’s morphology lacks the necessary joints to support movement. 
 These examples show that morphology and control must be co-designed to ensure complementarity. Agents optimized under this paradigm are typically more versatile, robust, and efficient than those optimized for only one aspect~\citep{Sims1994, Lipson2000, bongard2006resilient, kriegman2020scalable}.

A key challenge in morphology-control co-design is the dynamic interplay between morphology and control: the control policy must adapt to the evolving morphology to fully realize its potential; without this, the morphology’s true performance may be underestimated~\citep{schaff2022n}.
Existing methods often treat morphology and control as separate optimization processes~\citep{cheney2018scalable, lu2025bodygen, yuan2022transform2act}. Specifically, morphology updates are typically made assuming a fixed control policy, neglecting that control must adapt to structural changes. As a result, morphology updates become misaligned with the optimal control response, leading to unstable and inefficient optimization in the morphology space and, ultimately, degraded performance.

In this paper, we revisit the co-design problem from a game-theoretic perspective, formulating it as phase-separated Stackelberg Markov Games (SMGs), where the leader first updates the morphology, and the follower adapts its control policy accordingly. The key insight is that the leader must anticipate how morphology changes will influence the follower's control dynamics, enabling more effective designs. This perspective leads to the development of Stackelberg Implicit Differentiation (SID), a technique that incorporates the follower’s adaptive dynamics into morphology optimization. However, applying SID to morphology–control co-design is non-trivial due to the phase-separated nature of the interaction and the non-differentiable interface between the leader and follower.

To address these challenges, we derive Stackelberg policy gradients tailored to phase-separated SMGs with non-differentiable interfaces, building on SID to explicitly account for the follower’s adaptation in the morphology optimization process. Since direct differentiation is obstructed by the non-differentiable leader–follower interface, we apply the log-derivative technique \citep{williams1992simple} to derive a new Stackelberg surrogate formulation that bypasses this issue and provides a tractable gradient estimator. We further provide theoretical guarantees, showing that these surrogates are locally equivalent to the true Stackelberg gradients. To stabilize training under large policy shifts, we adapt PPO’s likelihood-ratio clipping to our Stackelberg framework, ensuring robust optimization of the surrogate gradients. Our method, \emph{Stackelberg PPO}, outperforms state-of-the-art baselines by 20.66\% on average and by 32.02\% on complex 3D tasks.

\section{Related Work}

\paragraph{Morphology–Control Co-design}
Co-optimizing morphology and control is attracting increasing attention in embodied intelligence \citep{li2024reinforcement,huang2024dittogym,liu2025embodied}.
{Prior work optimizes only continuous attributes under a fixed topology, without generating new structural topologies \citep{banarse2019body,10.24963/ijcai.2024/10}.}
Early topology-editing work treated co-design as a discrete, non-differentiable search solved using evolutionary strategies \citep{Sims1994,cheney2018scalable}, requiring each morphology to be paired with a separately trained controller and thus incurring high computational cost.
Subsequent methods introduced structural priors and parameter sharing to reuse experience across designs \citep{dong2023symmetry,zhao2020robogrammar,wang2019nge,xiong2023universal}. More recent RL-based approaches cast structure generation as sequential edits in an MDP \citep{gupta2021embodied,yuan2022transform2act}, with graph and attention architectures improving representation quality \citep{chen2024subequivariant,lu2025bodygen,yuan2022transform2act}. 
{However, the discrete nature of morphology-editing operations blocks gradient propagation across the morphology-control interface, preventing efficient learning by capturing their coupled dynamics.}
Our work establishes a gradient-driven pathway that allows controller adaptation to directly affect morphology updates.

\paragraph{Stackelberg Game }

Learning systems with asymmetric components often exhibit directional dependencies, where one module's decisions influence another's adaptation in a non-reciprocal manner \citep{schmidhuber2015learning}. Stackelberg games formalize this asymmetry, with a leader committing to strategies that followers then best-respond to. Classical approaches study static normal-form games \citep{bacsar1998dynamic,conitzer2006computing,von2010leadership} while more recent extensions integrate this structure into sequential decision processes and RL frameworks\citep{gerstgrasser2023oracles,zhong2023can,bai2021sample}, often incorporating confidence-aware or optimistic mechanisms to manage follower uncertainty \citep{ling2023function,kao2022decentralized,kar2015game,mishra2020model}.
Another line of research applies implicit differentiation to enable direct gradient flow from the follower to the leader in Stackelberg games.
A complementary line leverages implicit differentiation to propagate follower gradients to the leader \citep{zheng2022stackelberg,yang2023stackelberg,vu2022stackelberg}, typically under DDPG-style settings with explicit action-level coupling and alternating updates. Our problem differs in two key aspects: leader actions (morphology edits) cannot be directly transmitted to the follower, and both agents use non-alternating PPO-style updates. We extend implicit Stackelberg gradient methods to this more general regime, enabling (to our knowledge) the first application of implicit Stackelberg differentiation to morphology–control co-design under PPO algorithm.

\section{Preliminaries}\label{sec_Preliminaries}

\paragraph{Proximal Policy Optimization (PPO)} 
Our method builds upon PPO due to its empirical stability and simplicity.
In reinforcement learning, an agent interacts with the environment by observing a state $s_t$, 
selecting an action $a_t$ according to its policy $\pi_\theta$, 
and receiving feedback in the form of rewards. 
Vanilla policy gradient methods~\citep{sutton1984temporal,williams1992simple,sutton2000policy} optimize $\pi_\theta$ using a surrogate objective that locally approximates the true performance, which has been shown to cause instability when policy updates become too large~\citep{schulman2015trust}. 
PPO~\citep{schulman2017proximal} addresses this by constraining the likelihood ratio between new and old policies through a clipping technique:
\[
\mathcal{L}^{\mathrm{PPO}}(\theta) =
\mathbb{E}_t \Big[
\min \big(
r_t(\theta) \hat{A}_t, \;
\mathrm{clip}(r_t(\theta), 1-\epsilon, 1+\epsilon) \hat{A}_t
\big)
\Big],
\]
where $r_t(\theta) = \tfrac{\pi_\theta(a_t|s_t)}{\pi_{\theta_o}(a_t|s_t)}$ is the likelihood ratio 
and $\hat{A}_t$ is an estimator of the advantage function. 
The clipping mechanism prevents likelihood ratio $r_t(\theta)$ from deviating excessively, thereby limiting policy updates and balancing stability with performance improvement.

\paragraph{Stackelberg Game}
A Stackelberg game models a sequential and asymmetric interaction in which a leader first commits to a strategy, and a follower subsequently optimizes its strategy in response.
\emph{One key characteristic of a Stackelberg game is that the leader explicitly accounts for the follower’s best-response dynamics when making its decision.}
Let $\theta^L$ and $\theta^F$ denote the decision variables of the leader and the follower, and let $J^L(\theta^L,\theta^F)$ and $J^F(\theta^L,\theta^F)$ denote their respective objectives.
The leader solves the following bi-level optimization problem, referred to as the \emph{Stackelberg objective}:
\begin{equation}\label{eq_stackelberg_objective}
\max_{\theta^{L}}\;
J^{L} \! \left(\theta^{L}, \theta^{F}_{*}( \theta ^{L}) \right)
\quad
\text{s.t. } \theta^{F}_{*}(\theta^{L}) = \arg\max_{\theta^{F}} J^{F}(\theta^{L}, \theta^{F})
\end{equation}
where $\theta^{F}_{*}(\theta^{L})$ denotes follower's best response.
The leader’s gradient can be written as
\begin{equation}\label{eq_stackelberg_leader_gradient}
    \nabla _{\theta ^{L}} J^{L}\left( \theta ^{L} ,\theta _{*}^{F} (\theta ^{L} )\right) =\underbrace{\textcolor[rgb]{0.55,0.27,0.07}{\nabla }\textcolor[rgb]{0.55,0.27,0.07}{_{\theta ^{L}}}\textcolor[rgb]{0.55,0.27,0.07}{J}\textcolor[rgb]{0.55,0.27,0.07}{^{L}}\textcolor[rgb]{0.55,0.27,0.07}{(\theta }\textcolor[rgb]{0.55,0.27,0.07}{^{L}}\textcolor[rgb]{0.55,0.27,0.07}{,\theta }\textcolor[rgb]{0.55,0.27,0.07}{^{F}}\textcolor[rgb]{0.55,0.27,0.07}{)}}_{\text{direct gradient}} +\underbrace{\left( \nabla _{\theta ^{L}} \theta _{*}^{F} (\theta ^{L} )\right)^{\top }\textcolor[rgb]{0.7,0.13,0.13}{\nabla }\textcolor[rgb]{0.7,0.13,0.13}{_{\theta ^{F}}}\textcolor[rgb]{0.7,0.13,0.13}{J^{L} (\theta }\textcolor[rgb]{0.7,0.13,0.13}{^{L} ,\theta ^{F} )}}_{\text{indirect gradient via influencing follower}}
\end{equation}
The first term, the \emph{direct gradient}, captures the steepest direction along which the leader can directly improve its objective.
The second term, the \emph{indirect gradient via influencing follower}, captures the indirect strategic effect: it characterizes the direction along which the leader can further improve its objective by influencing the follower’s response.
In particular, the term
$\left( \nabla _{\theta ^{L}} \theta _{*}^{F} (\theta ^{L} )\right)$
captures how the follower’s optimal decision changes in response to variations in the leader’s decision.
The Jacobian $\nabla_{\theta^L}\theta^{F}_{*}(\theta^L)$ follows from the first-order optimality condition of the follower’s maximization problem,
$
\nabla_{\theta^F} J^{F}(\theta^L, \theta^{F}_{*}) = 0,
$
which indicates that $\theta^{F}_{*}(\theta^L)$ is an implicit function of the leader’s variable $\theta^L$.
This yields
\begin{equation}\label{eq_stackelberg_implicit_gradient}
     \left( \nabla _{\theta ^{L}} \theta _{*}^{F} (\theta ^{L} )\right)^{\top } =-\textcolor[rgb]{0,0.39,0}{\nabla }\textcolor[rgb]{0,0.39,0}{_{\theta ^{L} \theta ^{F}}}\textcolor[rgb]{0,0.39,0}{J}\textcolor[rgb]{0,0.39,0}{^{F}}\textcolor[rgb]{0,0.39,0}{(\theta }\textcolor[rgb]{0,0.39,0}{^{L}}\textcolor[rgb]{0,0.39,0}{,\theta }\textcolor[rgb]{0,0.39,0}{^{F}}\textcolor[rgb]{0,0.39,0}{)}\left(\textcolor[rgb]{0.12,0.23,0.54}{\nabla }\textcolor[rgb]{0.12,0.23,0.54}{_{\theta ^{F}}^{2}}\textcolor[rgb]{0.12,0.23,0.54}{J}\textcolor[rgb]{0.12,0.23,0.54}{^{F}}\textcolor[rgb]{0.12,0.23,0.54}{(\theta }\textcolor[rgb]{0.12,0.23,0.54}{^{L}}\textcolor[rgb]{0.12,0.23,0.54}{,\theta }\textcolor[rgb]{0.12,0.23,0.54}{^{F}}\textcolor[rgb]{0.12,0.23,0.54}{)}\right)^{-1}
\end{equation}
This implicit differentiation framework, often referred to as \textbf{Stackelberg implicit differentiation (SID)}, is widely used in Stackelberg reinforcement learning (e.g., Stackelberg DDPG~\citep{yang2023stackelberg}), bi-level optimization~\citep{zucchet2022beyond}, and meta-learning~\citep{pan2023first}.

\begin{figure}[t]
    \centering
    \includegraphics[width=1.0\linewidth]{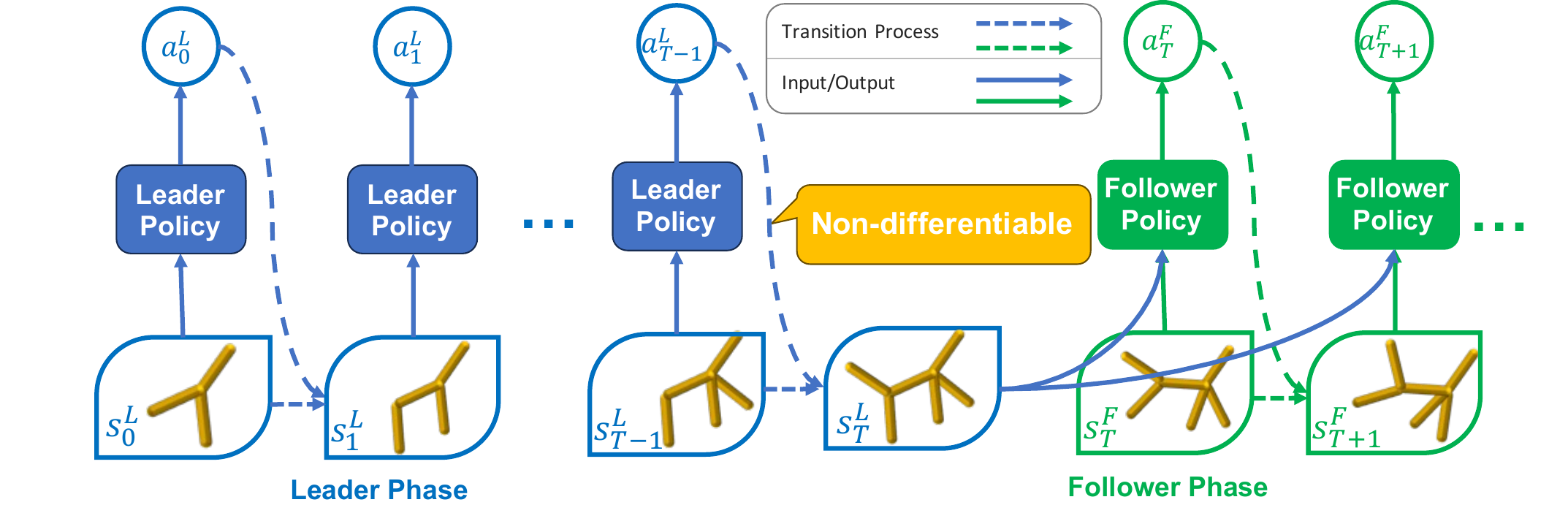}
    \caption{Illustration of the phase-separated Stackelberg Markov Game for morphology–control co-design. In the leader phase (blue part), the agent incrementally edits the morphology via discrete topology-altering actions, producing a terminal morphology $s_T^L$. In the follower phase (green part), the control policy is optimized based on this morphology. }
    \label{fig_mainfigure}
\end{figure}

\paragraph{Morphology–Control Co-Design}

Morphology–control co-design aims to jointly optimize a robot’s morphology (body structure) and its control policy. The morphology defines the robot's structural and physical properties, such as topology (limb and joint connectivity), geometry (body proportions, limb lengths), and material properties for soft robots. The controller determines the robot’s actions based on its body, using motor commands, torque signals, or higher-level behaviors.

The process unfolds in two stages.
First, the morphology is constructed incrementally through a sequence of step-by-step editing actions (e.g., adding/removing limbs, adjusting lengths, attaching joints), rather than producing a complete morphology in a single step.
This incremental approach is necessary due to the high-dimensional and combinatorial morphology space, which makes one-shot generation intractable. Formally, starting from an initial morphology $s_0^L$, morphology-editing actions $a_t^L$ are applied sequentially until a terminal morphology $s_T^L$ is obtained:
\begin{equation*}
a_t^L \sim \pi^L(\cdot \mid s_t^L),
\quad
s_{t+1}^L \sim \mathcal{P}^L(\cdot \mid s_t^L, a_t^L),
\quad t=0,1,\cdots,T-1.
\end{equation*}
The transition function $\mathcal{P}^L$ updates the morphology through discrete topology-altering operations, making the morphology dynamics non-differentiable.

Next, the control policy is optimized based on the generated terminal morphology $s_T^L$. The morphology defines the controller’s action space (e.g., available joints and actuators), state space (e.g., joint positions, forces, velocities), and underlying dynamics. At each timestep, given a state $s_t^F$, the controller applies a control action $a_t^F$ and transitions to a new state:
\begin{equation*}
a_t^F \sim \pi^F(\cdot \mid s_t^F; s_T^L),
\quad
s_{t+1}^F \sim \mathcal{P}^F(\cdot \mid s_t^F, a_t^F; s_T^L),
\quad t=T,T+1,\cdots.
\end{equation*}
The overall process is illustrated in \Cref{fig_mainfigure}.

Existing works \citep{lu2025bodygen,yuan2022transform2act} commonly model the co-design problem as a bi-level optimization structure, similar to the formulation in \cref{eq_stackelberg_objective}, where $\theta^L$ and $\theta^F$ represent the parameters of the morphology policy $\pi^L_{\theta^L}$ and the controller policy $\pi^F_{\theta^F}$, respectively.
However, to simplify implementation, these approaches typically adopt a \emph{single-level shared objective}, given by
\begin{equation}\label{eq_shared_objective}
\max_{\theta^{L},\theta^{F}} J^{\rm shared}(\theta^{L},\theta^{F})
= \mathbb{E}\left[
\sum_{t=T}^{\infty} \gamma^{t-T} R^{F}(s_{t}^{F}, a_{t}^{F}; s_{T}^{L})
\right],
\end{equation}
where $R^F(s_t^F,a_t^F; s_{T}^L)$ measures the controller’s performance under the specified morphology, such as locomotion speed or task success.

This shared-objective formulation treats the controller parameters $\theta^F$ as jointly optimized with $\theta^L$, ignoring that $\theta^F_*(\theta^L)$ is implicitly determined by the morphology.
Consequently, the leader update includes only the direct gradient term and omits the implicit response term, which may steer morphology optimization in a direction misaligned with the controller’s best response.

\section{A Stackelberg Perspective on Co-Design}
\label{sec_formulation}

In this section, we formalize morphology–control co-design from a Stackelberg
game-theoretic perspective. We first introduce asymmetric objectives for the
morphology designer and the controller, and then cast their interaction as a
leader–follower Stackelberg game.

\paragraph{Asymmetric Objectives}

In contrast to classical approaches that adopt a single shared objective (\eqrefnop{eq_shared_objective}) \citep{lu2025bodygen}, we introduce asymmetric objectives for morphology and control:
\begin{equation}\label{eq_leader_objective}
J^{L}(\theta^{L}, \theta^{F})
= \mathbb{E}\!\left[
\sum_{t=0}^{T-1} \gamma^{t} R^{L}(s_{t}^{L}, a_{t}^{L})
+ \sum_{t=T}^{\infty} \gamma^{t-T} R^{F}(s_{t}^{F}, a_{t}^{F}; s_{T}^{L})
\right]
\end{equation}
The reward $R^L(s_t^L,a_t^L)$ provides immediate feedback for morphology-editing actions, typically reflecting structural costs such as material usage or design complexity.
The morphology objective therefore combines immediate morphology-editing rewards with long-term control performance.

The control objective focuses solely on maximizing its long-term return, conditioned on the terminal morphology induced by $\pi^L_{\theta^L}$:
\begin{equation}\label{eq_follower_objective}
J^{F}(\theta^{L}, \theta^{F})
= \mathbb{E}\left[
\sum_{t=T}^{\infty} \gamma^{t-T} R^{F}(s_{t}^{F}, a_{t}^{F}; s_{T}^{L})
\right]
\end{equation}
This formulation makes asymmetry explicit: control adapts to a fixed morphology, while morphology optimizes both its structural objectives and downstream control performance.

\paragraph{Stackelberg Formulation}

Although existing work formulates co-design as a bi-level optimization problem, it typically treats the control policy as fixed when optimizing the morphology (see \crefnop{eq_shared_objective}) \citep{lu2025bodygen,yuan2022transform2act}, thereby failing to capture adaptive control dynamics. We revisit this structure from a Stackelberg game-theoretic perspective, where the morphology policy acts as the leader and the control policy acts as the follower. This perspective emphasizes strategic anticipation: the leader commits to a decision while accounting for the follower’s rational response.

This perspective naturally gives rise to Stackelberg Implicit Differentiation (SID), leading to different optimization behavior by incorporating the follower’s adaptive dynamics into morphology optimization (see \crefnop{eq_stackelberg_leader_gradient}).
However, applying SID to morphology–control co-design is non-trivial due to two intrinsic characteristics:
a) a phase-separated interaction structure and
b) a non-differentiable leader–follower interface, both of which obstruct gradient propagation across the interface.

\emph{a) Phase-separated interaction.}
Unlike standard Stackelberg Markov Games (SMG) in which the leader and follower alternate actions \citep{li2020spatial}, the interaction in the co-design problem is phase-separated: the leader acts for $T$ steps, after which the follower acts for the remaining horizon. Formally, we define this structure as a \emph{Phase-Separated Stackelberg Markov Game}.

\begin{definition}\label{def_Phase_Stackelberg_Markov_Game}
A \emph{Phase-Separated SMG} between a leader policy $\pi^L_{\theta^L}$ and a follower policy $\pi^F_{\theta^F}$, parameterized by $\theta^L$ and $\theta^F$ respectively, is defined as
\begin{equation*}
\mathcal{G} = \left(
(\mathcal{S}^{L}, \mathcal{A}^{L}, \mathcal{P}^{L}, R^{L}, \mu^{L}, T),
(\mathcal{S}^{F}, \mathcal{A}^{F}, \mathcal{P}^{F}, R^{F}, \mu^{F}),
\gamma
\right).
\end{equation*}

\begin{enumerate}[label=(\roman*), ref=(\roman*),
                    left=-5pt,
                  itemsep=-0.1em,
                  topsep=0em]
  \item The leader’s components are given by its state space $\mathcal{S}^{L}$, action space $\mathcal{A}^{L}$, transition function $\mathcal{P}^{L}$, reward function $R^{L}$, initial state distribution $\mu^{L}$, and acting horizon $T$.
  \item\label{def_phase_separated} The leader–follower interaction is {phase-separated} (i.e., non-alternating): the leader first acts for $T$ steps, producing a terminal state $s^L_{T}$, after which the follower acts until termination.
  \item\label{def_interface_state}
  The leader and follower interact through the {terminal state} $s^L_{T}\in \mathcal{S}^L$, induced by the leader's action sequence under the transition dynamics $\mathcal{P}^L$.
The follower acts conditioned on terminal state $s^L_{T}$, and all its components $(\mathcal{S}^{F}, \mathcal{A}^{F}, \mathcal{P}^{F}, R^{F}, \mu^{F})$ are defined \emph{conditionally} on $s^L_{T}$.
    \item The leader aims to solve the Stackelberg objective defined in \cref{eq_stackelberg_objective}.
\end{enumerate}
\end{definition}

\emph{b) Non-differentiable interfaces.}
In morphology–control co-design, the resulting Phase-Separated SMG exhibits an inherently non-differentiable leader–follower interface.
Specifically, the leader’s transition function $\mathcal{P}^L$ updates the morphology through discrete topology-altering actions, so the terminal state $s_T^{L}$ linking the leader and follower does not permit direct gradient propagation from the follower to the leader.

\paragraph{Value Functions}
For subsequent analysis, we introduce value functions induced by the asymmetric objectives. Analogous to standard RL, the leader’s Q-function, induced from its objective in \cref{eq_leader_objective}, is defined as
\begin{fontsize}{8.1}{1}
$$
\displaystyle Q_{\pi ^{L} ,\pi ^{F}}^{L}\left( s_{t^{\prime }}^{L} ,a_{t^{\prime }}^{L}\right) =\mathbb{E}\left[\sum _{t=t^{\prime }}^{T-1} \gamma ^{t-t'} R^{L} (s_{t}^{L} ,a_{t}^{L} )+\sum _{t=T}^{\infty } \gamma ^{t-t'} R^{F} (s_{t}^{F} ,a_{t}^{F} ;s_{T}^{L} );s_{t}^{L} =s_{t^{\prime }}^{L} ,a_{t}^{L} =a_{t^{\prime }}^{L}, \pi^L, \pi^F \right]
$$
\end{fontsize}
This Q-function captures the leader’s long-term return from a given state–action pair, accounting for both its rewards before morphology finalization and the follower’s rewards conditioned on the final morphology. 
From this, the leader’s advantage function is defined as
$
A_{\pi ^{L} ,\pi ^{F}}^{L}\left( s_{t^{\prime }}^{L} ,a_{t^{\prime }}^{L}\right) =Q_{\pi ^{L} ,\pi ^{F}}^{L}\left( s_{t^{\prime }}^{L} ,a_{t^{\prime }}^{L}\right) -\mathbb{E}_{a_{t'}^{L} \sim \pi ^{L}}\left[ Q_{\pi ^{L} ,\pi ^{F}}^{L}\left( s_{t'}^{L} ,a_{t'}^{L}\right)\right]
$.
The advantage function measures how much better (or worse) a specific action $a_{t^{\prime }}^{L}$ is compared to the leader’s average behavior at state $s_{t^{\prime }}^{L}$. A follower’s advantage function $A_{\pi ^{F}}^{F}\left( s_{t^{\prime }}^{F} ,a_{t^{\prime }}^{F} ;s_{T}^{L}\right)$ can be defined analogously from its objective in \cref{eq_follower_objective}.

\section{Method}\label{sec_method}
In this section, we present how to realize Stackelberg Implicit Differentiation (SID; see \Cref{sec_Preliminaries}) within the phase-separated Stackelberg Markov Game (SMG) framework introduced in \Cref{sec_formulation}.
This enables us to account for the follower’s adaptive response during morphology optimization, which is typically ignored in prior approaches \citep{lu2025bodygen,yuan2022transform2act}.

A direct way to implement SID is via backpropagation-through-interface, which propagates gradients from the follower to the leader through the leader–follower interface, as in Stackelberg MADDPG \citep{yang2023stackelberg}.
However, as highlighted in \Cref{sec_formulation}, the phase-separated SMG formulation of the co-design problem exhibits a non-differentiable leader–follower interface and a temporally separated interaction structure, both of which complicate gradient propagation across the interface.
These challenges require new derivations of the Stackelberg gradients, as presented below.

\subsection{Stackelberg Policy Gradient}
We now introduce Stackelberg implicit differentiation into our phase-separated Stackelberg Markov Game defined in \Cref{def_Phase_Stackelberg_Markov_Game}.
Since this formulation departs from the classical SMG, we develop new derivations for all gradient components in \cref{eq_stackelberg_leader_gradient,eq_stackelberg_implicit_gradient}.
We present each term in turn.

\paragraph{Cross-Derivative $\crossDerivative$ {\normalfont (see \cref{eq_stackelberg_implicit_gradient})}.}
This is the most challenging term.
Unlike classical SMGs where the follower directly takes the leader’s action as input, in our setting the interface is the terminal state $s_{T}^L$, generated through the transition $\mathcal{P}^L$.
Backpropagation-through-interface methods (e.g., Stackelberg MADDPG) are infeasible here, since reaching $\theta^L$ would require differentiating through the non-differentiable transition $\mathcal{P}^L$.
Instead, we derive the cross-derivative using the log-derivative technique, analogous to the stochastic policy gradient~\citep{sutton1984temporal,williams1992simple,sutton2000policy}, which bypasses the transition’s non-differentiability while relying only on sampled trajectories.
Let $(\theta_o^L,\theta_o^F)$ denote the parameters of the behavior policies used for collecting data.
Formally, we obtain the following theorem.

\begin{thm}\label{theorem_surrogate__cross_derivative}
%
We define the surrogate
\begin{fontsize}{9.8}{1}
\begin{equation}\label{eq_surrogate__cross_derivative}
\textcolor[rgb]{0,0.39,0}{\mathcal{L}_{L,F}^{F}\left( \theta ^{L} ,\theta ^{F} ;\theta _{o}^{L} ,\theta _{o}^{F}\right)} =c\mathbb{E}\left[\frac{\pi _{\theta ^{L}}^{L}\left( a^{L} |s^{L}\right)}{\pi _{\theta _{o}^{L}}^{L}\left( a^{L} |s^{L}\right)} \ \left[\mathbb{\gamma ^{\mathnormal{T}} E}\left[\frac{\pi _{\theta ^{F}}^{F}\left( a^{F} |s^{F} ;s_{T}^{L}\right)}{\pi _{\theta _{o}^{F}}^{F}\left( a^{F} |s^{F} ;s_{T}^{L}\right)} A_{\pi _{\theta ^{F}_o}^{F}}^{F}\left( s^{F} ,a^{F} ;s_{T}^{L}\right)\right]\right]\right]
\end{equation}
\end{fontsize}
\begin{flalign*}
\text{Then, we have  }
& 
\textcolor[rgb]{0,0.39,0}{\nabla _{\theta ^{L} \theta ^{F}} J^{F} (\theta ^{L} ,\theta ^{F} )} |_{\theta ^{L} =\theta _{o}^{L} ,\theta ^{F} =\theta _{o}^{F}} =\textcolor[rgb]{0,0.39,0}{\nabla _{\theta ^{L} \theta ^{F}}\mathcal{L}_{L,F}^{F}\left( \theta ^{L} ,\theta ^{F} ;\theta _{o}^{L} ,\theta _{o}^{F}\right)} |_{\theta ^{L} =\theta _{o}^{L} ,\theta ^{F} =\theta _{o}^{F}}.
&&
\end{flalign*}
\end{thm}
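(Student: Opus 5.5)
The plan is to factor $J^F$ into an outer expectation over the terminal morphology and an inner, morphology-conditioned control return, and to differentiate each factor with the log-derivative trick. Write $d^L_{\theta^L}(s_T^L)$ for the distribution of the terminal state $s_T^L$ induced by $\pi^L_{\theta^L}$ and $\mathcal{P}^L$, and $\eta^F(\theta^F; s_T^L)$ for the follower's discounted return conditioned on $s_T^L$. Then
\[
J^F(\theta^L,\theta^F)=\mathbb{E}_{s_T^L\sim d^L_{\theta^L}}\big[\eta^F(\theta^F;s_T^L)\big].
\]
The key structural observation is that $\theta^L$ enters only through the distribution $d^L_{\theta^L}$. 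The policy $\theta^F$ enters only through the integrand, since by Definition~\ref{def_Phase_Stackelberg_Markov_Game}\ref{def_interface_state} all follower components are conditioned on $s_T^L$ and do not depend on $\theta^L$ otherwise. The mixed derivative therefore factorizes as
\[
\nabla_{\theta^L\theta^F}J^F=\sum_{s_T^L}\nabla_{\theta^L}d^L_{\theta^L}(s_T^L)\,\big(\nabla_{\theta^F}\eta^F(\theta^F;s_T^L)\big)^{\top},
\]
assuming finite or suitably integrable state spaces so that derivatives and sums can be interchanged.

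The inner factor is handled by the standard policy-gradient identity of \citet{sutton2000policy}, as used for PPO's surrogate. At $\theta^F=\theta^F_o$,
\[
\nabla_{\theta^F}\eta^F(\theta^F;s_T^L)=\nabla_{\theta^F}\,\mathbb{E}_{s^F\sim d^F_{\theta^F_o},\,a^F\sim\pi^F_{\theta_o^F}}\Big[\tfrac{\pi^F_{\theta^F}}{\pi^F_{\theta^F_o}}A^F_{\pi^F_{\theta^F_o}}(s^F,a^F;s_T^L)\Big],
\]
with the discounted follower state distribution absorbed into the normalizing constant. This is where $c$ and the factor $\gamma^T$ come from. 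The $\gamma^T$ appears if the follower's discounted visitation is measured from the global time index, with $t$ running from $T$, so I would fix $c$ as the normalizer $1/(1-\gamma)$ and track the $\gamma^T$ accordingly.

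For the outer factor, I would write $d^L_{\theta^L}(s_T^L)$ as a sum over leader trajectories of $\mu^L\prod_t\pi^L_{\theta^L}(a^L_t|s^L_t)\mathcal{P}^L(s^L_{t+1}|s^L_t,a^L_t)$. The log-derivative gives $\nabla_{\theta^L}d^L=\mathbb{E}[\sum_t\nabla\log\pi^L_{\theta^L}(a_t^L|s_t^L)\,\mathbf{1}\{s_T^L\}]$. Here $\mathcal{P}^L$ is never differentiated, which is the whole point, since it is non-differentiable. This matches the derivative at $\theta^L_o$ of the importance-ratio expression $\mathbb{E}_{\theta^L_o}[\pi^L_{\theta^L}/\pi^L_{\theta^L_o}\cdot g(s_T^L)]$, where the expectation ranges over leader state–action pairs along the trajectory and $g$ is any function of the resulting $s_T^L$. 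I would take $g$ to be the inner $\theta^F$-dependent surrogate.

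The main obstacle is this last matching step. The ratio in \eqref{eq_surrogate__cross_derivative} is a single per-step ratio averaged over visited leader state–action pairs, not a trajectory product. To equate the two, I would replace $\mathbf{1}\{s_T^L\}$ by its conditional expectation given $(s_t^L,a_t^L)$; this is legitimate because past-step terms have zero mean given the past. The resulting quantity is a leader "Q-function" whose reward is the inner follower surrogate. Its gradient then equals the per-step-ratio surrogate gradient, exactly as in the TRPO/PPO surrogate-equivalence argument, with the leader baseline contributing zero.

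Finally, I would differentiate the surrogate in $\theta^F$, which passes inside the outer expectation, and then in $\theta^L$. Evaluating at $(\theta^L_o,\theta^F_o)$, the product of the two ratios' derivatives reproduces the factorized mixed derivative, and both ratios equal one there. This would complete the proof.
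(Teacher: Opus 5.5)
Your proposal is correct and follows essentially the paper's argument. It combines the log-derivative trick on the leader's trajectory density (with $\mathcal{P}^L$ and all follower factors independent of $\theta^L$), the follower's policy gradient conditioned on $s_T^L$, and the fact that at $(\theta^L_o,\theta^F_o)$ ratio gradients reduce to score functions, with the outer expectation read jointly along sampled trajectories as you do.

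Two small remarks. First, your conditional-expectation/Q-function detour in the matching step is unnecessary, since under that joint expectation $\nabla_{\theta^L}\sum_t \pi^L_{\theta^L}(a^L_t|s^L_t)/\pi^L_{\theta^L_o}(a^L_t|s^L_t)$ at $\theta^L_o$ is already $\sum_t\nabla\log\pi^L_{\theta^L_o}(a^L_t|s^L_t)$; where you do use it, the justification is the tower property, not past terms having zero mean. Second, the constant must be $c=T/(1-\gamma)$ rather than $1/(1-\gamma)$, because the extra $T$ arises when $\sum_{t=0}^{T-1}$ is rewritten as an expectation under the time-averaged $d^L_{\theta^L_o}$.
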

In \cref{eq_surrogate__cross_derivative}, the outer expectation is taken over
$s^{L} \sim d_{\theta _{o}^{L}}^{L} ,\ a^{L} \sim \pi _{\theta _{o}^{L}}^{L} ,\ s_{T}^{L} \sim d_{\theta _{o}^{L}}^{L,T}$,
where
$d_{\theta _{o}^{L}}^{L,t} (s^{L} )=P(s_{t}^{L} =s^{L} ;\pi _{\theta _{o}^{L}}^{L} )$ is the visitation distribution probability of leader policy at step $t$,
and
$d_{\theta _{o}^{L}}^{L} (s^{L} )\triangleq 1 / T\ \sum _{t} d_{\theta _{o}^{L}}^{L,t} (s^{L} )$.
The inner expectation is taken over
$s^{F} \sim d_{\theta _{o}^{F}}^{F} (\cdot ;s_{T}^{L} ),\ a^{F} \sim \pi _{\theta _{o}^{F}}^{F} (\cdot ;s_{T}^{L} )$,
where $d_{\theta _{o}^{F}}^{F}$ denotes the follower’s visitation distribution.
The constant $c = T/(1-\gamma)$ normalizes the distribution, and its effect can be absorbed by the learning rate in practice.
Proofs of this and subsequent theorems are provided in \Cref{sec_theorem_proofs}.
This theorem shows that the cross-derivative can be expressed as an expectation involving likelihood-ratio (importance-weighted) advantage estimators, thereby extending the classical policy gradient theorem to capture leader-follower coupling in our phase-separated SMG.

\paragraph{First-Order Derivatives $\textcolor[rgb]{0.55,0.27,0.07}{\nabla _{\theta ^{L}} J^{L} (\theta ^{L} ,\theta ^{F} )}$ and $\textcolor[rgb]{0.7,0.13,0.13}{\nabla _{\theta ^{F}} J^{L} (\theta ^{L} ,\theta ^{F} )} $ {\normalfont (see \crefnop{eq_stackelberg_leader_gradient})}.}
These first-order terms are relatively straightforward, as they follow the same structure as the policy gradient theorem~\citep{sutton1984temporal,williams1992simple,sutton2000policy}.
They quantify how the leader’s objective changes with respect to its own parameters (leader’s direct gradient) and w.r.t. the follower’s parameters.
Both can be expressed using advantage functions under importance weighting, as follows.

\begin{proposition}\label{theorem_first_order}
We have
\begin{gather*}
 \begin{array}{l}
\textcolor[rgb]{0.55,0.27,0.07}{\nabla _{\theta ^{L}} J^{L} (\theta ^{L} ,\theta ^{F} )} |_{\theta ^{L} =\theta _{o}^{L} ,\theta ^{F} =\theta _{o}^{F}} =\textcolor[rgb]{0.55,0.27,0.07}{\nabla _{\theta _{L}}\mathcal{L}_{L}^{L}\left( \theta ^{L} ,\theta ^{F} ;\theta _{o}^{L} ,\theta _{o}^{F}\right)} |_{\theta ^{L} =\theta _{o}^{L} ,\theta ^{F} =\theta _{o}^{F}}
\\
\textcolor[rgb]{0.7,0.13,0.13}{\nabla _{\theta ^{F}} J^{L} (\theta ^{L} ,\theta ^{F} )} |_{\theta ^{L} =\theta _{o}^{L} ,\theta ^{F} =\theta _{o}^{F}} =\textcolor[rgb]{0.7,0.13,0.13}{\nabla _{\theta _{F}}\mathcal{L}_{F}^{L}\left( \theta ^{L} ,\theta ^{F} ;\theta _{o}^{L} ,\theta _{o}^{F}\right)} |_{\theta ^{L} =\theta _{o}^{L} ,\theta ^{F} =\theta _{o}^{F}}
\end{array}
\end{gather*}

\begin{equation}\label{eq_surrogate__first_order}
\begin{aligned}
\text{where     }
&
\textcolor[rgb]{0.55,0.27,0.07}{\mathcal{L}_{L}^{L}\left( \theta ^{L} ,\theta ^{F} ;\theta _{o}^{L} ,\theta _{o}^{F}\right)} 
=\mathbb{E}\left[\frac{\pi _{\theta ^{L}}^{L}\left( a^{L} |s^{L}\right)}{\pi _{\theta _{o}^{L}}^{L}\left( a^{L} |s^{L}\right)} A_{\pi _{\theta _{o}^{L}}^{L} ,\pi _{\theta _{o}^{F}}^{F}}^{L}\left( s^{L} ,a^{L}\right) \ \right]
\\
\quad
&
\textcolor[rgb]{0.7,0.13,0.13}{\mathcal{L}_{F}^{L}\left( \theta ^{L} ,\theta ^{F} ;\theta _{o}^{L} ,\theta _{o}^{F}\right)} 
=\mathbb{E}\left[ \ \gamma ^{T}\frac{\pi _{\theta ^{F}}^{F}\left( a^{F} |s^{F} ;s_{T}^{L}\right)}{\pi _{\theta _{o}^{F}}^{F}\left( a^{F} |s^{F} ;s_{T}^{L}\right)} \pi _{\theta _{o}^{F}}^{F}\left( s^{F} ,a^{F} ;s_{T}^{L}\right) \ \right]
\end{aligned}
\end{equation}
\end{proposition}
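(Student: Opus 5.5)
The plan is to write $J^L$ as an expectation over the joint trajectory distribution of the phase-separated SMG and then apply the log-derivative (likelihood-ratio) technique separately to the leader and follower factors, exactly as in the classical policy gradient theorem. I read the second surrogate with the advantage $A^{F}_{\pi^F_{\theta_o^F}}(s^F,a^F;s_T^L)$ in place of the printed $\pi^F_{\theta_o^F}(s^F,a^F;s_T^L)$, by analogy with \cref{eq_surrogate__cross_derivative}. I use the same normalization conventions as in \Cref{theorem_surrogate__cross_derivative}: the constants $T$ and $1/(1-\gamma)$ from converting sums over time into visitation expectations are absorbed into the expectation or learning rate.

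\emph{Trajectory factorization.} By \Cref{def_Phase_Stackelberg_Markov_Game}, a full trajectory $\tau=(\tau^L,\tau^F)$ has density
\begin{equation*}
p(\tau)=\mu^L(s_0^L)\prod_{t=0}^{T-1}\pi^L_{\theta^L}(a_t^L|s_t^L)\mathcal{P}^L(s_{t+1}^L|s_t^L,a_t^L)\cdot p_{\theta^F}(\tau^F\mid s_T^L),
\end{equation*}
where $p_{\theta^F}(\tau^F|s_T^L)$ is the follower trajectory density, built from $\mu^F,\pi^F_{\theta^F},\mathcal{P}^F$, all conditioned on $s_T^L$. The key structural facts are:
\begin{itemize}
\item $\mathcal{P}^L$ is never differentiated, since only $\nabla\log\pi^L$ appears.
\item The law of $s_T^L$ depends on $\theta^L$ only.
\item The follower factor depends on $\theta^F$ only.
\end{itemize}
Under standard regularity assumptions (bounded rewards, policies differentiable in their parameters), I will interchange $\nabla$ with $\mathbb{E}$.

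\emph{Leader direct gradient.} Differentiating in $\theta^L$ gives $\mathbb{E}[\sum_{t<T}\nabla\log\pi^L_{\theta^L}(a_t^L|s_t^L)\,G]$, where $G$ is the full leader return. Causality removes rewards earlier than $t$, which yields $Q^L_{\pi^L,\pi^F}(s_t^L,a_t^L)$ up to the factor $\gamma^t$. Subtracting the state baseline $\mathbb{E}_{a\sim\pi^L}[Q^L]$ turns this into $A^L$. Writing $\sum_{t<T}\mathbb{E}_{s\sim d^{L,t}}[\cdot]=T\,\mathbb{E}_{s\sim d^L}[\cdot]$ gives an expectation over $d^L_{\theta_o^L}$. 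On the surrogate side, at $\theta^L=\theta_o^L$ we have $\nabla_{\theta^L}\frac{\pi^L_{\theta^L}}{\pi^L_{\theta^L_o}}=\nabla\log\pi^L_{\theta^L}$, and the data distribution is exactly that of $(\theta_o^L,\theta_o^F)$. Hence the two gradients coincide.

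\emph{Gradient with respect to $\theta^F$.} The leader rewards $R^L$ do not depend on $\theta^F$. The law of $s_T^L$ also does not depend on $\theta^F$, so $\nabla_{\theta^F}J^L=\mathbb{E}_{s_T^L}\big[\nabla_{\theta^F}J^F(\cdot\,;s_T^L)\big]$, carrying the discount offset of $\gamma^T$ under the paper's convention. For each fixed $s_T^L$, the standard policy gradient theorem gives $\nabla_{\theta^F}J^F(\cdot\,;s_T^L)=\frac{1}{1-\gamma}\mathbb{E}_{d^F_{\theta_o^F},\pi^F_{\theta_o^F}}[\nabla\log\pi^F\,A^F]$. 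This equals the gradient of the importance-ratio surrogate at $\theta^F=\theta_o^F$.

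\emph{Main obstacle.} I expect the hard part to be bookkeeping rather than conceptual. It consists of tracking the discount factors ($\gamma^t$ versus $\gamma^{t-T}$, and the $\gamma^T$ prefactor) and the normalizing constants $T$ and $1/(1-\gamma)$ so that the claimed identities hold exactly under the stated visitation distributions. It also requires justifying that baselines and causality remain valid when the return contains the follower segment conditioned on $s_T^L$. I would handle this by conditioning on $s_T^L$ and using the tower property.
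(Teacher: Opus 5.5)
Your proposal is correct and follows the same route as the paper, whose proof is only a one-line appeal to the likelihood-ratio argument of the standard policy gradient theorem. Your trajectory factorization, differentiation through $\pi^L$ and $\pi^F$ only, conditioning on $s_T^L$, and reading of $A^{F}_{\pi^F_{\theta_o^F}}(s^F,a^F;s_T^L)$ in place of the misprinted $\pi^F_{\theta_o^F}(s^F,a^F;s_T^L)$ simply spell that argument out.

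The one step to tighten is the per-step weight $\gamma^t$ in the leader term, which is not a constant and so cannot be absorbed into the learning rate. Your passage to $T\,\mathbb{E}_{s\sim d^L}[\cdot]$ is therefore exact (up to normalization) only if two further conventions hold: $d^L$ is the $\gamma^t$-weighted leader visitation distribution, and $J^L$ is read with the factor $\gamma^T$ on the follower return, as $Q^L$ and $\mathcal{L}_F^L$ already assume. The paper leaves these conventions implicit as well.
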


\paragraph{Inverse of Second-Order Derivative (Hessian) $\textcolor[rgb]{0.12,0.23,0.54}{ \left( \nabla _{\theta ^{F}}^{2} J^{F}\left( \theta ^{L} ,\theta ^{F}\right) \right)^{-1} }$ {\normalfont (see \crefnop{eq_stackelberg_implicit_gradient})}.}
This last component involves the inverse Hessian.
Although the Hessian can be computed from the derived loss function (see Appendix \Cref{theorem_second_derivative}), the Hessian is typically indefinite due to the advantage term, making its inversion unstable.
A standard remedy is to approximate it by the Fisher information matrix,
$
\mathcal{F}(\theta^F) \;=\;
\mathbb{E}
\!\left[ \nabla_{\theta^F} \log \pi^F_{\theta^F}(a^F \mid s^F; s^L_T)\,
\nabla_{\theta^F} \log \pi^F_{\theta^F}(a^F \mid s^F ; s^L_T)^{\top} \right],
$
which is positive semi-definite and can be estimated via the KL divergence between policies:
\begin{equation}\label{eq_surrogate__Hessian}
\mathcal{F}(\theta^F)=
\textcolor[rgb]{0.12,0.23,0.54}{
\nabla_{\theta ^{F}}^2 \mathcal{L}^{F}_{\mathrm{KL}}( \theta ^{L} ,\theta ^{F} ;\theta _{o}^{L} ,\theta _{o}^{F} )
}
\;=\;
\nabla_{\theta ^{F}}^2
\mathbb{E}
\!\left[
\mathrm{KL}\!\left(
\pi^F_{\theta^F}(\cdot \mid s^F; s_T^L)\; \big\|\;
\pi^F_{\theta^{F}_o }(\cdot \mid s^F; s_T^L)
\right)
\right],
\end{equation}
This natural-gradient approximation, used in methods such as natural policy gradient and trust region policy optimization~\citep{kakade2001natural,schulman2015trust}, avoids indefiniteness and improves stability.
Further stability is obtained by regularizing the Hessian with a small multiple of the identity
$
\big( \textcolor[rgb]{0.12,0.23,0.54}{ \nabla_{\theta ^{F}}^2 \mathcal{L}^{F}_{\mathrm{KL}} } + \lambda I \big)^{-1}
$
with $ \lambda > 0$, 
which has been shown to interpolate between the standard policy gradient (when $\lambda \to \infty$) and the Stackelberg gradient (when $\lambda \to 0$)~\citep{yang2023stackelberg}.

\subsection{Algorithms}

Based on the surrogate functions in \Cref{eq_surrogate__cross_derivative,eq_surrogate__first_order,eq_surrogate__Hessian},
we compute the leader’s Stackelberg gradient in \Cref{eq_stackelberg_leader_gradient}.
Since these surrogates are locally equivalent to the true Stackelberg gradients,
we adopt the likelihood-ratio clipping technique from PPO~\citep{schulman2017proximal}
to constrain policy divergence and ensure stable optimization.
{Note that this application is not a simple reuse of PPO clipping. Rather, it is grounded in our local-approximation theory on the newly derived Stackelberg surrogate (see \Cref{theorem_surrogate__cross_derivative}).}
Moreover, the expectation terms are estimated from sampled trajectories.
This yields sample-based surrogates with PPO clipping, denoted by $\widehat{\mathcal{L}}$,
and the corresponding estimation of the leader’s Stackelberg gradient can be expressed as
\begin{equation}\label{eq_final_surrogate}
    \nabla _{\theta ^{L}} \widehat{J} ^{L}\left( \theta _{L} ,\theta _{F}^{*} (\theta _{L} )\right) =\textcolor[rgb]{0.55,0.27,0.07}{\nabla _{\theta _{L}}\widehat{\mathcal{L}}_{L}^{L}} -\underbrace{\textcolor[rgb]{0,0.39,0}{\nabla _{\theta _{L} \theta _{F}}\widehat{\mathcal{\textcolor[rgb]{0,0.39,0}{L}}}_{L,F}^{F}}\overbrace{\left(\textcolor[rgb]{0.12,0.23,0.54}{\nabla _{\theta _{\mathnormal{F}}}^{\mathnormal{2}}\widehat{\mathcal{\textcolor[rgb]{0.12,0.23,0.54}{L}}}_{\mathrm{KL}}^{\mathnormal{F}}} +\lambda I\right)^{-1}\textcolor[rgb]{0.7,0.13,0.13}{\nabla _{\theta _{F}}\widehat{\mathcal{L}}_{F}^{L}}}^{\text{step 1}}}_{\text{step 2}}
\end{equation}
We refer to this overall procedure as \emph{Stackelberg PPO}, which integrates PPO-style clipping into the Stackelberg gradient computation.
We first compute \emph{step 1} in the above equation, which can be efficiently implemented using the conjugate gradient method.
Conjugate gradient only requires Hessian-vector products, which can be obtained without explicitly constructing the Hessian via Pearlmutter’s method~\citep{pearlmutter1994fast,Moller:93}:
$
\nabla^{2}_{\theta} \mathcal{L}(\theta)\, v
= \nabla_{\theta} \big( \nabla_{\theta}^{\top} \mathcal{L}(\theta) \, v \big).
$
We then compute \emph{step 2}, which in turn only requires Jacobian-vector products. 
These can likewise be computed efficiently without explicitly forming the full Jacobian by using the Jacobian-vector product operation provided by automatic differentiation frameworks.

\begin{figure}[tb]
    \centering
    \includegraphics[width=0.90\linewidth]{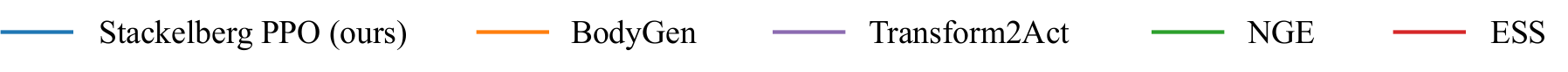}
    \\[2mm]
    \def\widthaqhugdd{0.30}
    \begin{tabular}{c@{\hspace{0mm}}c@{\hspace{0mm}}c}
        \includegraphics[width=\widthaqhugdd\linewidth]{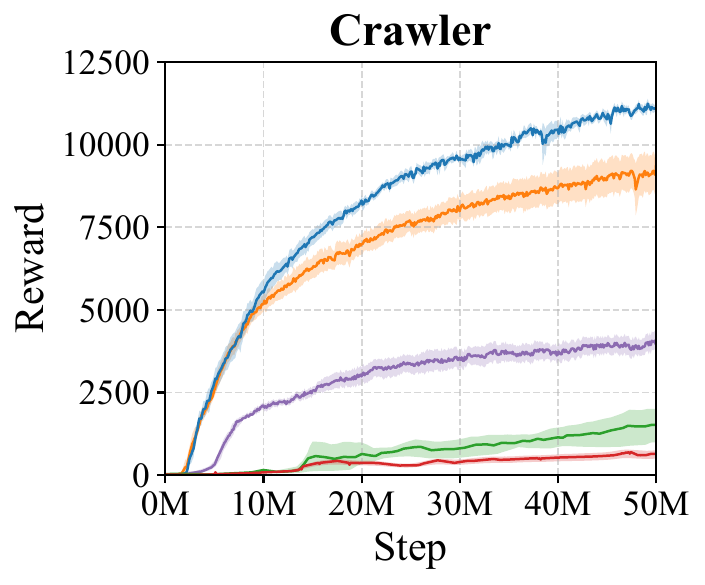} &
        \includegraphics[width=\widthaqhugdd\linewidth]{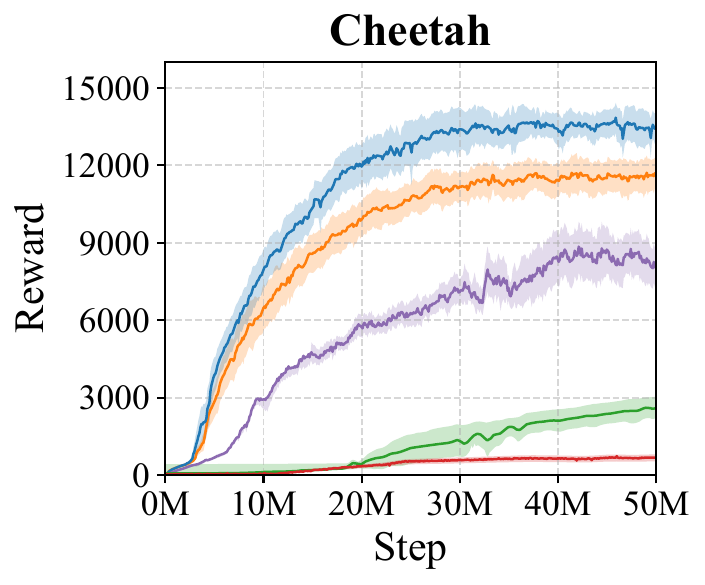} &
        \includegraphics[width=\widthaqhugdd\linewidth]{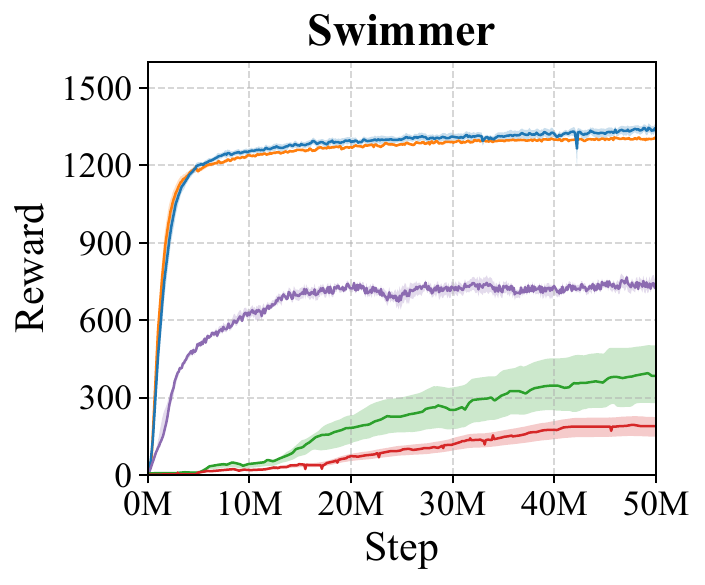} \\
        \includegraphics[width=\widthaqhugdd\linewidth]{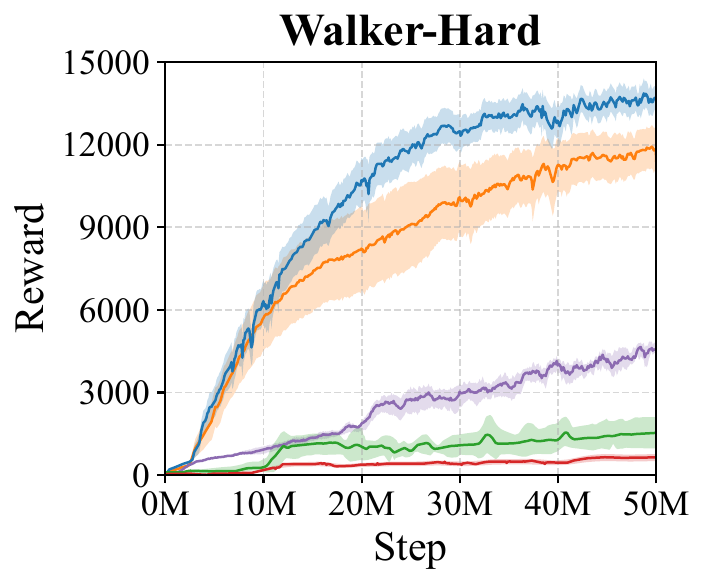} &
        \includegraphics[width=\widthaqhugdd\linewidth]{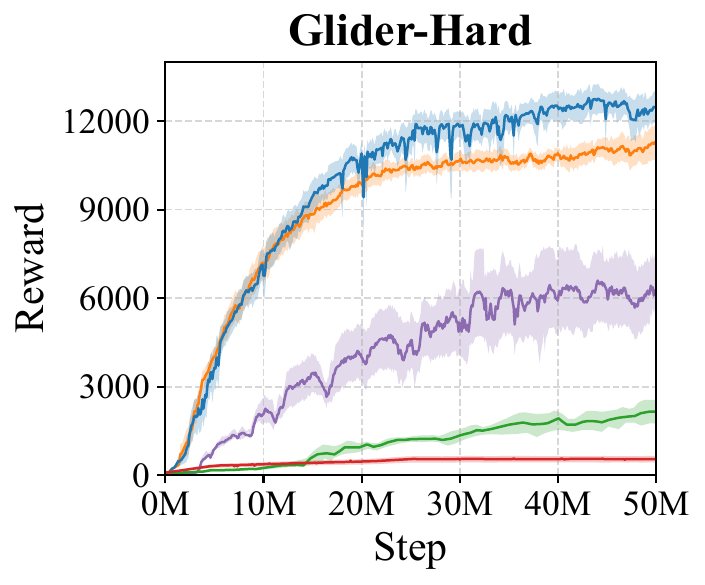} & 
        \includegraphics[width=\widthaqhugdd\linewidth]{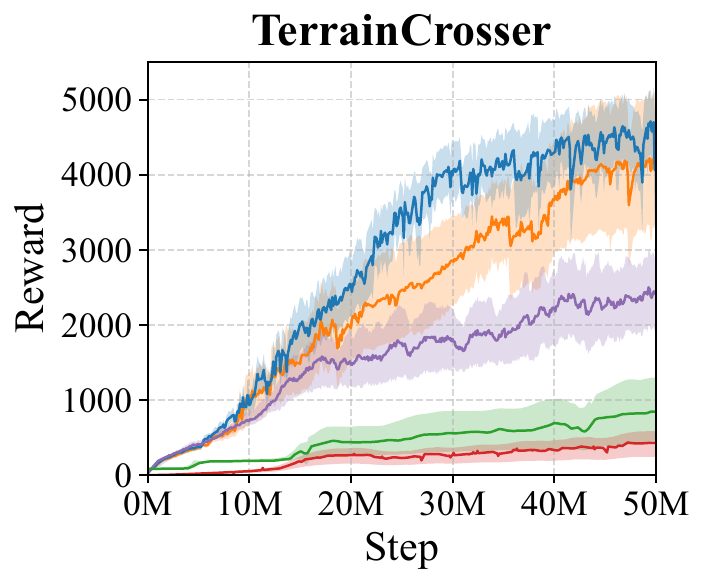}\\
        \includegraphics[width=\widthaqhugdd\linewidth]{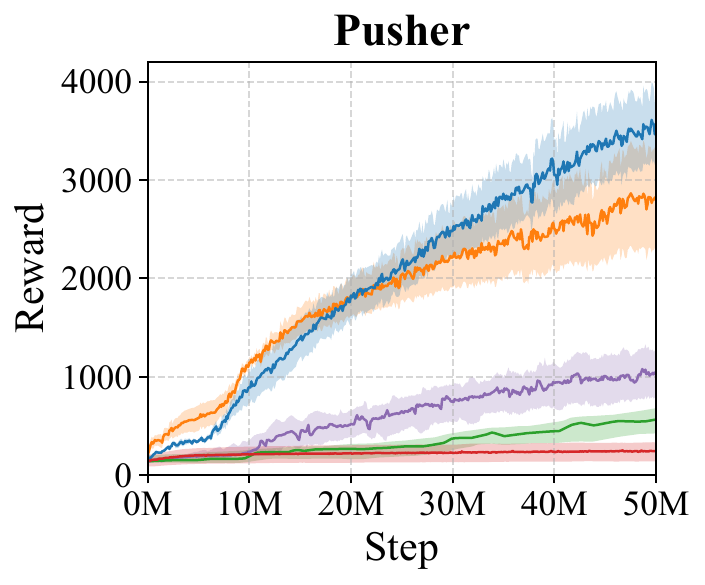} &
        \includegraphics[width=\widthaqhugdd\linewidth]{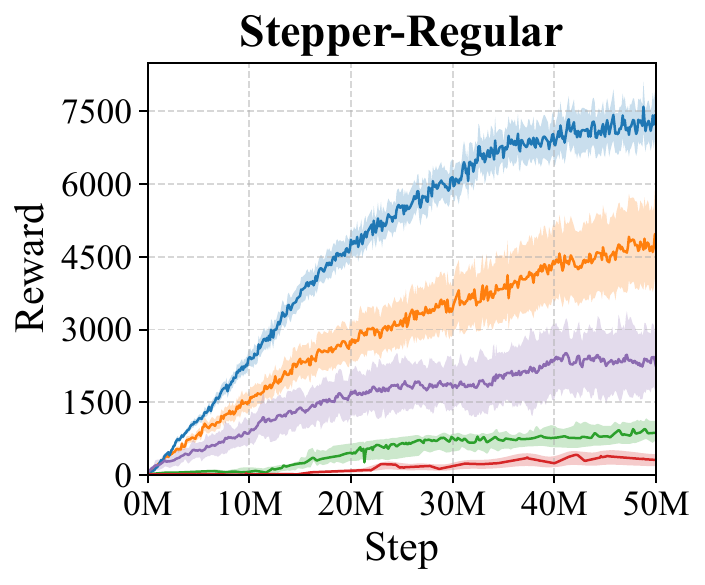} &
        \includegraphics[width=\widthaqhugdd\linewidth]{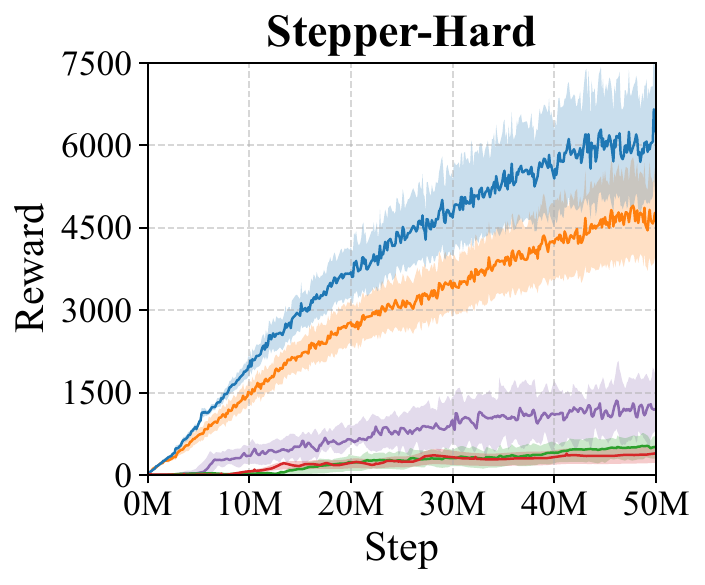}\\
        
    \end{tabular}
    \caption{
    Performance curve with respect to the number of follower steps during training.
    Shaded regions denote standard error across seven random seeds.
    }
    \label{fig:baseline}
\end{figure}

\section{Experiments}\label{sec_experiment}

Our goal is to test whether leveraging Stackelberg implicit differentiation to regularize the leader’s gradient can improve sample efficiency and final performance.

All experiments are conducted on MuJoCo-based morphology–control co-design tasks. 
We adopt benchmarks from prior work, including three flat-terrain tasks (\texttt{Crawler}, \texttt{Cheetah}, \texttt{Swimmer}, \texttt{Glider}, \texttt{Walker}) and one complex-terrain task (\texttt{TerrainCrosser}) \citep{lu2025bodygen}.
To further evaluate performance under more challenging conditions, we introduce two new tasks, \texttt{Stepper-Regular} and \texttt{Stepper-Hard}, where the agent must climb stair-like structures. These tasks require morphologies capable of effective climbing in addition to robust control. To test generality beyond locomotion, we also include a contact-rich 3D manipulation task, \texttt{Pusher}, designed to evaluate whether co-design methods can evolve structures aligned with manipulation objectives. 
Additional results on other tasks are provided in \Cref{sec_environment_details} due to space constraints.
In all environments, morphologies are represented as tree structures with constraints on depth, branching factor, and joint degrees of freedom. 
While structural complexity and terrain difficulty vary, the reward function consistently emphasizes forward velocity, ensuring fair comparisons of how different methods balance morphology and control.
Each algorithm is evaluated with seven random seeds per task. 
All reported learning curves show mean values with shaded areas representing standard deviations.
Further details and visualization of the environments are provided in \Cref{sec_environment_details}.

\begin{figure}[tb]
    \centering
    \subfloat[Visualization]{\label{fig_creatures}
        \includegraphics[width=0.33\linewidth]{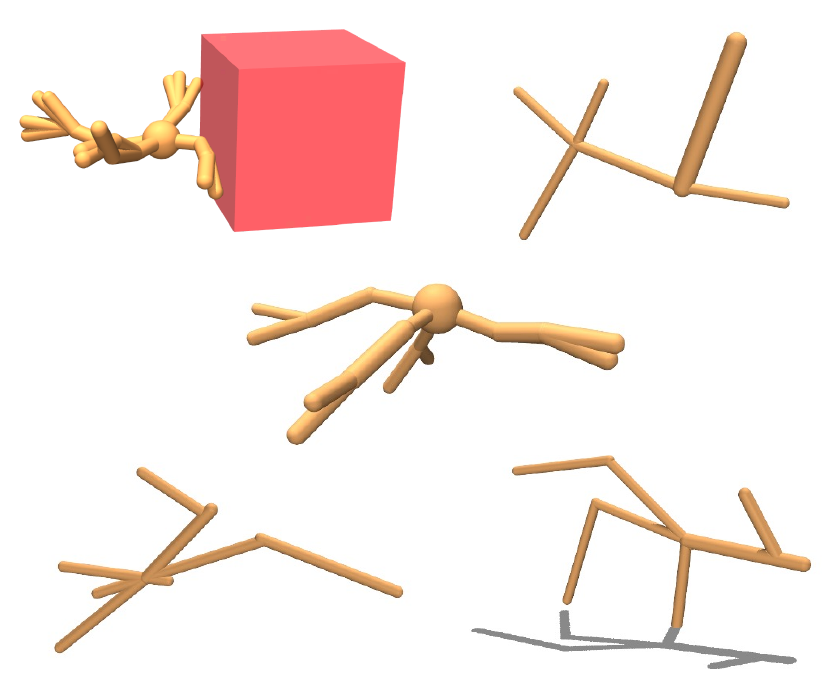} 
    }
    \subfloat[Regularization Parameter $\lambda$ ]{\label{fig_regularization}
        \includegraphics[width=0.33\linewidth]{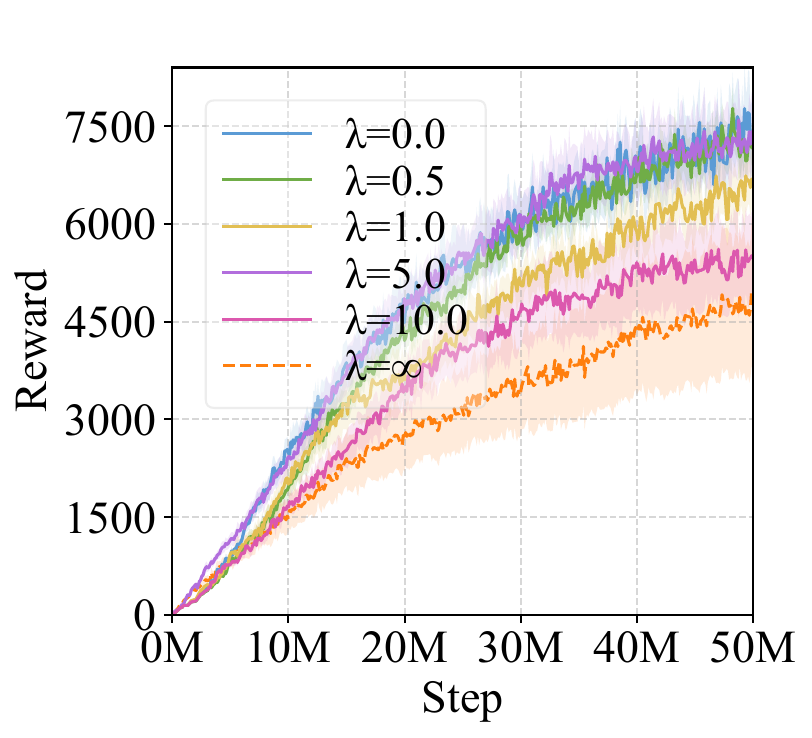} 
    }
    \subfloat[Hessian]{\label{fig_hessian}
        \includegraphics[width=0.33\linewidth]{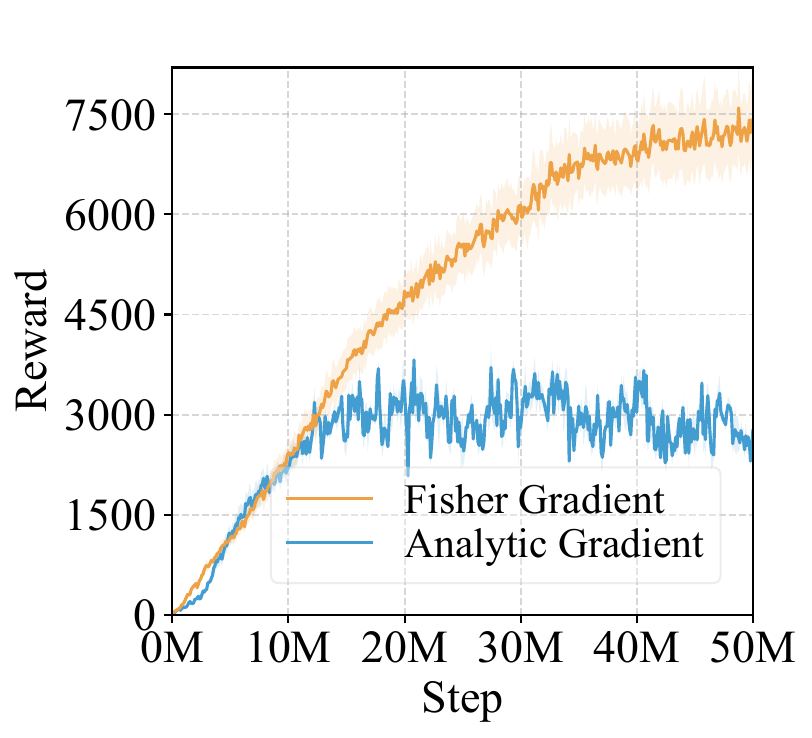} 
    }
    \caption{
(a) Evolved morphologies. Ablation studies on 
(b) $\lambda$ sweep from 0 to $\infty$ and 
(c) Fisher information matrix on/off, both evaluated on Stepper-Regular task.
    }
    \label{fig:ablation}
\end{figure}

\subsection{Comparison with Baselines}\label{sec:baselines}

We implement our Stackelberg PPO on top of \emph{BodyGen}~\citep{lu2025bodygen}, a PPO-based framework that employs transformer-based co-design with graph-aware positional encodings, optimizing morphology and control independently under shared rewards. BodyGen serves as our primary baseline, with the only modification being the use of Stackelberg policy gradients. Implementation details are provided in \Cref{sec_environment_details}. In addition to BodyGen, we compare Stackelberg PPO against several advanced co-design methods:

\begin{itemize}[left=2pt, itemsep=0pt, topsep=0pt, parsep=6pt, partopsep=0pt]
\item \emph{Evolutionary Structure Search (ESS)} \citep{Sims1994}: A canonical evolutionary-algorithm approach to robot design, where candidate morphologies are scored by handcrafted fitness functions. Here we instead use a lightweight RL-based training loop for principled evaluation.
\item \emph{Neural Graph Evolution (NGE)} \citep{wang2019nge}: Evolutionary search over graph-structured morphologies with GNN controllers, each generation training the inherited parent controller.
\item \emph{Transform2Act} \citep{yuan2022transform2act}: Concurrent RL co-design using separate GNNs for morphology and control within unified PPO training, with joint-specific MLP heads for universal control.
\end{itemize}

\Cref{fig:baseline} presents the learning curves across all environments.
Stackelberg PPO consistently achieves the best performance, yielding an average \textbf{+20.66\%} improvement over the strongest baseline.
Compared to evolutionary approaches (ESS, NGE), it attains substantially higher sample efficiency by avoiding the costly rollouts required to evaluate each morphological candidate.
Relative to the vanilla gradient method without Stackelberg differentiation (BodyGen), Stackelberg PPO achieves superior results in both sample efficiency and final performance.
The advantage is most evident on challenging 3D tasks with large design spaces (\texttt{Crawler}, \texttt{Stepper-Regular}, \texttt{Stepper-Hard}, \newn{\texttt{Pusher}}), where our method delivers an average \textbf{+32.02\%} improvement.  \Cref{fig_creatures} showcases examples of the evolved creatures generated by our method. \newn{Additional morphology examples and evolution processes are provided in the \cref{sec:qualitative_analysis} and \ref{sec:evolve}}.

\subsection{Ablation Studies}\label{sec:ablation}
We conduct ablation studies to validate key components of Stackelberg PPO, including
(1) the regularization parameter $\lambda$ that controls gradient interpolation (\crefnop{eq_final_surrogate}); and
(2) the Fisher gradient approximation of the Hessian for stability (\crefnop{eq_surrogate__Hessian}).
 
\textbf{Regularization Parameter $\lambda$ (\crefnop{eq_final_surrogate}).}
The parameter $\lambda$ interpolates between pure Stackelberg gradients and standard policy gradients.
We evaluate Stackelberg PPO on the \texttt{Stepper-Regular} environment with $\lambda \in \{0.0, 0.5, 1.0, 5.0, 10.0, \infty\}$, where $\lambda=0$ corresponds to \emph{no regularization} and $\lambda=\infty$ reduces to the vanilla gradient without Stackelberg differentiation.
\Cref{fig_regularization} shows robust performance for $\lambda \in [0.5, 10]$, with degradation only at the extremes ($\lambda=0$ or $\infty$).
This highlights both the robustness of the method to $\lambda$ values and the necessity of regularization.

\textbf{Hessian Computation (\crefnop{eq_surrogate__Hessian}).}
We compare our Fisher approximation with direct analytic second-order gradients (\crefnop{eq_second_derivative_analytical}).
As shown in \Cref{fig_hessian}, the Fisher approximation achieves stable learning with nearly twice the performance of the analytic gradient (6000 vs 2500).
This improvement arises from the positive semi-definiteness of the Fisher matrix, which avoids the numerical instabilities caused by the indefinite raw Hessian.

\newn{\textbf{Sensitivity to PPO Clipping Threshold $\epsilon$.}
We evaluate the sensitivity of Stackelberg PPO to the clipping parameter $\epsilon$ by sweeping over multiple thresholds and measuring its effect on task performance and KL-divergence stability. 
\Cref{fig_ablation_1_clip} and \subref{fig_kl_trace_clip} shows that moderate clipping (e.g., $\epsilon \le 0.4$) yields stable learning with low KL divergence, while removing clipping causes rapid KL growth and clear performance degradation. Full quantitative results are reported in \Cref{sec:appendix_ablation}.

\textbf{Leader Horizon $T$ (\crefnop{eq_surrogate__cross_derivative}).}
We examine how the leader horizon $T$ affects structural optimization. 
As shown in Figure~\ref{ablation_table}, larger horizons generally improve performance by enabling richer morphology edits, while overly large values (e.g., $T=11$) become harder to optimize and cause mild degradation—yet still outperform very small horizons such as $T=3$. 
Importantly, increasing $T$ does \emph{not} introduce higher variance in the leader-gradient update of \crefnop{eq_surrogate__cross_derivative} relative to BodyGen baseline, confirming that the Stackelberg update remains stable across a wide range of horizon lengths.}

\begin{figure}[tb]
    \centering
    \subfloat[Cumulative Reward]{\label{fig_ablation_1_clip}
        \includegraphics[width=0.34\linewidth]{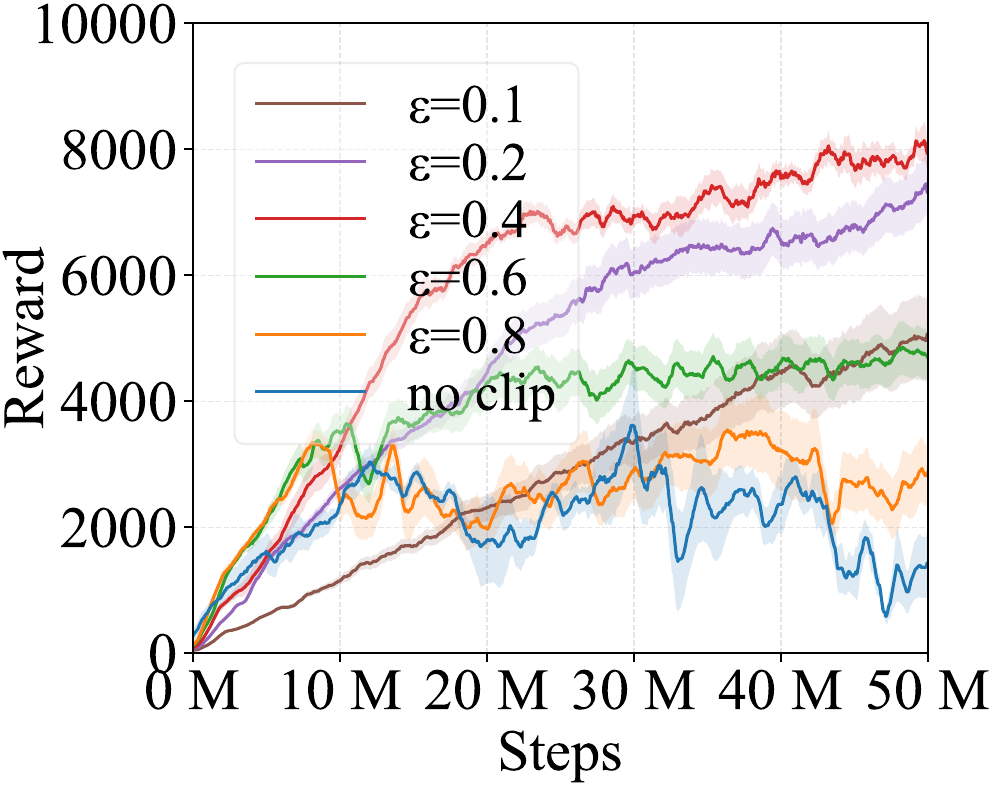} 
    }
    \subfloat[KL-divergence]{\label{fig_kl_trace_clip}
        \includegraphics[width=0.33\linewidth]{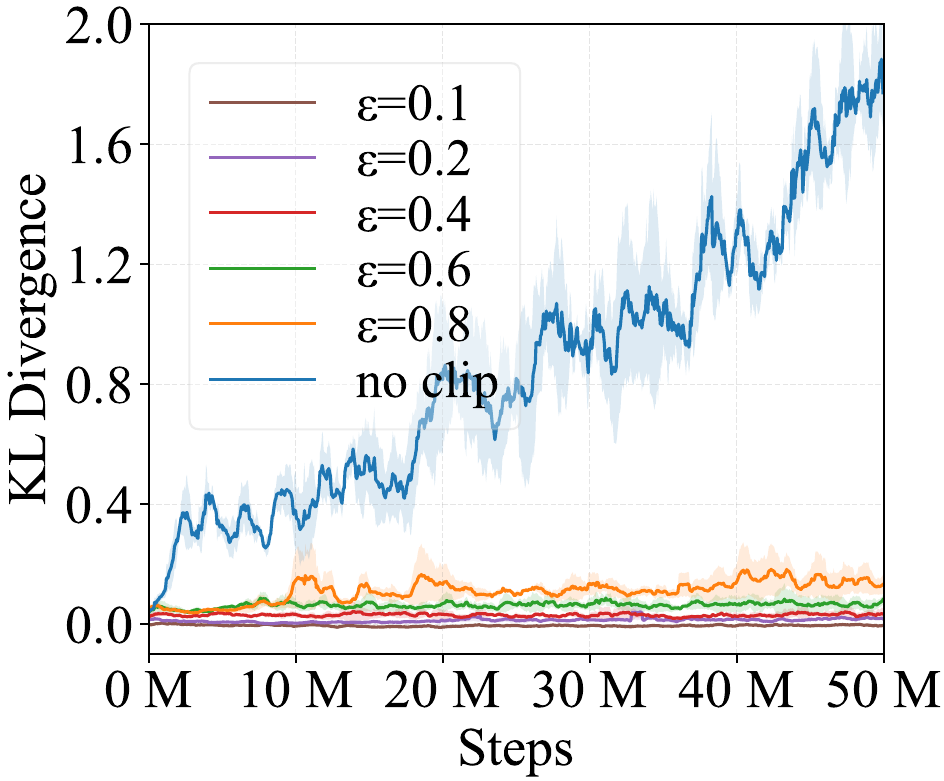} 
    }
    \subfloat[Leader Horizon $T$]{\label{ablation_table}
    \raisebox{2.13cm}{ 
    \begingroup
        \renewcommand{\arraystretch}{1.50}              
        \setlength{\tabcolsep}{3pt}                    
        \fontsize{10}{12}\selectfont                   
    
        \resizebox{0.31\linewidth}{!}{
        \begin{tabular}{lcc}
        \hline
        \textbf{$T$} & Stackelberg PPO & BodyGen \\
        \midrule
        $3$  & $6188.99_{\pm681.06}$ & $3663.06_{\pm571.30}$ \\
        $5$  & $7215.20_{\pm449.02}$ & $4685.94_{\pm645.23}$ \\
        $7$  & $8260.74_{\pm148.58}$ & $6879.60_{\pm175.41}$ \\
        $9$  & $6739.51_{\pm631.35}$ & $3375.11_{\pm486.54}$ \\
        $11$ & $6874.34_{\pm604.42}$ & $3216.77_{\pm657.61}$ \\
        \hline
        \end{tabular}
        }
    \endgroup
    }}
\caption{
(a) Reward curves and (b) KL-divergence traces for different clipping thresholds $\epsilon$. 
(c) Performance comparison under varying leader horizons $T$. All evaluated on Stepper-Regular.
}
\end{figure}

\section{Conclusions}
We introduced \emph{Stackelberg Proximal Policy Optimization (Stackelberg PPO)}, a reinforcement learning framework grounded in the \textbf{Stackelberg game} paradigm, which explicitly captures the leader–follower coupling between high-level design decisions and adaptive control responses. While this formulation is general, we instantiate it in the context of morphology–control co-design, where the leader specifies the body structure and the follower adapts the control policy. Instead of treating design and control as independent, Stackelberg PPO exploits the leader–follower coupling to anticipate how the follower will adapt, enabling the leader to update its policy toward morphologies that are more compatible with downstream control. Experiments demonstrate that this coupling yields superior performance and stability over standard PPO, particularly on complex locomotion tasks where tight coordination between morphology and control is essential.

Despite these promising results, several avenues remain for future work. A key direction is sim-to-real transfer, which remains challenging due to unmodeled hardware constraints and material dynamics. Bridging this gap could enable the real-world deployment of self-evolving robotic systems. We further envision advances in this area leading to truly adaptive artificial life forms capable of self-directed evolution, reshaping our understanding of intelligence, embodiment, and the boundary between designed and evolved systems.


\section*{Reproducibility statement}
The demonstrations and open-source code are available at:
\href{https://yanningdai.github.io/stackelberg-ppo-co-design}{https://yanningdai.github.io/stackelberg-ppo-co-design}. 
The computational requirements, hyperparameters, and key implementation details are provided in Appendix~\ref{sec_implementation_details}.

\section*{Acknowledgements}
We gratefully acknowledge the insightful comments from the ICLR 2026 reviewers.
The research reported in this publication was supported by funding from King Abdullah University of Science and Technology (KAUST) - Center of Excellence for Generative AI, under award number 5940.
This work was additionally supported by the European Research Council (ERC, Advanced Grant Number 742870).
For computer time, this research used Ibex managed by the Supercomputing Core Laboratory at King Abdullah University of Science and Technology (KAUST) in Saudi Arabia.


\section*{Ethics statement}

As fundamental AI research and to the best of the authors' knowledge, there are no clear ethical risks associated with this work beyond the risks already posed by prior work.

\bibliography{main}
\bibliographystyle{iclr2026_conference}

\appendix

\clearpage

\section{LLM Usage Statement}
We used LLMs for drafting and refining text extensively throughout the paper. LLMs were not used to develop algorithms, provide theoretical results, run experiments, or contribute in any other way to the work beyond the aforementioned writing help.


\section{Theoretical Analysis}\label{sec_theorem_proofs}

In this section, we provide the theoretical foundations of our approach.  
We first present the trajectory factorization in the proposed phase-separated Stackelberg Markov Game, which serves as the basis for proving all the theorems.  

Given the trajectory
 $\displaystyle \tau \triangleq \left(\left\{s_{t}^{L} ,a_{t}^{L}\right\}_{t=0}^{T-1} ,s_{T}^{L} ,\left\{s_{t}^{F} ,a_{t}^{F}\right\}_{t=T}^{\infty }\right)$, 
the trajectory distribution naturally factorizes into two phases:

\begin{fontsize}{8.1}{1}
\begin{align*}
 & P\left( \tau ;\pi _{\theta ^{L}}^{L} ,\pi _{\theta ^{F}}^{F}\right)\\
= & \mu ^{L}\left( s_{0}^{L}\right)\prod _{t=0}^{T-1} \pi _{\theta ^{L}}^{L}\left( a_{t}^{L} |s_{t}^{L}\right)\mathcal{P}^{L}\left( s_{t+1}^{L} |s_{t}^{L} ,a_{t}^{L}\right) \mu ^{F}\left( s_{0}^{F} ;s_{T}^{L}\right)\prod _{t=T}^{\infty } \pi _{\theta ^{F}}^{F}\left( a_{t}^{F} |s_{t}^{F} ;s_{T}^{L}\right)\mathcal{P}\left( a_{s_{t+1}}^{F} |s_{t}^{F} ,a_{t}^{F} ;s_{T}^{L}\right)
\end{align*}
\end{fontsize}

\begin{proof}[Proof of \Cref{theorem_surrogate__cross_derivative}]
Based on policy gradient theorem \citep{sutton1984temporal,williams1992simple,sutton2000policy}, we have 

\begin{equation*}
\nabla _{\theta ^{F}} J^{F}\left( \theta ^{L} ,\theta ^{F}\right) =\int P\left( \tau ;\theta ^{L} ,\theta ^{F}\right) \gamma ^{T}\sum _{t=T}^{\infty } \nabla _{\theta ^{F}}\log \pi _{\theta ^{F}}^{F}\left( a_{t}^{F} |s_{t}^{F} ;s_{T}^{L}\right) A_{t}^{F} \ d\tau 
\end{equation*}

Differentiating both sides with respect to $\theta^L$ yields the cross derivative:

\begin{align*}
 & \ \ \ \ \nabla _{\theta ^{L} ,\theta ^{F}}^{2} J^{F}\left( \theta ^{L} ,\theta ^{F}\right)\\
 & =\nabla _{\theta ^{L}}\left( \nabla _{\theta ^{F}} J^{F}\left( \theta ^{L} ,\theta ^{F}\right)\right)\\
 & =\nabla _{\theta ^{L}}\int P\left( \tau ;\theta ^{L} ,\theta ^{F}\right)\left[ \gamma ^{T}\sum _{t=T}^{\infty } \nabla _{\theta ^{F}}\log \pi _{\theta ^{F}}^{F}\left( a_{t}^{F} |s_{t}^{F} ;s_{T}^{L}\right) A_{t}^{F}\right] d\tau \\
 & =\int \nabla _{\theta ^{L}} P\left( \tau ;\theta ^{L} ,\theta ^{F}\right)\left[ \gamma ^{T}\sum _{t=T}^{\infty } \nabla _{\theta ^{F}}\log \pi _{\theta ^{F}}^{F}\left( a_{t}^{F} |s_{t}^{F} ;s_{T}^{L}\right) A_{t}^{F}\right]^{\top } d\tau \\
 & =\int P\left( \tau ;\theta ^{L} ,\theta ^{F}\right) \nabla _{\theta ^{L}}\log P\left( \tau ;\theta ^{L} ,\theta ^{F}\right)\left[ \gamma ^{T}\sum _{t=T}^{\infty } \nabla _{\theta ^{F}}\log \pi _{\theta ^{F}}^{F}\left( a_{t}^{F} |s_{t}^{F} ;s_{T}^{L}\right) A_{t}^{F}\right]^{\top } d\tau \\
 & =\int P\left( \tau ;\theta ^{L} ,\theta ^{F}\right)\left[\sum _{t=0}^{T-1} \nabla _{\theta ^{L}}\log \pi _{\theta ^{L}}^{L}\left( a_{t}^{L} |s_{t}^{L}\right)\right]\left[ \gamma ^{T}\sum _{t=T}^{\infty } \nabla _{\theta ^{F}}\log \pi _{\theta ^{F}}^{F}\left( a_{t}^{F} |s_{t}^{F} ;s_{T}^{L}\right) A_{t}^{F}\right]^{\top } d\tau \\
 & =\nabla _{\theta ^{L} ,\theta ^{F}}\int P\left( \tau ;\theta ^{L} ,\theta ^{F}\right)\left[\sum _{t=0}^{T-1}\log \pi _{\theta ^{L}}^{L}\left( a_{t}^{L} |s_{t}^{L}\right)\right]\left[ \gamma ^{T}\sum _{t=T}^{\infty }\log \pi _{\theta ^{F}}^{F}\left( a_{t}^{F} |s_{t}^{F} ;s_{T}^{L}\right) A_{t}^{F}\right]^{\top } d\tau 
\end{align*}

Evaluating this identity at the reference parameters $(\theta^L,\theta^F)=(\theta_o^L,\theta_o^F)$ gives


\begin{align*}
 & \ \ \ \ \nabla _{\theta ^{L} ,\theta ^{F}}^{2} J^{F}\left( \theta ^{L} ,\theta ^{F}\right) |_{\theta ^{L} =\theta _{o}^{L} ,\theta ^{F} =\theta _{o}^{F}}\\
= & \nabla _{\theta ^{L} ,\theta ^{F}}^{2}\int P\left( \tau ;\theta _{o}^{L} ,\theta _{o}^{F}\right)\left[\sum _{t=0}^{T-1}\frac{\pi _{\theta ^{L}}^{L}\left( a_{t}^{L} |s_{t}^{L}\right)}{\pi _{\theta _{o}^{L}}^{L}\left( a_{t}^{L} |s_{t}^{L}\right)}\right]\left[ \gamma ^{T}\sum _{t=T}^{\infty }\frac{\pi _{\theta ^{F}}^{F}\left( a_{t}^{F} |s_{t}^{F} ;s_{T}^{L}\right)}{\pi _{\theta _{o}^{F}}^{F}\left( a_{t}^{F} |s_{t}^{F} ;s_{T}^{L}\right)} A_{\pi _{\theta ^{F}}^{F}}^{F}\left( s_{t}^{F} ,a_{t}^{F} ;s_{T}^{L}\right)\right] d\tau \\
= & \nabla _{\theta ^{L} ,\theta ^{F}}^{2}\int P\left( \tau _{0:T}^{L} ;\theta _{o}^{L} ,\theta _{o}^{F}\right)\left[\sum _{t=0}^{T-1}\frac{\pi _{\theta ^{L}}^{L}\left( a_{t}^{L} |s_{t}^{L}\right)}{\pi _{\theta _{o}^{L}}^{L}\left( a_{t}^{L} |s_{t}^{L}\right)}\right]\\
 & \ \ \ \ \ \ \ \ \ \ \ \ \ \ \ \ \ \ \ \ \ \ \ \ \ \ \ \ \ \ \ \ \ \ \ \int P\left( \tau _{T:\infty }^{F} ;\theta _{o}^{L} ,\theta _{o}^{F}\right)\left[ \ \gamma ^{T}\sum _{t=T}^{\infty }\frac{\pi _{\theta ^{F}}^{F}\left( a_{t^{\prime }}^{F} |s_{t^{\prime }}^{F} ;s_{T}^{L}\right)}{\pi _{\theta _{o}^{F}}^{F}\left( a_{t^{\prime }}^{F} |s_{t^{\prime }}^{F} ;s_{T}^{L}\right)} A_{\pi _{\theta ^{F}}^{F}}^{F}\left( s_{t}^{F} ,a_{t}^{F} ;s_{T}^{L}\right)\right] d\tau \\
= & \nabla _{\theta ^{L} ,\theta ^{F}}^{2}\sum _{t=0}^{T-1}\int P\left( \tau _{0:T}^{L} ;\theta _{o}^{L} ,\theta _{o}^{F}\right)\left[\frac{\pi _{\theta ^{L}}^{L}\left( a_{t}^{L} |s_{t}^{L}\right)}{\pi _{\theta _{o}^{L}}^{L}\left( a_{t}^{L} |s_{t}^{L}\right)}\right]\\
 & \ \ \ \ \ \ \ \ \ \ \ \ \ \ \ \ \ \ \ \ \ \ \ \ \ \ \ \ \ \ \ \sum _{t^{\prime } =T}^{\infty }\int P\left( \tau _{T:\infty }^{F} ;\theta _{o}^{L} ,\theta _{o}^{F} ,s_{T}^{L}\right)\left[ \ \gamma ^{T}\frac{\pi _{\theta ^{F}}^{F}\left( a_{t^{\prime }}^{F} |s_{t^{\prime }}^{F} ;s_{T}^{L}\right)}{\pi _{\theta _{o}^{F}}^{F}\left( a_{t^{\prime }}^{F} |s_{t^{\prime }}^{F} ;s_{T}^{L}\right)} A_{\pi _{\theta ^{F}}^{F}}^{F}\left( s_{t^{\prime }}^{F} ,a_{t^{\prime }}^{F} ;s_{T}^{L}\right)\right] d\tau \\
= & \nabla _{\theta ^{L} ,\theta ^{F}}^{2}\int d_{\theta _{o}^{L}}^{L} (s^{L} )\pi _{\theta _{o}^{L}}^{L}\left( a^{L} |s^{L}\right) d_{\theta _{o}^{L}}^{L,T} (s_{T}^{L} )\left[\frac{\pi _{\theta ^{L}}^{L}\left( a^{L} |s^{L}\right)}{\pi _{\theta _{o}^{L}}^{L}\left( a^{L} |s^{L}\right)}\right]\\
 & \ \ \ \ \ \ \ \ \ \ \ \ \ \ \ \ \ \ \ \ \ \int d_{\theta _{o}^{F}}^{F} (s^{F} ;s_{T}^{L} )\pi _{\theta _{o}^{F}}^{F}\left( a^{F} |s^{F} ;s_{T}^{L}\right)\left[ \ \gamma ^{T}\frac{\pi _{\theta ^{F}}^{F}\left( a^{F} |s^{F} ;s_{T}^{L}\right)}{\pi _{\theta _{o}^{F}}^{F}\left( a^{F} |s^{F} ;s_{T}^{L}\right)} A_{\pi _{\theta ^{F}}^{F}}^{F}\left( s^{F} ,a^{F} ;s_{T}^{L}\right)\right] d\tau \\
= & \nabla _{\theta ^{L} ,\theta ^{F}}\mathcal{L}_{L,F}^{F}\left( \theta ^{L} ,\theta ^{F} ;\theta _{o}^{L} ,\theta _{o}^{F}\right) |_{\theta ^{L} =\theta _{o}^{L} ,\theta ^{F} =\theta _{o}^{F}}
\end{align*}


\end{proof}

\begin{proof}[Proof of \Cref{theorem_first_order}]
The result follows directly by applying the likelihood-ratio trick in the same way as the standard proof of the policy gradient theorem \citep{sutton1984temporal,williams1992simple,sutton2000policy}. 

\end{proof}

\begin{proposition}\label{theorem_second_derivative}
We have
\begin{equation}\label{eq_second_derivative_analytical}
\begin{aligned}
&
\textcolor[rgb]{0.12,0.23,0.54}{\nabla _{\theta ^{F}}^{2} J^{F}\left( \theta ^{L} ,\theta ^{F}\right)} =\nabla _{\theta _{F}}^{2} E_{\pi _{\theta ^{L}}^{L} ,\pi _{\theta ^{F}}^{F}}\left[\sum _{t=T}^{\infty }\log \pi _{\theta ^{F}}^{F}\left( a_{t}^{F} |s_{t}^{F} ;s_{T}^{L}\right) A_{t}^{F}\right]\\
&
+E_{\pi _{\theta _{o}^{L}}^{L} ,\pi _{\theta _{o}^{F}}^{F}}\left[ \nabla _{\theta ^{F}}\left(\sum _{t=T}^{\infty }\log \pi _{\theta ^{F}}^{F}\left( a_{t}^{F} |s_{t}^{F} ;s_{T}^{L}\right)\right) \nabla _{\theta ^{F}}\left(\sum _{t=T}^{\infty }\log \pi _{\theta ^{F}}^{F}\left( a_{t}^{F} |s_{t}^{F} ;s_{T}^{L}\right) A_{t}^{F}\right)^{\top }\right]
\end{aligned}
\end{equation}
\end{proposition}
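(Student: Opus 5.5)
The plan is to differentiate the first-order policy gradient expression for the follower once more with respect to $\theta^F$, as in the proof of \Cref{theorem_surrogate__cross_derivative}. The only difference is that $\theta^F$ enters the trajectory density $P(\tau;\theta^L,\theta^F)$ itself, not just the log-policy factor.

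The starting point is the same representation used for the cross-derivative:
\begin{equation*}
\nabla_{\theta^F} J^F(\theta^L,\theta^F)=\int P(\tau;\theta^L,\theta^F)\, g(\tau;\theta^F)\, d\tau,
\qquad
g(\tau;\theta^F)\triangleq \nabla_{\theta^F}\Big(\sum_{t=T}^{\infty}\log\pi^F_{\theta^F}(a_t^F|s_t^F;s_T^L)\,A_t^F\Big),
\end{equation*}
where $A_t^F$ is treated as a fixed weight, as in that proof. The factor $\gamma^T$ I will carry along or absorb into $A_t^F$ to match the statement's normalization.

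Applying $\nabla_{\theta^F}$ and the product rule gives two terms. The first is $\int P\,\nabla_{\theta^F} g\,d\tau$, which is exactly $\nabla^2_{\theta^F}$ of $\mathbb{E}[\sum_t \log\pi^F_{\theta^F} A_t^F]$ when the expectation is held at the current sampling distribution. This yields the first term in \cref{eq_second_derivative_analytical}.

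The second term is $\int \nabla_{\theta^F}P\, g^\top d\tau$. I will rewrite it with the log-derivative trick as $\int P\,\nabla_{\theta^F}\log P\, g^\top d\tau$. Next I invoke the two-phase trajectory factorization stated at the start of this appendix. In that factorization, $\mu^L$, $\mathcal{P}^L$, $\pi^L$, $\mu^F$ and $\mathcal{P}^F$ do not depend on $\theta^F$. So $\nabla_{\theta^F}\log P(\tau)=\nabla_{\theta^F}\sum_{t\ge T}\log\pi^F_{\theta^F}(a_t^F|s_t^F;s_T^L)$, and this produces the outer-product term $\mathbb{E}[\nabla\sum\log\pi^F\,(\nabla\sum\log\pi^F A^F)^\top]$. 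Finally, I evaluate at the behaviour parameters $(\theta_o^L,\theta_o^F)$ so that the expectation is the sampled one, consistent with the subscripts in the statement. Where needed, I will use importance ratios $\pi^F_{\theta^F}/\pi^F_{\theta^F_o}$ as in the earlier proof to turn the identity into a surrogate whose Hessian matches at $\theta^F=\theta^F_o$.

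The step I expect to be delicate is the handling of $A_t^F$. The advantage depends on $\theta^F$, and a fully rigorous Hessian would include its derivative. I would follow the convention of \Cref{theorem_surrogate__cross_derivative} and treat $A_t^F$ as evaluated at $\theta^F_o$, hence constant under differentiation. I would also note that this convention, together with the interchange of $\nabla$ and the infinite sum/integral (justified by $\gamma<1$ and bounded rewards and score functions), is what makes the identity exact at the reference point.
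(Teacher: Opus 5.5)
Your proposal follows the paper's proof step for step: differentiate the policy-gradient integral once more in $\theta^F$, split it by the product rule, apply the log-derivative trick to the $\nabla_{\theta^F}P(\tau)$ term, and use the two-phase factorization so that $\nabla_{\theta^F}\log P(\tau)$ reduces to $\nabla_{\theta^F}\sum_{t\ge T}\log\pi^F_{\theta^F}(a_t^F|s_t^F;s_T^L)$, with $A_t^F$ held fixed as the paper tacitly does (evaluating at $(\theta^L_o,\theta^F_o)$ already matches the subscripts, so you need no importance-ratio surrogate, and a per-step one would in fact keep only the equal-time blocks of the outer product). One correction to your closing remark: freezing $A_t^F$ as the state--action advantage of $\pi^F_{\theta^F_o}$ drops a generally nonzero $\nabla_{\theta^F}A^F$ contribution, so that convention does not make the identity exact; it is exact when $A_t^F$ is read as a $\theta^F$-independent trajectory quantity (e.g.\ sampled discounted reward-to-go minus a fixed state baseline), because then the future-time scores inside $\nabla_{\theta^F}\log P(\tau)$ supply exactly that missing term.
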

\begin{proof}[Proof of \Cref{theorem_second_derivative}]
Based on policy gradient theorem \citep{sutton1984temporal,williams1992simple,sutton2000policy}, we have 
    $$
    \nabla _{\theta ^{F}} J^{F}\left( \theta ^{L} ,\theta ^{F}\right) =\int P\left( \tau ;\theta ^{L} ,\theta ^{F}\right) \nabla _{\theta _{F}}\sum _{t=T}^{\infty }\log \pi _{\theta ^{F}}^{F}\left( a_{t}^{F} |s_{t}^{F} ;s_{T}^{L}\right) A_{t}^{F} d\tau 
    $$

Differentiating this expression again with respect to $\theta^F$, we obtain
\begin{fontsize}{8.2}{1}
\begin{align*}
 & \nabla _{\theta _{F}}^{2} J_{F} (\theta _{L} ,\theta _{F} )\\
 & =\nabla _{\theta _{F}}\int P\left( \tau ;\theta ^{L} ,\theta ^{F}\right) \nabla _{\theta _{F}}\sum _{t=T}^{\infty }\log \pi _{\theta ^{F}}^{F}\left( a_{t}^{F} |s_{t}^{F} ;s_{T}^{L}\right) A_{t}^{F} d\tau \\
 & =\int P\left( \tau ;\theta ^{L} ,\theta ^{F}\right) \nabla _{\theta _{F}}^{2}\sum _{t=T}^{\infty }\log \pi _{\theta ^{F}}^{F}\left( a_{t}^{F} |s_{t}^{F} ;s_{T}^{L}\right) A_{t}^{F} d\tau \\
 & \ \ \ \ +\int P\left( \tau ;\theta ^{L} ,\theta ^{F}\right) \nabla _{\theta ^{F}}\log P\left( \tau ;\theta ^{L} ,\theta ^{F}\right)\left( \nabla _{\theta ^{F}}\sum _{t=T}^{\infty }\log \pi _{\theta ^{F}}^{F}\left( a_{t}^{F} |s_{t}^{F} ;s_{T}^{L}\right)\right)^{\top } d\tau \\
 & =\int P\left( \tau ;\theta ^{L} ,\theta ^{F}\right) \nabla _{\theta _{F}}^{2}\sum _{t=T}^{\infty }\log \pi _{\theta ^{F}}^{F}\left( a_{t}^{F} |s_{t}^{F} ;s_{T}^{L}\right) A_{t}^{F} d\tau \\
 & \ \ \ \ +\int P\left( \tau ;\theta ^{L} ,\theta ^{F}\right)\left( \nabla _{\theta ^{F}}\sum _{t=T}^{\infty }\log \pi _{\theta ^{F}}^{F}\left( a_{t}^{F} |s_{t}^{F} ;s_{T}^{L}\right)\right)\left( \nabla _{\theta ^{F}}\sum _{t=T}^{\infty }\log \pi _{\theta ^{F}}^{F}\left( a_{t}^{F} |s_{t}^{F} ;s_{T}^{L}\right) A_{t}^{F}\right)^{\top } d\tau 
\end{align*}
\end{fontsize}
\end{proof}

\clearpage

\section{Environment details}\label{sec_environment_details}

In this section, we provide comprehensive details about the nine environments employed in our experimental evaluation. To ensure fair comparison with existing methods, we adopt all 6 environments from the BodyGen framework: \textbf{Crawler}, \textbf{Cheetah}, \textbf{Glider}, \textbf{Walker}, \textbf{Swimmer}, and \textbf{TerrainCrosser}. Additionally, we introduce three novel environments designed to evaluate different aspects of our algorithm: \textbf{Stepper-Regular} and \textbf{Stepper-Hard} feature complex topographical structures to test robustness in challenging terrain, \newn{while \textbf{Pusher} evaluates manipulation capabilities. 
These additional environments are specifically crafted to assess the robustness and adaptability of our algorithm under more challenging conditions, 
thereby providing a more rigorous assessment of the proposed method's capabilities. Figure~\ref{fig:env_photo} provides visualizations of all nine environments.}

Each agent undergoes dynamic morphological evolution through topological and attribute modifications during training. The observation includes the root body's spatial position and velocity, all joints' angular positions and velocities, and motor gear parameters. All joints use hinge connections enabling single-axis rotation. Joint attributes encompass bone vectors, sizes, and motor gear values. The action space consists of one-dimensional control signals applied to each joint's motor.

\textbf{Crawler} operates in a 3D environment, where agents exhibit quadrupedal crawling locomotion. The initial morphology consists of a central root node with four limb branches extending outward. The body tree is constrained to a maximum depth of 4 levels, with each non-root node supporting at most 2 child limbs. Episodes are terminated when the agent's body height exceeds 2.0 units to prevent unrealistic vertical extensions. The reward function encourages forward movement while penalizing excessive control effort:
\begin{equation}\label{eq:velocity_reward_3d}
r_t = \frac{x_{t+1} - x_t}{\tau} - w \cdot \frac{1}{N} \sum_{j \in \mathcal{J}_t} ||\mathbf{u}_j^t||^2
\end{equation}
where $x_t$ denotes the agent's forward position at timestep $t$, $\mathbf{u}_j^t$ represents the effective control input applied to joint (i.e., the raw action scaled by the joint’s gear ratio) applied to joint $j$ at time $t$, $w = 0.0001$ is the control regularization coefficient, $N$ is the total number of joints, and $\tau = 0.04$.

\textbf{Cheetah} features 2D locomotion focused on fast running gaits. The agent begins with an initial design comprising a root body connected to one primary limb segment. The morphological search allows a maximum tree depth of 4 with up to 3 child limbs per node. The root body's angular orientation is constrained within 20 degrees to maintain stable running posture. Episode termination occurs when body height falls below 0.7 or exceeds 2.0 units. The reward follows the velocity-based formulation:
\begin{equation}\label{eq:velocity_reward_2d}
r_t = \frac{x_{t+1} - x_t}{\tau}
\end{equation}
where $\tau = 0.008$.
 
\textbf{Glider and Walker} enable 2D aerial and terrestrial locomotion. Both environments share the same base morphology: agents start from an initial configuration with three limb segments attached to a central root, each limb node can support up to three children, and joints can oscillate within a $60^\circ$ range to accommodate wide-range motion. The reward structure follows \crefnop{eq:velocity_reward_2d}, emphasizing forward displacement. The two tasks differ only in their allowable morphology depth: Glider restricts the body tree to a maximum depth of 3, while Walker permits up to 4 levels.

\textbf{Swimmer} enables undulatory, snake-like locomotion in a 2D aquatic environment. The agent evolves in water with a viscosity coefficient of $vis = 0.1$. The initial morphology consists of a root body connected to a single limb segment, and each limb node may support up to three child segments, enabling flexible articulated structures suited for wave-based propulsion. This task serves as a lightweight validation environment, and thus imposes no early-termination conditions such as height limits or joint-rotation thresholds. The reward structure follows \crefnop{eq:velocity_reward_2d}, emphasizing forward displacement under hydrodynamic resistance.

\newpage

\textbf{TerrainCrosser} presents a challenging 2D terrain navigation task using the Cheetah agent configuration. The environment features fixed terrain heights with maximum elevation differences of $z_{max} = 0.5$. Agents must traverse gaps  generated from single-channel height maps. Height constraints maintain agent stability between 0.7 and 2.0 units, with violations leading to episode termination. The reward function prioritizes forward progress as defined in \crefnop{eq:velocity_reward_2d}.

\textbf{Stepper-Regular and Stepper-Hard} introduce challenging staircase navigation tasks that test agents' morphological adaptation capabilities for vertical terrain traversal. Both environments utilize the Crawler agent configuration in a 3D setting. Stepper-Regular features stairs with step width of 1.0 units and height of 0.4 units; Stepper-Hard increases the difficulty by elevating step height to 0.8 units while maintaining the same width. Unlike the standard Crawler environment, height termination constraints are removed to allow full exploration of vertical climbing capabilities. The reward function follows \crefnop{eq:velocity_reward_3d}, focusing solely on forward progression, thereby maintaining reward consistency across environments.

\newn{\textbf{Pusher} is a challenging 3D manipulation task designed to evaluate whether the co-design system can generate morphologies and control strategies that effectively interact with external objects. This environment reuses the Crawler agent configuration in a 3D setting. A rigid cube of side length $1.0$ m is placed in front of the agent and constrained to move in the horizontal $(x,y)$ plane. The observation space augments the agent state with the 3D relative position between the agent’s root body and the cube. The reward encourages forward displacement of the cube, penalizes lateral motion, and provides an auxiliary shaping term based on the proximity between the agent and the cube. A control-effort penalty identical to \crefnop{eq:velocity_reward_3d} is applied. Formally, the reward is

\begin{equation}\label{eq:push_reward}
r_t = \frac{x^{\text{cube}}_{t+1} - x^{\text{cube}}_t}{\tau} - \kappa \cdot \frac{\left|y^{\text{cube}}_{t+1} - y^{\text{cube}}_t \right|}{\tau}+\frac{1}{ 1 + \left| \boldsymbol{p}^{\text{cube}}_{t} - \boldsymbol{p}^{\text{root}}_{t} \right| }
-w \cdot \frac{1}{N} \sum_{j \in \mathcal{J}_t} ||\mathbf{u}_j^t||^2
\end{equation}

where $x^{\text{cube}}_t$ and $y^{\text{cube}}_t$ denote the cube’s forward and lateral positions at timestep $t$,
$\boldsymbol{p}^{\text{cube}}_t$ and $\boldsymbol{p}^{\text{root}}_t$ are the 3D positions of the cube and the agent’s root body,
and $\kappa = 0.1$ controls the lateral-motion penalty.}

\begin{figure}[hp]
    \centering
    \includegraphics[width=0.8\linewidth]{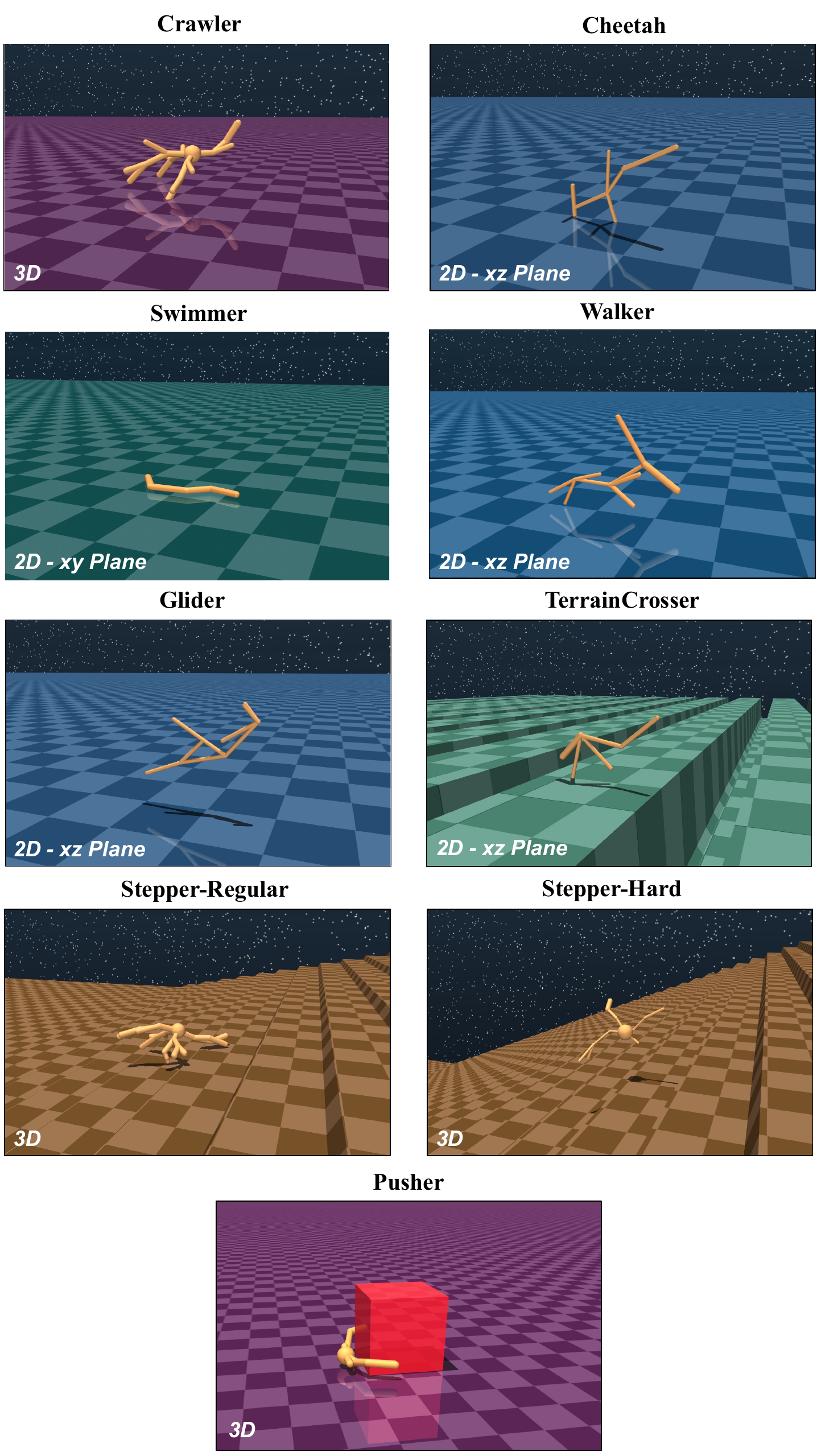}
    \caption{Visualization of the nine benchmark environments used in our experiments. Crawler, Stepper-Regular, Stepper-Hard, and Pusher are 3D tasks; others are 2D (x-z or x-y plane). The environments differ substantially in required morphology depth, symmetry, and limb arrangement, enabling evaluation on the generality of morphology–control co-design across locomotion and manipulation tasks.}
    \label{fig:env_photo}
\end{figure}

\clearpage

\section{Implementation Details}\label{sec_implementation_details}

\subsection{Computation Cost}\label{sec:Computation Cost}
Following standard reinforcement learning practices, we utilize distributed trajectory sampling across multiple CPU threads to enhance training efficiency. All models are trained with \newn{seven} random seeds on a high-performance computing cluster equipped with dual Intel\textregistered\ Xeon\textregistered\ processors (totaling 64 cores) and 24 NVIDIA A100 GPUs. Our implementation uses PyTorch 2.0.1 for all neural network models and MuJoCo 2.1.0 \citep{todorov2012mujoco} physics engine for the morphology-control simulation environments.
The training process is computationally efficient, requiring approximately 30 hours per model when utilizing 10 CPU cores alongside a single NVIDIA A100 GPU across all experimental environments.

\subsection{Hyperparameter Configuration}\label{sec:hyperparameters}

\textbf{Stackelberg PPO (Ours):} Our method introduces several Stackelberg-specific hyperparameters that require careful tuning. We conduct grid search over key parameters: Fisher information matrix regularization coefficient $\lambda \in \{0.5, 1.0, 5.0, 10.0\}$, maximum conjugate gradient (CG) steps $\in \{10, 20, 30\}$, and follower sampling steps per episode $\in \{6, 15, 30, 60, 100\}$ during leader update. For the underlying network architecture, we maintain the same configuration as BodyGen ~\citep{lu2025bodygen} without modification to ensure fair comparison, including their MoSAT transformer blocks and all network-related parameters. The final hyperparameter configuration, along with the underlying BodyGen network architecture we adopt, is detailed in Table~\ref{tab:stackelberg_hyperparameters}.

\textbf{BodyGen:} We follow their original implementation and released code, adopting the same hyperparameter configuration as reported in their work ~\citep{lu2025bodygen}. The settings include MoSAT Pre-LN normalization, SiLu activation, hidden dimension 64, policy learning rate 5e-5, value learning rate 3e-4, and other parameters as detailed in Table~\ref{tab:stackelberg_hyperparameters}.

\textbf{Transform2Act:} Following the original implementation\citep{yuan2022transform2act}, this baseline uses GraphConv layers, policy GNN size (64, 64, 64), policy learning rate 5e-5, value GNN size (64, 64, 64), value learning rate 3e-4, JSMLP activation Tanh, JSMLP size (128, 128, 128) for policy networks, and MLP size (512, 256) for value functions.

\textbf{NGE:} Based on the original implementations ~\citep{wang2019nge}, this evolutionary baseline uses 125 generations, population size 20, elimination rate 0.15, with GraphConv layers, Tanh activation, policy GNN size (64, 64, 64), policy MLP size (128, 128), value GNN size (64, 64, 64), value MLP size (512, 256), policy learning rate 5e-5, and value learning rate 3e-4.

\begin{table}[hp]
\centering
\caption{Hyperparameters of Stackelberg PPO adopted in all experiments}
\vspace{0.2cm}
\label{tab:stackelberg_hyperparameters}
\scalebox{0.95}{\begin{tabular}{l@{\hspace{1.0cm}}p{4cm}@{\hspace{1.5cm}}c@{\hspace{1.0cm}}}
\toprule
\textbf{Hyperparameter} & \textbf{Value} \\
\midrule
Fisher Regularization Coefficient $\lambda$ & 5.0 \\[2pt]
Maximum Conjugate Gradient Steps & 20 \\[2pt]
CG Relative Error Tolerance & $10^{-3}$ \\[2pt]
Follower Sampling Steps per Episode & 6 \\[2pt]
Gradient Normalization Ratio $\alpha$  & 1.0 \\[2pt]
\midrule
Structure Design Steps $T^{stru}$ & 5 \\[2pt]
Attribute Design Steps $T^{attr}$ & 1 \\[2pt]
Transformer Layer Normalization & Pre-LN \\[2pt]
Transformer Activation Function & SiLu \\[2pt]
FNN Scaling Ratio $r$ & 4 \\[2pt]
Transformer Blocks number (Policy Network) & 3 \\[2pt]
Transformer Blocks number (Value Network) & 3 \\[2pt]
Transformer Hidden Dimension (Policy Network) & 64 \\[2pt]
Transformer Hidden Dimension (Value Network) & 64 \\[2pt]
Optimizer & Adam \\[2pt]
Policy Learning Rate & 5e-5 \\[2pt]
Value Learning Rate & 3e-4 \\[2pt]
Clip Gradient Norm & 40.0 \\[2pt]
PPO Clip $\epsilon$ & 0.2 \\[2pt]
PPO Batch Size & 50000 \\[2pt]
PPO Minibatch Size & 2048 \\[2pt]
PPO Iterations Per Batch & 10 \\[2pt]
Training Epochs & 1000 \\[2pt]
Discount factor $\gamma$ & 0.995 \\[2pt]
GAE Parameter $\lambda_{\text{GAE}}$ & 0.95 \\[2pt]
\bottomrule
\end{tabular}}
\end{table}

\subsection{Gradient Normalization}\label{sec:impl-grad-norm}
Recall Eq.~\ref{eq_stackelberg_leader_gradient}, where the Stackelberg gradient for the leader decomposes into a \emph{direct} term and a \emph{response-induced} term. To avoid scale imbalance between these components, we scale the response-induced term by a data-dependent factor \(\alpha\) computed from the relative norms of the two terms (no extra hyper-parameters). Let
\(g_{\text{dir}} := \nabla_{\theta_L} J^L(\theta_L,\theta^F)\) and
\(g_{\text{resp}} := (\nabla_{\theta_L}\theta^F_{*}(\theta_L))^\top \nabla_{\theta_F} J^L(\theta_L,\theta^F)\).
We update the leader using
\begin{equation}
\widehat{\nabla_{\theta_L} J^L}
\;=\; g_{\text{dir}} \;-\; \alpha\, g_{\text{resp}}, 
\qquad
\alpha \;=\; \min\!\left(1,\; \frac{\|g_{\text{dir}}\|_2}{\|g_{\text{resp}}\|_2+\varepsilon}\right),
\label{eq:grad_balance_rule}
\end{equation}
where \(\varepsilon>0\) is a small numerical constant for stability. This rule guarantees \(\alpha\,\|g_{\text{resp}}\|_2 \le \|g_{\text{dir}}\|_2\), ensuring the follower-implicit component never dominates while preserving its direction. We use $\alpha = 1$ across all experiments for simplicity and consistency.

\clearpage

\section{Additional Results}\label{sec_additional_results}

\subsection{Visualization and Qualitative Results}\label{sec:qualitative_analysis}

Figure~\ref{fig:visualization} presents the diverse morphologies discovered by our Stackelberg PPO framework across different environments. The evolved body designs reveal the sophisticated structural complexity achieved by our approach, confirming that the Stackelberg game formulation enables continuous co-adaptation between morphology and control without premature convergence to suboptimal simple structures. Remarkably, these designs demonstrate emergent functional differentiation, developing specialized appendages for complementary tasks such as maintaining equilibrium versus providing propulsive forces.

\begin{figure}[hp]
    \centering
    \vspace{-2.0cm}
    \includegraphics[width=0.775\linewidth]{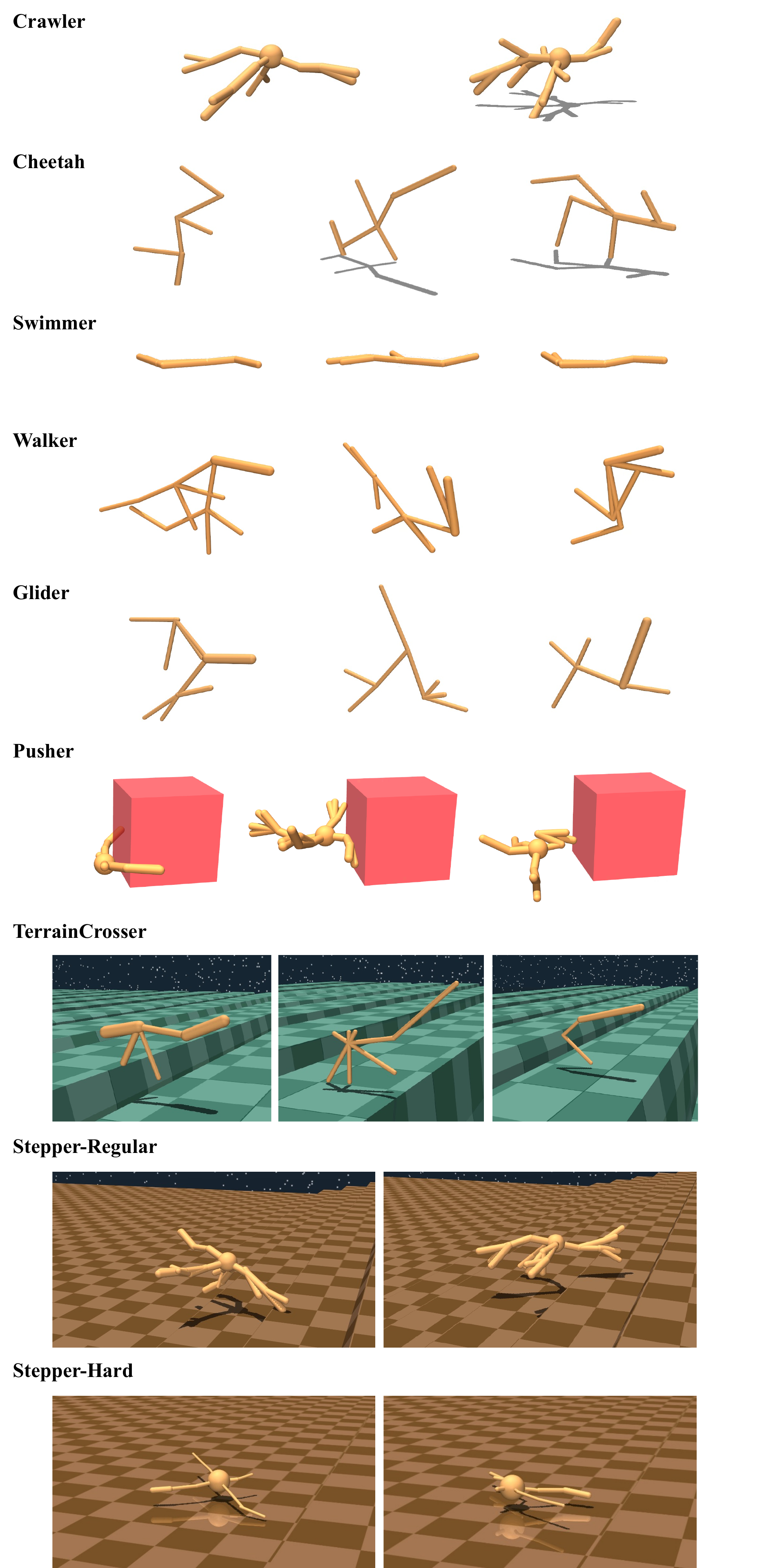} 
    \caption{Visualization of co-evolved body designs generated through our Stackelberg PPO.
    }
    \label{fig:visualization}
\end{figure}

As illustrated in the training curves presented in Fig.~\ref{fig:baseline}, we provide quantitative performance comparisons across all evaluated environments. Table~\ref{tab:main_results} summarizes the final episode rewards achieved by each method, presenting mean values and standard deviations computed over \newn{seven} random seeds. All baseline methods are configured using their optimal hyperparameter settings as reported in prior literature, with detailed specifications provided in Appendix~\ref{sec:hyperparameters}.

\begin{table}[h]
\centering
\caption{Performance comparison of Stackelberg PPO against baseline methods across morphology-control co-design environments. Results show mean episode rewards and standard deviations over \newn{seven random seeds}.}
\vspace{0.2cm}
\label{tab:main_results}
\scalebox{0.87}{%
\begin{tabular}{l@{\hspace{0.3cm}}r@{\hspace{0.8cm}}r@{\hspace{0.8cm}}r@{\hspace{0.8cm}}}
\toprule
\textbf{Methods} & \textbf{Crawler} & \textbf{Cheetah} & \textbf{Swimmer} \\
\midrule
\textbf{Stackelberg PPO (Ours)} & \bm{$11047.90_{\pm126.20}$} & \bm{$13514.94_{\pm653.62}$} & \bm{$1334.98_{\pm16.06}$} \\[3pt]
BodyGen~\citep{lu2025bodygen} & $9098.72_{\pm558.26}$ & $11575.87_{\pm640.65}$ & $1302.64_{\pm3.71}$ \\[3pt]
Transform2Act~\citep{yuan2022transform2act} & $3950.80_{\pm268.43}$ & $8297.90_{\pm825.02}$ & $737.90_{\pm21.04}$ \\[3pt]
NGE~\citep{wang2019nge} & $1482.45_{\pm524.97}$ & $2534.76_{\pm428.68}$ & $384.45_{\pm112.03}$ \\[3pt]
ESS~\citep{Sims1994} & $631.67_{\pm122.41}$ & $671.67_{\pm134.65}$ & $190.62_{\pm37.84}$ \\[3pt]

\midrule
\textbf{Methods} & \textbf{Walker-Hard} & \textbf{Glider-Hard} & \textbf{TerrainCrosser} \\
\midrule
\textbf{Stackelberg PPO (Ours)} & \bm{$13612.32_{\pm501.26}$} & \bm{$12414.50_{\pm498.53}$} & \bm{$4488.07_{\pm467.98}$} \\[3pt]
BodyGen~\citep{lu2025bodygen} & $11645.89_{\pm797.77}$ & $11049.95_{\pm468.44}$ & $4103.25_{\pm871.90}$ \\[3pt]
Transform2Act~\citep{yuan2022transform2act} & $4420.63_{\pm267.48}$ & $6120.62_{\pm1086.62}$ & $2364.63_{\pm473.80}$ \\[3pt]
NGE~\citep{wang2019nge} & $1504.55_{\pm553.15}$ & $2081.25_{\pm348.17}$ & $827.15_{\pm427.21}$ \\[3pt]
ESS~\citep{Sims1994} & $636.03_{\pm125.74}$ & $541.55_{\pm107.56}$ & $426.81_{\pm168.30}$ \\[3pt]

\midrule
\textbf{Methods} & \textbf{Pusher} & \textbf{Stepper-Regular} & \textbf{Stepper-Hard} \\
\midrule
\textbf{Stackelberg PPO (Ours)} & \bm{$3462.77_{\pm368.09}$} & \bm{$7215.20_{\pm449.02}$} & \bm{$6003.59_{\pm1027.77}$} \\[3pt]
BodyGen~\citep{lu2025bodygen} & $2779.95_{\pm509.18}$ & $4685.94_{\pm845.23}$ & $4685.41_{\pm800.09}$ \\[3pt]
Transform2Act~\citep{yuan2022transform2act} & $1015.28_{\pm247.09}$ & $2325.69_{\pm664.00}$ & $1192.39_{\pm544.20}$ \\[3pt]
NGE~\citep{wang2019nge} & $551.57_{\pm120.65}$ & $870.56_{\pm215.45}$ & $509.12_{\pm207.01}$ \\[3pt]
ESS~\citep{Sims1994} & $243.14_{\pm95.93}$ & $351.02_{\pm136.28}$ & $392.54_{\pm151.83}$ \\[3pt]
\bottomrule
\end{tabular}}
\end{table}

\newpage

\subsection{Extended Environment Evaluation}\label{sec:extended_environments}
\newn{Beyond the environments reported in the main paper, we also include results for the full sets of Glider and Walker tasks, each provided in three difficulty levels: regular, medium, and hard.} In the main text we present only the hard variants, as they offer the largest design spaces and naturally encompass the easier tiers, providing a clearer view of morphology--control co-design under less restrictive structural budgets. Here, we report the complete results mainly for completeness, and to illustrate how our method behaves under different morphology complexity limits. \newn{The six environments differ only in their structural allowances: Glider uses a maximum tree depth of 3 and Walker a maximum depth of 4, while the regular/medium/hard variants correspond to maximum child counts of $\{1, 2, 3\}$. All other environment settings are identical.}

Figure~\ref{fig:addtional_env} presents the training curves and the final generated morphologies across all six tasks. \newn{Our method outperforms the baseline across all difficulty levels, with the performance gap increasing as the design space becomes larger and more challenging. Interestingly, the three difficulty tiers within each environment achieve similar final performance, suggesting that overall task success is not strictly tied to structural complexity: even simpler configurations can discover diverse, correct, and high-quality locomotion patterns.}

\begin{figure}[p]
    \centering
    \vspace{5mm}
        \includegraphics[width=0.95\linewidth]{figures/figure_exp_legend.pdf}
        \\[2mm]
    \begin{tabular}{c@{\hspace{0mm}}c@{\hspace{0mm}}c}
        \includegraphics[width=0.32\linewidth]{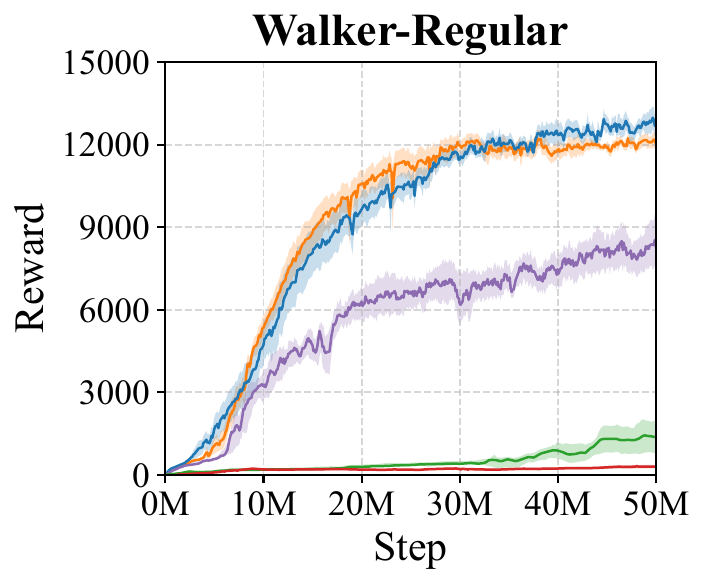} &
        \includegraphics[width=0.32\linewidth]{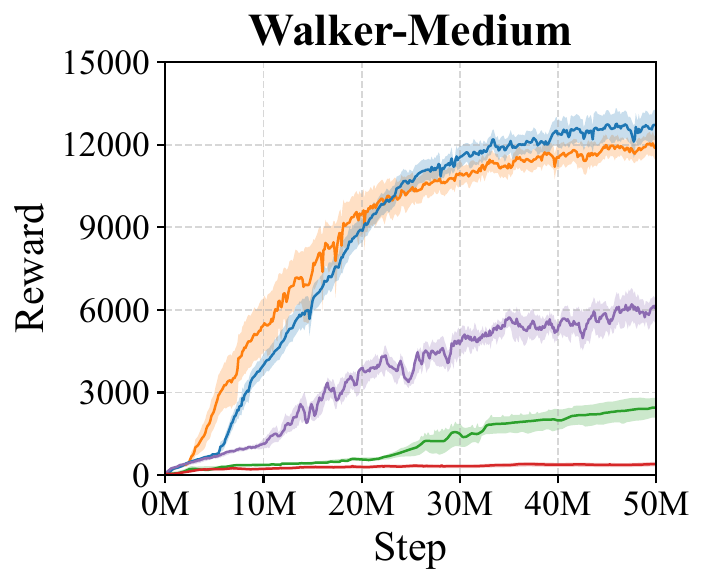} &
        \includegraphics[width=0.32\linewidth]{figures/Walker-Hard.pdf} \\
        \includegraphics[width=0.32\linewidth]{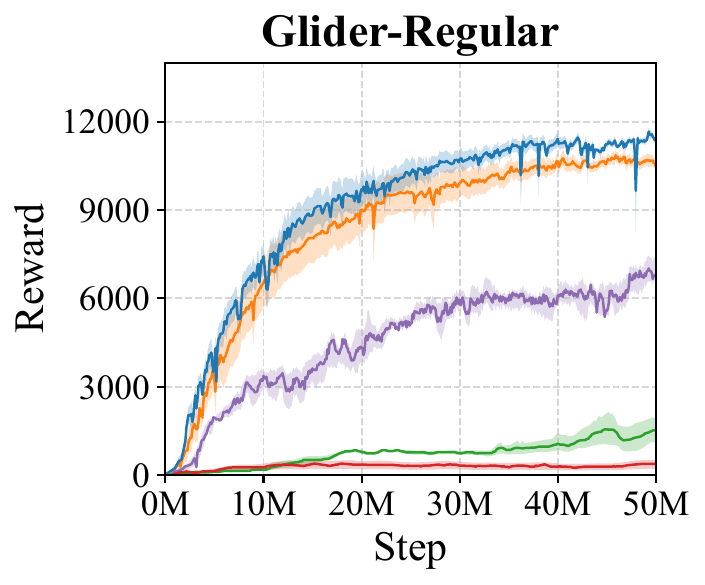} &
        \includegraphics[width=0.32\linewidth]{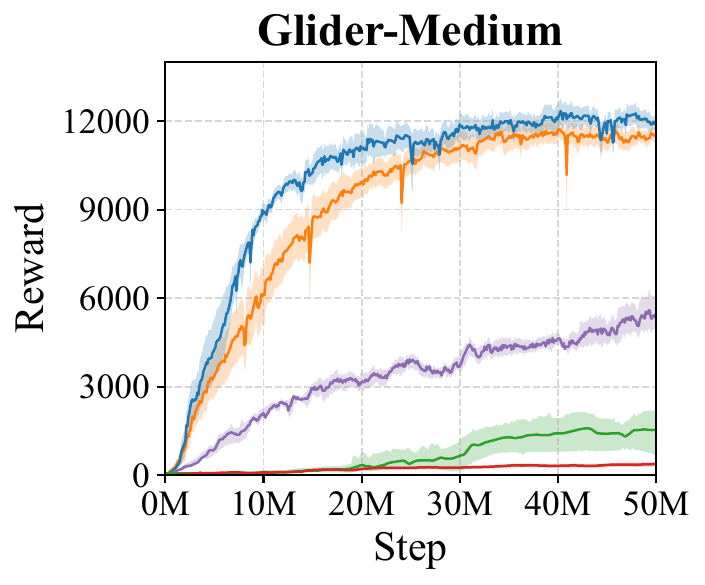}  &
        \includegraphics[width=0.32\linewidth]{figures/Glider-Hard.pdf}\\ 
    \end{tabular}
    (a) \\[3mm]
    \hspace{0.76cm}\includegraphics[width=0.88\linewidth]{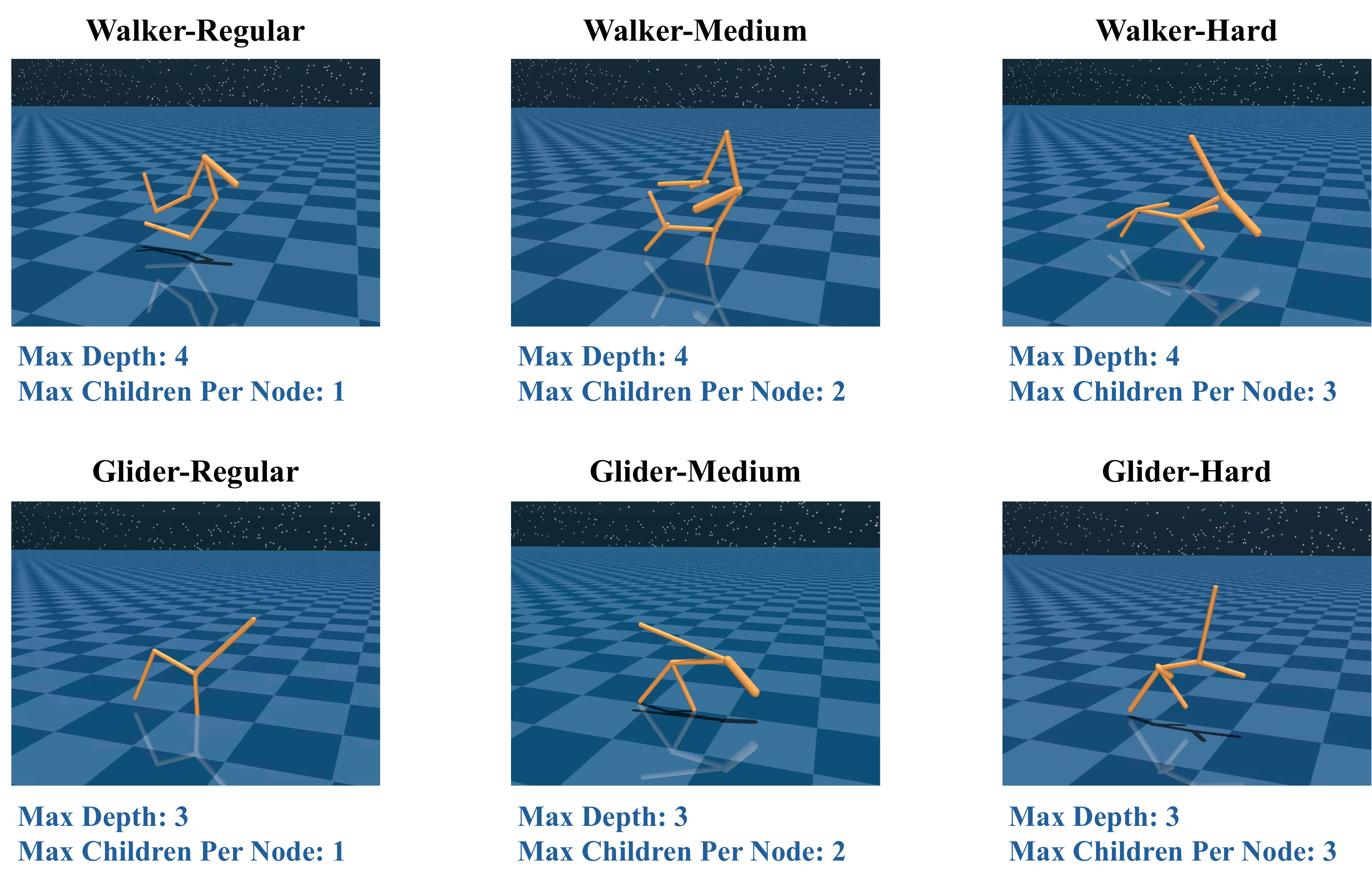}\\
    (b) \\[1mm]
    \caption{Extended evaluation on Glider and Walker environments under different morphology complexity budgets.(a) Training curves for the regular, medium, and hard variants of each environment. (b) Final generated morphologies under each complexity tier. } 
    \label{fig:addtional_env}
\end{figure}

\subsection{Additional Ablation Studies and Mechanism Analysis}\label{sec:appendix_ablation}

\newn{\textbf{Quantitative Results for PPO Clipping Sensitivity.} To complement the qualitative trends shown in \Cref{fig_ablation_1_clip}, we provide the full quantitative statistics for the clipping sweep experiment. The purpose of this analysis is to examine how the surrogate objective behaves under different clipping thresholds and to identify when the underlying assumptions of policy-gradient theory remain valid. From a theoretical perspective, large policy updates can cause the surrogate objective to diverge from the true return, leading to instability. 
PPO addresses this by bounding the likelihood ratio \mbox{\(\pi_\theta(a\mid s)/\pi_{\theta_0}(a\mid s)\)} within $[1-\epsilon,\,1+\epsilon]$, which prevents overly aggressive updates and ensures that the surrogate remains a reliable approximation. In this experiment, we vary the clipping parameter $\epsilon$ and measure three quantities that together characterize the stability of the update rule: (1) average performance, (2) likelihood-ratio constraint violations, and (3) KL divergence. Table~\ref{tab:clip-ablation} reports the full numerical results corresponding to the curves shown in the main text.}

\begin{table}[htb]
\centering
\caption{Sensitivity of Stackelberg PPO to the clipping threshold $\epsilon$.}
\label{tab:clip-ablation}
\resizebox{\textwidth}{!}{
\begin{tabular}{lccc}
\toprule
\textbf{Clipping Parameter} & \textbf{Performance} & 
\textbf{Likelihood Ratio Violations (\%)} &
\textbf{Average KL Divergence} \\
\midrule
$\epsilon = 0.1$ & $4934.52_{\pm 646.40}$ & $14.82_{\pm 0.55}$ & $0.0030_{\pm 0.0006}$ \\
$\epsilon = 0.2$ & $7215.20_{\pm 449.02}$ & $13.39_{\pm 0.83}$ & $0.0196_{\pm 0.0025}$ \\
$\epsilon = 0.4$ & $\mathbf{7907.01_{\pm 208.02}}$ & $9.92_{\pm 0.94}$ & $0.0343_{\pm 0.0153}$ \\
$\epsilon = 0.6$ & $4778.18_{\pm 407.84}$ & $9.10_{\pm 1.30}$ & $0.0665_{\pm 0.0188}$ \\
$\epsilon = 0.8$ & $2656.92_{\pm 503.93}$ & $7.02_{\pm 0.81}$ & $0.1340_{\pm 0.0388}$ \\
No Clipping       & $1233.26_{\pm 443.98}$ & $0$ & $1.7726_{\pm 0.1539}$ \\
\bottomrule
\end{tabular}
}
\end{table}

\newn{\textbf{Ablation on SID Components and PPO Clipping.}
To further disentangle the contributions of our Stackelberg Implicit Differentiation (SID) estimator and PPO clipping, we conduct an additional controlled ablation. Specifically, we evaluate three variants under the same phase-separated, non-differentiable Stackelberg setup:

\begin{itemize}
    \item \textbf{SID+PPO (full)} — our complete method using both SID and PPO clipping,
    \item \textbf{PPO-only} — standard PPO updates without SID,
    \item \textbf{SID-only} — applying our SID estimator without PPO clipping.
\end{itemize}

This ablation assesses whether (i) our SID estimator meaningfully improves leader optimization and (ii) PPO clipping is required to stabilize the induced surrogate objectives.  
As shown in Table~\ref{tab:sid-ablation}, both components provide clear performance gains, and the full algorithm consistently achieves the highest returns across four environments.}

\begin{table}[htb]
\centering
\caption{Ablation studies on the components of our SID estimator and PPO clipping, evaluated under the same phase-separated, non-differentiable Stackelberg setting.}
\label{tab:sid-ablation}
\resizebox{0.89\textwidth}{!}{
\begin{tabular}{lccc}
\toprule
\textbf{Environment} & \textbf{SID+PPO (full)} & \textbf{PPO-only (no SID)} & \textbf{SID-only (no clipping)} \\
\midrule
Stepper-Regular & $\mathbf{7215.20_{\pm 449.02}}$ & $4685.94_{\pm 845.23}$ & $1257.33_{\pm 530.25}$ \\
Crawler         & $\mathbf{11047.90_{\pm 126.20}}$ & $9098.72_{\pm 558.26}$ & $35.77_{\pm 12.25}$ \\
Cheetah         & $\mathbf{13514.94_{\pm 653.62}}$ & $11575.87_{\pm 640.65}$ & $472.89_{\pm 77.40}$ \\
Glider          & $\mathbf{12414.50_{\pm 498.53}}$ & $11049.95_{\pm 468.44}$ & $566.81_{\pm 89.96}$ \\
\bottomrule
\end{tabular}
}
\end{table}

\begin{figure}[p]
    \centering
    \includegraphics[width=1.1\linewidth]{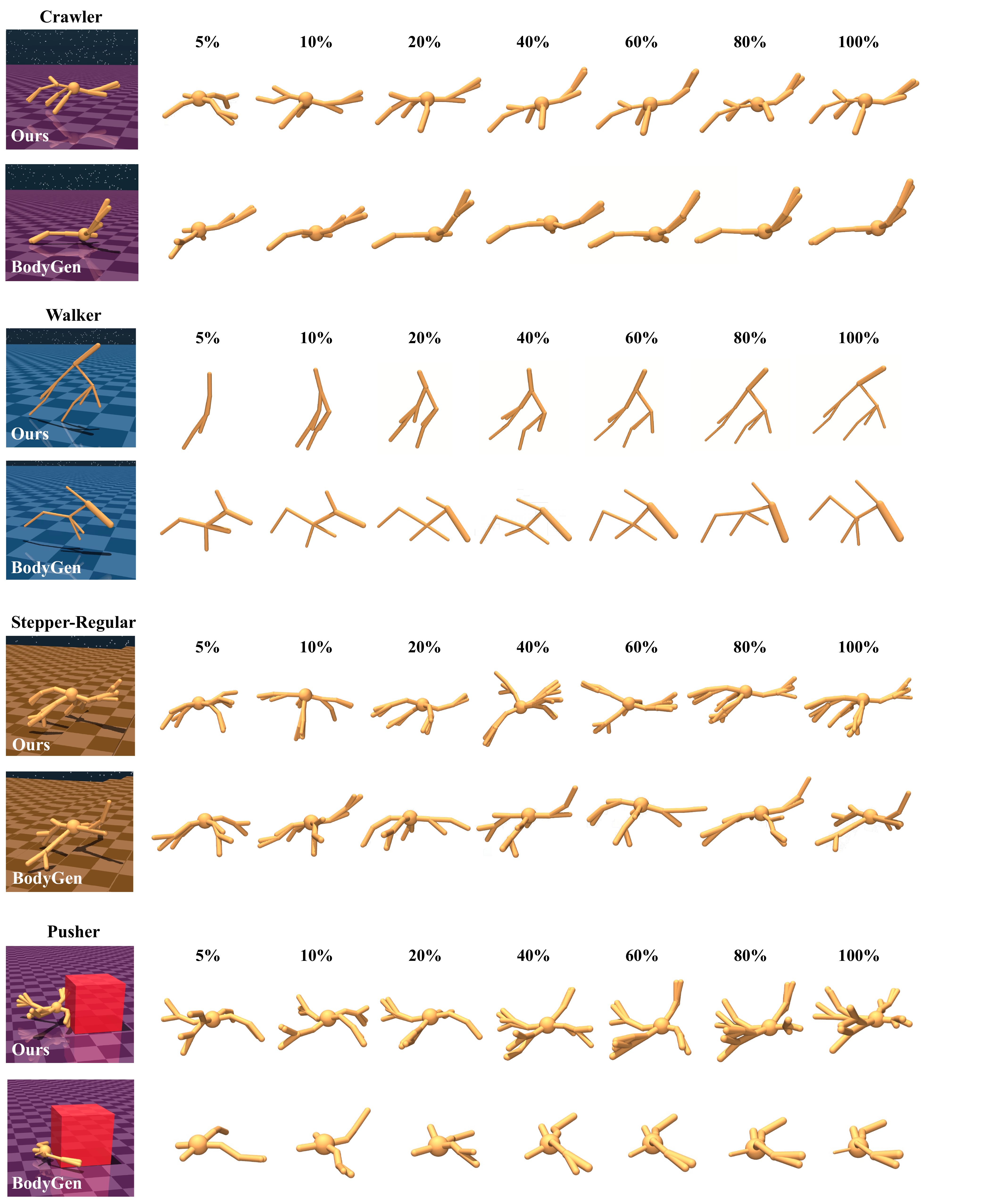}
    \caption{Comparison of morphology evolution between Stackelberg PPO (ours) and Standard PPO (BodyGen) across four environments. BodyGen tends to collapse early into low-complexity designs, while Stackelberg PPO continues exploring structurally richer morphologies, yielding more capable final structures.}
    \label{fig:evolution_compare}
\end{figure}

\newpage
\newn{\textbf{Effect of Leader Gradients on Controller Adaptation.}
To better understand the mechanism behind Stackelberg PPO’s performance gains, we analyze how morphology updates interact with controller adaptation. Specifically, we investigate whether the improved performance originates from faster controller adaptation under changing morphologies, or from more informative leader gradients that guide the structure search more effectively. To isolate these effects, we extract \emph{ten intermediate checkpoints} from a BodyGen training run (spanning 10\%–100\% of training progress). From each checkpoint, we initialize both methods with \emph{identical} morphology, controller parameters, and optimizer state, and then train each method for a \emph{single epoch}. This setup ensures that any difference in performance improvement reflects differences in the leader update rule, rather than controller initialization or long-term training.

As shown in Table~\ref{tab:one-epoch-ablation}, Stackelberg PPO consistently achieves a larger one-epoch performance improvement compared to standard PPO (BodyGen). This indicates that Stackelberg PPO does not rely on faster controller adaptation; instead, it provides more informative leader gradients that enable the morphology to improve even when the controller is only partially adapted. These results highlight the role of the Stackelberg update in stabilizing and accelerating the joint morphology–control optimization process.}

\begin{table}[htb]
\centering
\caption{Average performance change after one epoch of training from the same checkpoint model, averaged over 10 checkpoints and 7 seeds, evaluated on Stepper-Regular.}
\label{tab:one-epoch-ablation}
\vspace{2mm}
\resizebox{0.82\textwidth}{!}{   
\begin{tabular}{lcc}
\toprule
 & \textbf{Stackelberg PPO (Ours)} & \textbf{BodyGen (PPO)} \\
\midrule
Performance Change After 1 Epoch & $\mathbf{+0.392_{\pm 0.075}\%}$ & $+0.224_{\pm 0.043}\%$ \\
\bottomrule
\end{tabular}
}

\end{table}

\newn{We further provide a visual comparison of morphology evolution to illustrate this effect (Figure~\ref{fig:evolution_compare}). Across multiple environments, BodyGen tends to converge early to low-complexity designs, which restricts later improvements even as the controller becomes stronger. In contrast, Stackelberg PPO continues meaningful structural exploration throughout training, enabling richer and more adaptive morphologies. These qualitative trajectories align with the adaptation results above, reinforcing that the Stackelberg update produces more informative and better-aligned structural gradients.}

\newpage
\subsection{Sample and Training Efficiency}\label{sec:efficiency}

\newn{\textbf{Sample Efficiency}. To assess the efficiency of different co-design algorithms, we measure how many environment interaction samples are required to reach a predefined performance threshold. As reported in Table~\ref{tab:sample_efficiency}, Stackelberg PPO consistently converges with substantially fewer samples across all environments. On average, it reaches the threshold with approximately -39\% fewer samples than BodyGen. In contrast, Transform2Act, NGE, and ESS fail to reach any threshold within the available training budget. These results highlight the advantage of explicitly modeling morphology--control coupling via a Stackelberg formulation, enabling faster convergence and more stable co-design dynamics.}

\begin{table}[htb]
\centering
\caption{Sample efficiency comparison: number of samples (in millions) required to reach the performance threshold. }
\label{tab:sample_efficiency}
\vspace{2mm}
\begin{adjustbox}{max width=\textwidth}
\begin{tabular}{lcccccc}
\toprule
\textbf{Environment} & \textbf{Threshold} & \textbf{Stackelberg PPO} & \textbf{BodyGen} & \textbf{Transform2Act} & \textbf{NGE} & \textbf{ESS} \\
\midrule
Crawler         & $\mathit{9000}$  & $\mathbf{25.8}$ & $47.2$ & $\infty$ & $\infty$ & $\infty$ \\
Cheetah         & $\mathit{11000}$ & $\mathbf{19.2}$ & $42.1$ & $\infty$ & $\infty$ & $\infty$ \\
Swimmer         & $\mathit{1200}$  & $\mathbf{14.8}$ & $17.0$ & $\infty$ & $\infty$ & $\infty$ \\
Walker-Hard     & $\mathit{10000}$ & $\mathbf{18.1}$ & $30.3$ & $\infty$ & $\infty$ & $\infty$ \\
Glider-Hard     & $\mathit{11000}$ & $\mathbf{23.6}$ & $49.7$ & $\infty$ & $\infty$ & $\infty$ \\
TerrainCrosser  & $\mathit{3500}$  & $\mathbf{23.9}$ & $33.8$ & $\infty$ & $\infty$ & $\infty$ \\
Pusher          & $\mathit{2500}$  & $\mathbf{29.3}$ & $39.1$ & $\infty$ & $\infty$ & $\infty$ \\
Stepper-Regular & $\mathit{4500}$  & $\mathbf{18.5}$ & $40.4$ & $\infty$ & $\infty$ & $\infty$ \\
Stepper-Hard    & $\mathit{4500}$  & $\mathbf{27.2}$ & $43.1$ & $\infty$ & $\infty$ & $\infty$ \\
\bottomrule
\end{tabular}
\end{adjustbox}
\end{table}

\newn{\paragraph{Training Efficiency.}
Despite incorporating a bilevel update, Stackelberg PPO introduces only modest computational overhead. The method avoids explicit Hessian construction or inversion; instead, the conjugate-gradient step relies solely on efficient Hessian--vector products (approximately one backward pass). As a result, its cost scales \textit{linearly} with morphology and controller dimensionality, rather than quadratically. Moreover, rollout collection dominates overall computation in all co-design settings, so the additional optimization cost has limited influence on total training time. Table~\ref{tab:train_cost_by_space} summarizes the training time under different morphology/control design spaces. Increasing the structural search space does not incur superlinear overhead, confirming the scalability of Stackelberg PPO. The comparison with ES-based approaches in Table~\ref{tab:wallclock} further shows that ES reduces wall-clock time only when substantial CPU parallelization is available, while its resulting designs remain far less effective than those produced by PPO-based methods.

Overall, our method achieves strong efficiency--performance trade-offs:

\begin{itemize}
    \item Compared to BodyGen, Stackelberg PPO achieves substantially better sample efficiency by requiring \textbf{-39\%} fewer samples to reach the performance threshold while also obtaining \textbf{+20.66\%} higher final scores. In terms of wall-clock time, the difference between the two methods is modest \textbf{(+13\%)}, keeping the overall training cost comparable.

    \item Compared to ES-based baselines, although ESS attains shorter wall-clock time using \textbf{6$\times$} more CPU cores (64 cores), its performance is extremely poor, achieving only a \textbf{0.16} fraction of our method’s performance.

\end{itemize}}

\begin{table}[htb]
\centering
\caption{Wall-clock training time comparison across environments with different design space sizes (10 CPU cores + A100 GPU).}
\label{tab:train_cost_by_space}
\vspace{2mm}
\begin{adjustbox}{max width=\textwidth}
\begin{tabular}{lccc c}
\toprule
\textbf{Environment} & \textbf{Space Size (mean)} & \textbf{Space Size (max)} & \textbf{Stackelberg PPO} & \textbf{BodyGen (PPO)} \\
\midrule
TerrainCrosser  & $4.50_{\pm 0.76}$  & $14$ & $33.88_{\pm 0.42}$ & $27.87_{\pm 1.27}$ \\
Swimmer         & $5.50_{\pm 0.76}$  & $14$ & $32.64_{\pm 0.74}$ & $28.13_{\pm 0.45}$ \\
Cheetah         & $6.57_{\pm 0.90}$  & $14$ & $32.96_{\pm 0.67}$ & $29.52_{\pm 1.03}$ \\
Glider-Hard     & $7.33_{\pm 1.49}$  & $9$  & $32.93_{\pm 0.71}$ & $28.93_{\pm 1.50}$ \\
Walker-Hard     & $8.43_{\pm 1.50}$  & $27$ & $32.54_{\pm 0.62}$ & $30.21_{\pm 1.22}$ \\
Stepper-Hard    & $9.57_{\pm 0.90}$  & $29$ & $32.70_{\pm 1.01}$ & $30.25_{\pm 2.06}$ \\
Pusher          & $14.33_{\pm 4.07}$ & $29$ & $33.41_{\pm 0.82}$ & $29.24_{\pm 1.12}$ \\
Stepper-Regular & $16.40_{\pm 4.69}$ & $29$ & $32.83_{\pm 0.87}$ & $30.17_{\pm 1.41}$ \\
Crawler         & $18.25_{\pm 1.29}$ & $29$ & $33.73_{\pm 0.64}$ & $30.54_{\pm 1.33}$ \\
\bottomrule
\end{tabular}
\end{adjustbox}
\end{table}

\begin{table}[htb]
\centering
\caption{Wall-clock training time across methods. NGE results are shown under both 10 CPU cores and 64 cores to illustrate parallelization effects.}
\label{tab:wallclock}
\vspace{2mm}

{\fontsize{8.5}{8.5}\selectfont   

\begin{tabular}{lcccc}
\toprule
 &
\textbf{Stackelberg PPO (10 cores)} &
\textbf{BodyGen (10 cores)} &
\textbf{NGE (10 cores)} &
\textbf{NGE (64 cores)} \\
\midrule
Wall-clock Time &
$33.07_{\pm 0.49}$ h &
$29.43_{\pm 0.97}$ h &
$45.16_{\pm 3.72}$ h &
$13.52_{\pm 1.52}$ h \\
\bottomrule
\end{tabular}

}

\end{table}

\newpage
\subsection{Morphology Evolution Process Visualization}\label{sec:evolve}

\newn{Figure~\ref{fig:evolution_process} showcases the morphological evolution trajectories discovered by our Stackelberg PPO framework across diverse locomotion tasks and environments. Each row represents a distinct embodiment (Crawler, Cheetah, Swimmer, Glider, Stepper-Regular, Stepper-Hard, Terrain Crosser, Walker, and Pusher), and the columns depict the progressive morphological changes from early evolution (5\%) through convergence (100\%). The evolution demonstrates emergent specialization of appendages for task-specific locomotion requirements.}

\begin{figure}[p]
    \vspace{-1.8cm}
    \centering
    \includegraphics[width=1.1\linewidth]{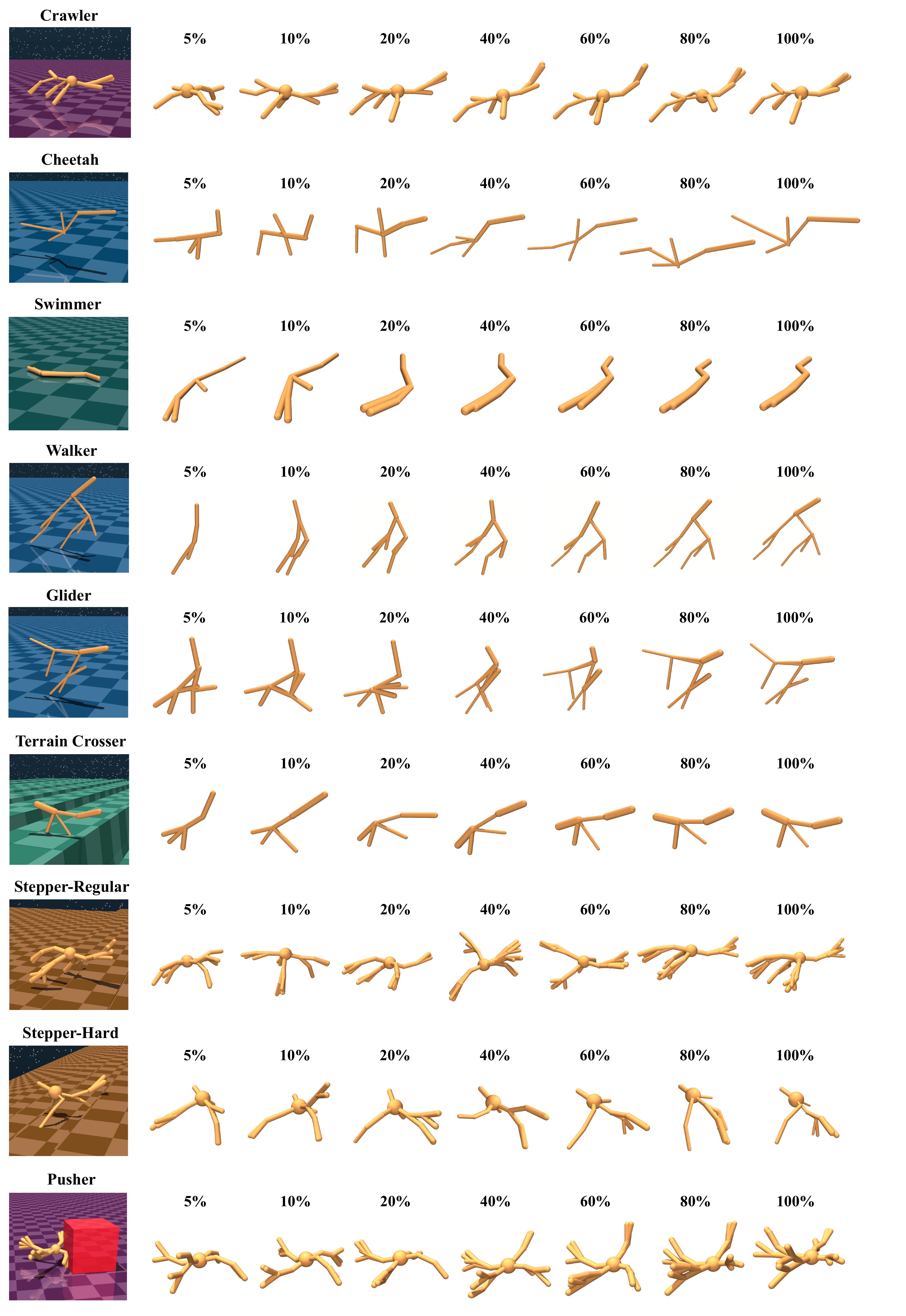}
    \caption{Morphological evolution trajectories across eight environments. Each row represents a distinct robot embodiment, with columns showing progressive stages of morphological adaptation from 5\% to 100\% training progress.}
    \label{fig:evolution_process}
\end{figure}

\clearpage

\subsection{Results under Realistic Co-Design Constraints}\label{sec:appendix_constraints}

\newn{In the main paper, we adopt a unified forward-progress reward to ensure fair comparison across algorithms and to avoid introducing task-specific reward biases. While this setup is standard and suitable for benchmarking algorithmic contributions, real-world robot design is often shaped by additional engineering constraints. To better understand the practical co-design behavior of Stackelberg PPO, we further evaluate five common realistic constraints under an identical crawler task and training budget.

These constraints span both morphology- and control-level considerations, including power usage, manufacturability, torque limits, payload handling, and robustness. Several of these factors are already captured by our experimental setup:

\begin{itemize}
    \item Power usage:  
    Energy expenditure is discouraged through a small effort penalty included in the reward (Equation~\cref{eq:velocity_reward_3d}).
    
    \item Torque limits:  
    Joint torque capacity is implicitly limited by bounding the ``allowable torque'' attribute during morphology design.

    \item Manufacturability:  
    Physical realizability is enforced by constraining morphology-editing attributes such as limb length, joint count, and topology depth (see Appendix~\ref{sec_environment_details}).

    \item Robustness:  
    Robustness naturally emerges from the evaluation protocol: each morphology--controller pair is scored using multiple rollouts, causing non-robust designs to yield lower averaged returns.
\end{itemize}

To complement these built-in constraints, we further provide more detailed quantitative experiments that isolate and measure their individual effects. 

\textbf{Power Constraint.} We evaluate performance under various power penalty coefficients ($0.001$, $0.01$, $0.1$), extending beyond the mild penalty ($0.0001$) used in the main experiments. Table~\ref{tab:control-effort} reports the detailed performance and control-effort statistics under each penalty coefficient. The generated morphologies are visualized in Figure~\ref{fig:control-effort}. Increasing the penalty produces three consistent effects:

\begin{itemize}
    \item Impact on Performance (velocity reward).  
    As the penalty coefficient increases, both methods experience reduced forward-progress reward. However, Stackelberg PPO exhibits a substantially smaller degradation, maintaining stronger performance across all tested settings.

    \item Impact on Control Effort.  
    Larger penalties encourage more conservative actuation strategies for both approaches, reflected by the lower penalty terms in the table.

    \item Impact on Morphology--Control Co-Design.
    With stronger penalties, the optimized morphologies tend to adopt shorter, thicker, and more symmetric limbs, paired with low-torque gaits characteristic of energy-efficient locomotion.
\end{itemize}}

\begin{figure}[h]
    \centering
    \vspace{3mm}
    \includegraphics[width=0.80\linewidth]{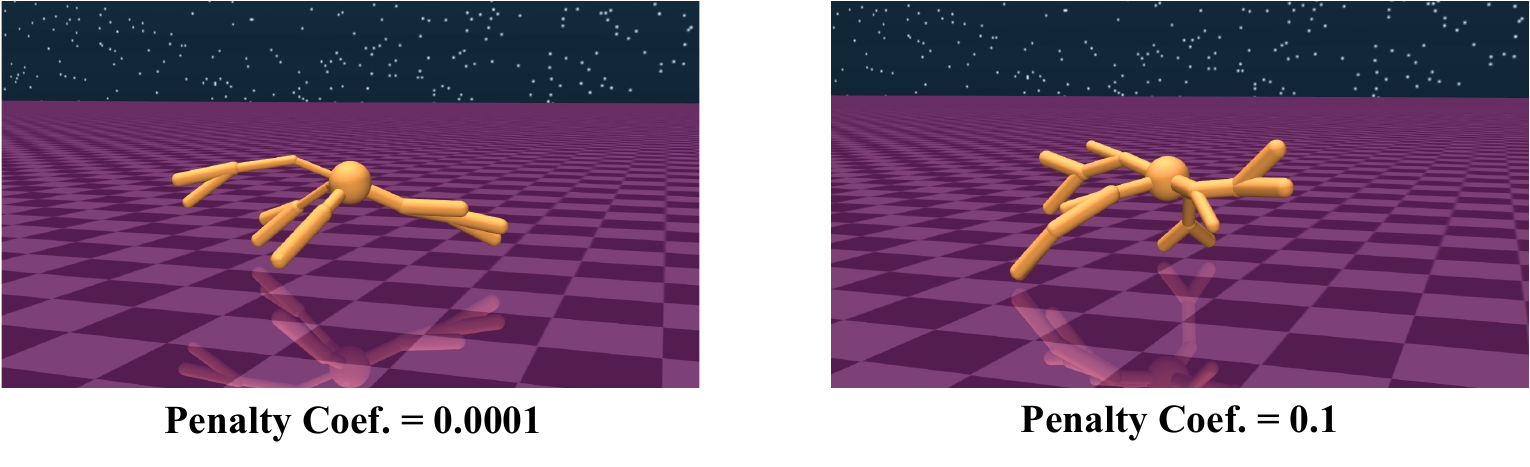}
    \caption{
        Power constraint under different penalty coefficients. As the penalty increases, the co-designed morphologies transition toward shorter, thicker, and more symmetric limbs.
    }
    \label{fig:control-effort}
\end{figure}

\begin{table}[h]
\centering
\caption{Power-constraint setting: performance and power penalties under different penalty coefficients.}
\resizebox{0.95\textwidth}{!}{
\begin{tabular}{lcccc}
\toprule
\multirow{2}{*}{\makecell{\textbf{Penalty}\\\textbf{Coef.}}} &
\multicolumn{2}{c}{\textbf{Performance}} &
\multicolumn{2}{c}{\textbf{Power Penalty}} \\
\cmidrule(lr){2-3} \cmidrule(lr){4-5}
& \textbf{Stackelberg PPO (ours)} & \textbf{BodyGen} 
& \textbf{Stackelberg PPO (ours)} & \textbf{BodyGen} \\
\midrule
$0.0001$ &
$\mathbf{11047.90_{\pm 126.20}}$ & $9098.72_{\pm 558.26}$ &
$5631.42_{\pm 674.03}$ & $2745.84_{\pm 284.55}$ \\
$0.001$  &
$\mathbf{10191.15_{\pm 371.81}}$ & $7501.61_{\pm 671.33}$ &
$5342.16_{\pm 584.25}$ & $5948.16_{\pm 254.84}$ \\
$0.01$   &
$\mathbf{9853.19_{\pm 229.37}}$  & $8304.00_{\pm 497.56}$ &
$1582.48_{\pm 697.61}$ & $468.12_{\pm 516.33}$ \\
$0.1$    &
$\mathbf{10585.25_{\pm 146.80}}$ & $8974.48_{\pm 574.29}$ &
$25.50_{\pm 22.27}$ & $26.34_{\pm 23.64}$ \\
\bottomrule
\end{tabular}
}
\label{tab:control-effort}
\end{table}

\newpage
\newn{\textbf{Manufacturability Constraint.} A manufacturability cost penalty is applied by incorporating two components into the leader objective: structural complexity is measured by the number of body elements, and material cost is defined as the total mass. Table \ref{tab:complexity-material} summarizes the resulting performance and morphology characteristics under different penalty coefficients. The trends are consistent with those observed in the power constraint experiments in (i): our method consistently achieves better reward–cost tradeoffs across all penalty levels.
As shown in Figure ~\ref{fig:material}, the generated morphologies are compact than the original structure, with fewer distal branches, shorter limbs, and mass concentrated near the root. These structures exhibit lower inertia and more efficient force transmission, supporting stable forward locomotion under cost constraints.}

\begin{figure}[htb]
\vspace{2mm}
    \centering
    \includegraphics[width=\linewidth]{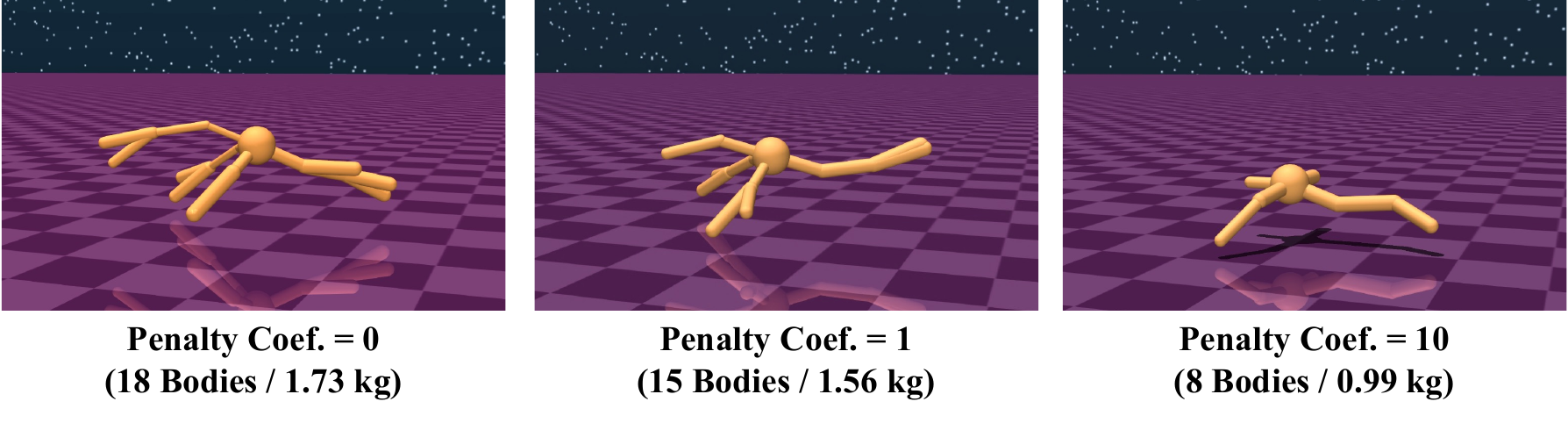}
    \caption{Manufacturability constraint under different penalty coefficients. Higher penalties on structure complexity and material mass encourage designs with fewer body elements, reduced branching, and mass concentrated near the root, producing compact morphologies that are easier to fabricate.
    }
    \label{fig:material}
\end{figure}

\begin{table}[h]
\centering
\caption{Manufacturability constraint setting: performance, morphology complexity, and material cost under different penalty levels.}
\resizebox{\textwidth}{!}{
\begin{tabular}{lcccccc}
\toprule
\multirow{2}{*}{\makecell{\textbf{Penalty}\\\textbf{Coef.}}} &
\multicolumn{2}{c}{\textbf{Performance}} &
\multicolumn{2}{c}{\textbf{Morphology Complexity}} &
\multicolumn{2}{c}{\textbf{Material Cost}} \\
\cmidrule(lr){2-3} \cmidrule(lr){4-5} \cmidrule(lr){6-7}
& {\makecell{\textbf{Stackelberg}\\\textbf{ PPO (ours)}}} & \textbf{BodyGen} 
& {\makecell{\textbf{Stackelberg}\\\textbf{ PPO (ours)}}} & \textbf{BodyGen} 
& {\makecell{\textbf{Stackelberg}\\\textbf{ PPO (ours)}}} & \textbf{BodyGen} \\
\midrule
$0$   &
$\mathbf{11047.90_{\pm126.20}}$ & $9098.72_{\pm558.26}$ &
$16.40_{\pm2.45}$ & $13.67_{\pm2.08}$ &
$1.71_{\pm0.23}$ & $1.57_{\pm0.32}$ \\
$1$   &
$\mathbf{7892.93_{\pm349.84}}$  & $6531.37_{\pm437.26}$ &
$13.67_{\pm2.03}$ & $9.67_{\pm1.61}$  &
$1.59_{\pm0.18}$ & $1.32_{\pm0.16}$ \\
$10$  &
$\mathbf{6825.47_{\pm303.09}}$  & $5372.10_{\pm364.79}$ &
$8.25_{\pm0.91}$  & $7.50_{\pm1.24}$  &
$0.94_{\pm0.08}$ & $0.93_{\pm0.10}$ \\
\bottomrule
\end{tabular}
}
\label{tab:complexity-material}
\end{table}

\newpage
\newn{\textbf{Torque Limits Constraint.} A torque-limit penalty is incorporated by enforcing a 50 N·m cap on all joints and adding a proportional violation cost to the leader objective. Table~\ref{tab:torque} summarizes the quantitative results, and the morphological effects are shown in 
Figure~\ref{fig:torque-figure}. As in the manufacturability and control-effort settings, our method achieves stronger reward–cost tradeoffs when the controller retains sufficient expressiveness (penalty = 0.01). Under the stronger penalty (0.1), the tightened actuation constraints reduce the feasible morphology space for all methods, narrowing the performance gap.}

\begin{figure}[h]
    \centering
    \vspace{3mm}
    \includegraphics[width=\textwidth]{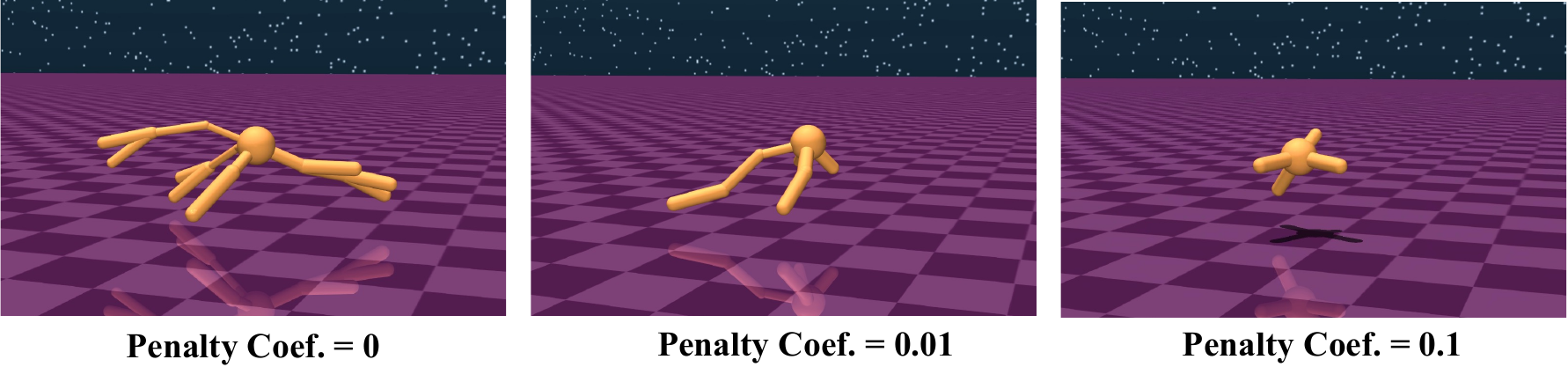}
    \caption{Torque limits constraint under different penalty coefficients. Tighter actuation limits lead to noticeably simpler and more compact structures, with shorter limbs and reduced distal branching.
    }
    \label{fig:torque-figure}
\end{figure}

\begin{table}[htb]
\centering
\caption{Torque limits constraint: performance and torque-violation penalties under different torque-penalty coefficients.}
\resizebox{\textwidth}{!}{
\begin{tabular}{lcccc}
\toprule
\multirow{2}{*}{\makecell{\textbf{Penalty}\\\textbf{Coef.}}} &
\multicolumn{2}{c}{\textbf{Performance}} &
\multicolumn{2}{c}{\textbf{Limit Violation Penalty}} \\
\cmidrule(lr){2-3} \cmidrule(lr){4-5}
& \textbf{Stackelberg PPO (ours)} & \textbf{BodyGen} 
& \textbf{Stackelberg PPO (ours)} & \textbf{BodyGen} \\
\midrule
$0.01$ &
$\mathbf{7893.42_{\pm 84.62}}$ & $6311.75_{\pm 98.31}$ &
$20210.50_{\pm 6503.22}$ & $11350.45_{\pm 5338.31}$ \\
$0.1$  &
$\mathbf{3133.01_{\pm 70.44}}$ & $3121.80_{\pm 54.03}$ &
$1106.45_{\pm 64.69}$ & $899.35_{\pm 49.40}$ \\
\bottomrule
\end{tabular}
}
\label{tab:torque}
\end{table}

\newn{\textbf{Payload Constraint.} To evaluate the agent's ability to maintain locomotion under additional load, we attach an extra mass to the root link to serve as a payload. During training, the payload value is randomized within a fixed range (0-0.6 kg) to promote generalization. After training, we evaluate each method under three fixed payload levels (0.2 kg, 0.4 kg, 0.6 kg). As shown in Table~\ref{tab:payload}, Stackelberg PPO consistently maintains higher forward progress across all payload settings. Figure ~\ref{fig:payload} further compares morphologies trained with and without payload. Under load, the evolved structures become more symmetric and better support the additional mass, indicating that Stackelberg PPO adapts the topology itself rather than relying solely on controller compensation.}

\begin{figure}[htb]
    \centering
    \includegraphics[width=1.0\linewidth]{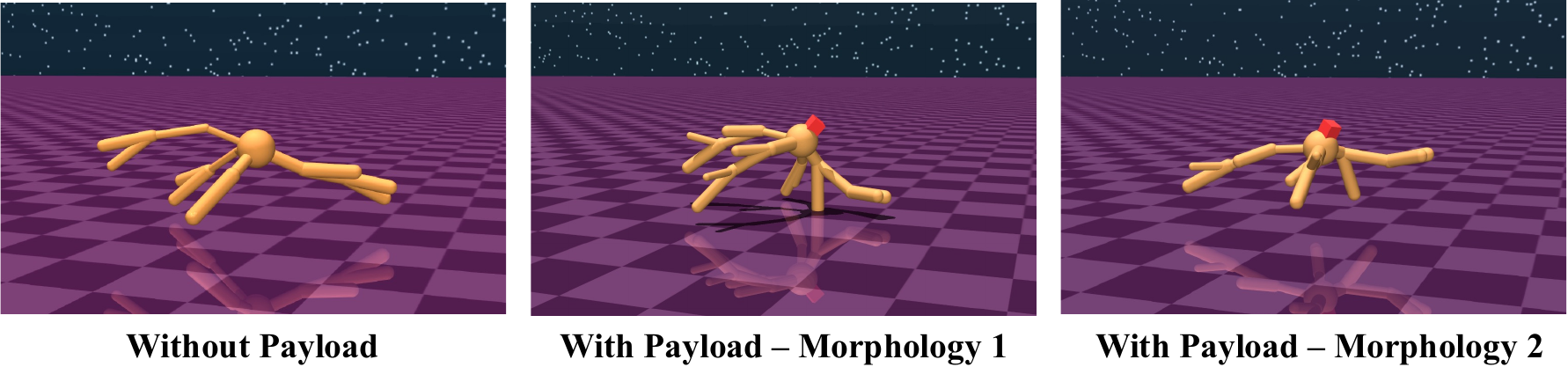}
    \caption{Morphology comparison trained with and without payload.  
    Payload induces more symmetric and load-supporting structures.}
    \label{fig:payload}
\end{figure}
\hspace{0.1cm}
\begin{table}[h]
\centering
\caption{Payload constraint: performance comparison under different payload weights. }
\resizebox{0.63\textwidth}{!}{
\begin{tabular}{lcc}
\toprule
\textbf{Payload Weight} & \textbf{Stackelberg PPO (ours)} & \textbf{BodyGen} \\
\midrule
$0.2$ kg & $\mathbf{8675.08_{\pm 286.42}}$ & $5186.31_{\pm 659.87}$ \\
$0.4$ kg & $\mathbf{6966.34_{\pm 473.43}}$ & $5116.54_{\pm 546.55}$ \\
$0.6$ kg & $\mathbf{7347.46_{\pm 478.01}}$ & $4523.89_{\pm 536.99}$ \\
\bottomrule
\end{tabular}
}
\label{tab:payload}
\end{table}

\newpage
\newn{\textbf{Robustness Evaluation.} We evaluate robustness under two settings: random external forces applied to the root body at every control step, and terrain friction noise created by randomly varying the ground’s friction in each episode. For each disturbance level, all policies are tested across multiple stochastic rollouts, and we report the resulting forward-progress reward.
Tables~\ref{tab:disturb-force} and \ref{tab:friction} summarize the results. Across all disturbance magnitudes, Stackelberg PPO consistently demonstrates substantially higher robustness. For example, when external forces increase from 2~N to 6~N, performance decreases by only 5.91\% for Stackelberg PPO, compared to a much larger 59.57\% decline for BodyGen. A similar pattern holds under terrain friction noise. These improvements arise primarily from more symmetric, mechanically balanced morphologies that better tolerate external forces and friction variability.}

\begin{table}[h]
\centering
\caption{Robustness evaluation: performance under different levels of external disturbance forces.}
\resizebox{0.52\textwidth}{!}{
\begin{tabular}{lcc}
\toprule
\textbf{Level} &
\textbf{Stackelberg PPO (ours)} &
\textbf{BodyGen} \\
\midrule
$2.0$ N &
$\mathbf{11557.31_{\pm 124.68}}$ & $6963.05_{\pm 450.48}$ \\
$4.0$ N &
$\mathbf{11290.13_{\pm 164.54}}$ & $4621.82_{\pm 597.71}$ \\
$6.0$ N &
$\mathbf{10875.23_{\pm 250.97}}$ & $2816.16_{\pm 857.83}$ \\
\bottomrule
\end{tabular}
}
\label{tab:disturb-force}
\end{table}

\begin{table}[h]
\centering
\caption{Robustness evaluation: performance under different levels of terrain friction noise.}
\resizebox{0.52\textwidth}{!}{
\begin{tabular}{lcc}
\toprule
\textbf{Level} &
\textbf{Stackelberg PPO (ours)} &
\textbf{BodyGen} \\
\midrule
$30\%$ &
$\mathbf{11424.66_{\pm 112.08}}$ & $7326.04_{\pm 421.73}$ \\
$50\%$ &
$\mathbf{11333.43_{\pm 141.42}}$ & $6795.55_{\pm 493.31}$ \\
$70\%$ &
$\mathbf{10892.09_{\pm 149.72}}$ & $5062.85_{\pm 579.66}$ \\
\bottomrule
\end{tabular}
}
\label{tab:friction}
\end{table}

\newpage
\subsection{Discussion and Extended Evaluation on Realistic Co-Design Challenges}\label{sec:appendix_extended}

\newn{In this section, we present broader analyses of morphology–control co-design and extend our results along four representative challenge dimensions: (i) diverse co-design environments, (ii) multi-objective and role-specific rewards, (iii) robustness and generalization under unseen disturbances, and (iv) the use of morphology priors. These studies highlight both the empirical advantages of Stackelberg PPO and the conceptual benefits of explicitly decoupling structure design from control learning. Together, they demonstrate that our framework scales naturally to more complex co-design settings that better reflect real-world robotic demands, and they point toward promising directions for building more adaptive and physically grounded morphology–control systems.

\textbf{Diverse co-design environments.}
Standard co-design benchmarks focus almost exclusively on flat-terrain 
locomotion, which poses limited structural or behavioral challenge. To expose a 
broader range of morphology–control interactions, we introduce more demanding 
environments—most notably difficult terrain and manipulation—that require 
non-periodic motions, contact management, and functional differentiation across 
limbs. In the Stepper environments, agents must coordinate structure and control to handle 
large discontinuities without exteroceptive sensing. On low stairs, they develop 
stable stepping and small hops; on high stairs, the difficulty induces long-range, 
high-amplitude jumping behaviors. These emergent solutions reflect the stronger 
morphological and dynamical adaptation required by complex terrain. In the pusher task, co-design must jointly support locomotion and precise force 
application. Learned morphologies exhibit clear role specialization: some limbs 
provide acceleration and stability, while others regulate contact orientation and 
apply controlled pushing forces. Baseline methods typically recover only the 
locomotion component, relying on collision-based propulsion. These environments reveal aspects of the co-design problem that flat 
locomotion cannot capture, and they demonstrate that Stackelberg PPO scales to 
richer settings requiring terrain adaptation, contact reasoning, and multi-role 
morphology design.

\textbf{Multi-objective and role-specific reward design.}
As shown earlier in Appendix~\ref{sec:appendix_constraints}, our framework 
naturally accommodates additional objectives such as power consumption or 
payload capacity. The resulting morphologies and controllers smoothly adapt to 
the trade-offs introduced by these objectives, validating the method's 
multi-objective co-design capability. 
Furthermore, the leader–follower decomposition allows reward terms to be 
assigned selectively to the structure-design or control-learning stages. 
For example, complexity or material-cost penalties can be applied only to the 
leader (structure) updates, enabling constraints on morphology without 
interfering with controller learning. This role-specific reward routing 
provides a high degree of flexibility for real-world design requirements.

\textbf{Robustness and generalization under unseen disturbances}
Although our current setting does not include exteroceptive sensing and is not 
intended for zero-shot transfer to arbitrary unseen worlds, we evaluate 
generalization and robustness under an obstacle-navigation task not seen during 
training. Policies are trained only on flat terrain (Crawler task) and then tested in 
environments containing either sparse or dense grids of square obstacles. As 
reported in Table~\ref{tab:unseen-obstacles}, Stackelberg PPO obtains higher 
forward progress than BodyGen across both difficulty levels. The visualization 
in Figure~\ref{fig:unseen-viz} further shows that the morphologies produced by 
our method maintain more consistent forward motion, whereas baseline agents 
more frequently stall or deviate under unexpected contacts. These results 
illustrate that the co-designed morphology--policy pair exhibits meaningful 
robustness to previously unseen disturbances and obstacle interactions.}

\begin{figure}[htb]
    \centering
    \includegraphics[width=0.95\linewidth]{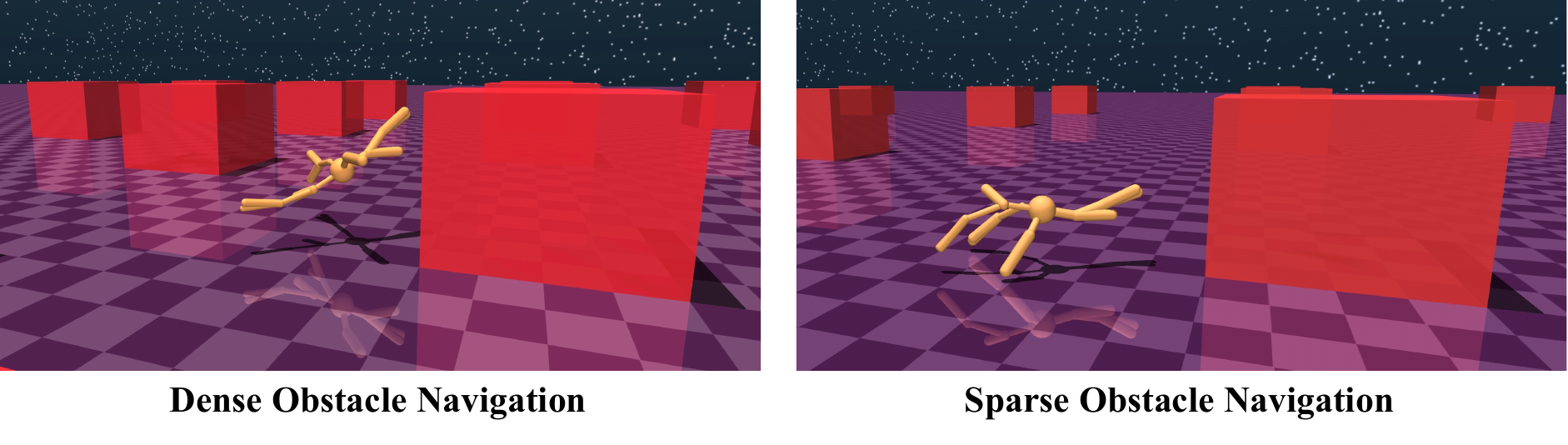}
    \caption{Visualization the unseen obstacle-navigation task environment.}
    \label{fig:unseen-viz}
\end{figure}

\begin{table}[h]
\centering
\caption{Performance in the unseen obstacle-navigation task under two obstacle densities. }
\resizebox{0.80\textwidth}{!}{
\begin{tabular}{lccc}
\toprule
\multirow{2}{*}{\textbf{Obstacle Type}} &
\multirow{2}{*}{\textbf{Spacing}} &
\multicolumn{2}{c}{\textbf{Performance}} \\
\cmidrule(lr){3-4}
& & \textbf{Stackelberg PPO (ours)} & \textbf{BodyGen} \\
\midrule
Sparse Obstacle& $16$ m ($\sim$4× robot width) &
$\mathbf{1790.45_{\pm 161.77}}$ & $1061.55_{\pm 228.40}$ \\
Dense Obstacle& $8$  m ($\sim$2× robot width) &
$\mathbf{1698.52_{\pm 733.02}}$ & $1007.21_{\pm 157.42}$ \\
\bottomrule
\end{tabular}
}
\label{tab:unseen-obstacles}
\end{table}

\newn{\textbf{Incorporating and benefiting from morphology priors.}
Our framework also supports reusing morphology priors obtained from related 
tasks. To examine this, we transfer morphologies evolved in the Crawler 
environment to initialize training in the Pusher task. Table~\ref{tab:morph-prior} 
shows that both Stackelberg PPO and BodyGen benefit from priors in terms of 
final performance and the number of environment steps required to reach a 
threshold reward. Stackelberg PPO consistently obtains higher final reward and 
requires fewer steps under both “with prior’’ and “without prior’’ conditions. 
Figure~\ref{fig:morph-prior-fig} visualizes representative morphologies produced 
under this setup. While priors accelerate training, it is generally advisable to choose priors that encode broadly useful structural patterns—such as stable support geometries or balanced limb arrangements—rather than narrowly specialized solutions. Such general-purpose priors provide a more flexible foundation for downstream adaptation and reduce the risk of over-constraining the design space.}

\begin{figure}[h]
    \centering
    \includegraphics[width=0.95\linewidth]{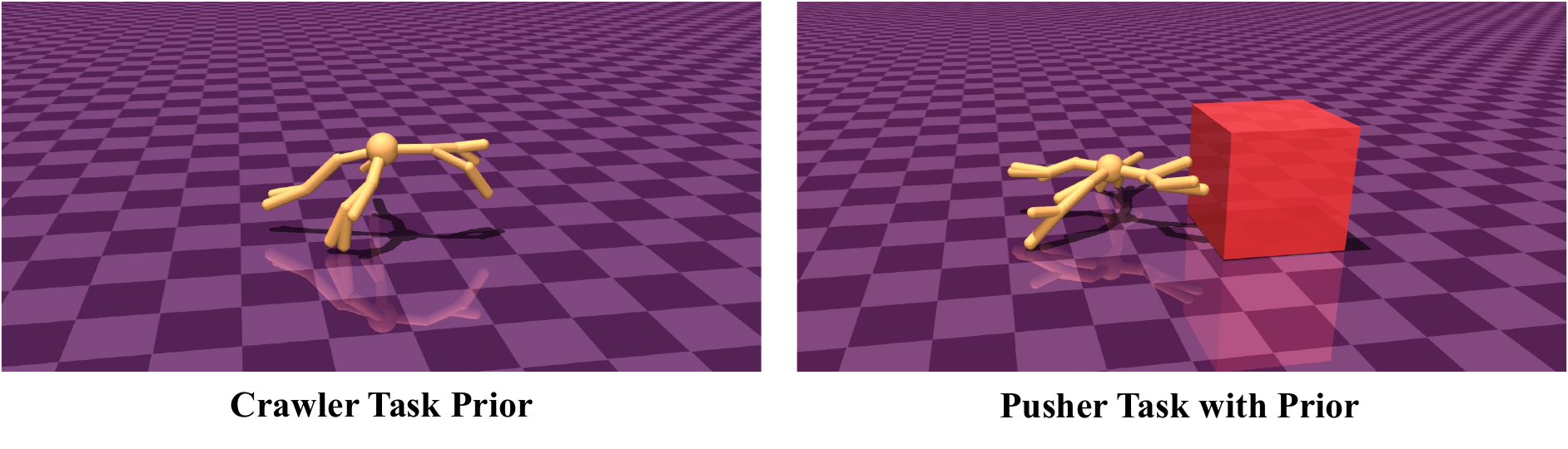}
    \caption{Cross-task reuse of morphology priors: Crawler prior (left) and the resulting Pusher morphology (right).}
    \label{fig:morph-prior-fig}
\end{figure}

\begin{table}[h]
\centering
\caption{Performance and sample efficiency in the Pusher task with and without morphology priors.}
\resizebox{0.90\textwidth}{!}{
\begin{tabular}{lcccc}
\toprule
\multirow{2}{*}{\textbf{Condition}} &
\multicolumn{2}{c}{\textbf{Performance}} &
\multicolumn{2}{c}{\textbf{Steps to Threshold (2500 Reward)}} \\
\cmidrule(lr){2-3} \cmidrule(lr){4-5}
& \textbf{Stackelberg PPO (ours)} & \textbf{BodyGen} &
\textbf{Stackelberg PPO (ours)} & \textbf{BodyGen} \\
\midrule
With Prior &
$\mathbf{4822.59_{\pm 114.32}}$ & $4575.52_{\pm 112.78}$ &
$\mathbf{\sim 8\text{M}}$  & $\sim 9\text{M}$ \\
Without Prior &
$\mathbf{3462.77_{\pm 368.09}}$ & $2779.95_{\pm 509.18}$ &
$\mathbf{\sim 32\text{M}}$ & $\sim 44\text{M}$ \\
\bottomrule
\end{tabular}
}
\label{tab:morph-prior}
\end{table}

\end{document}